

 \documentclass[3p,times,procedia]{elsarticle}

\usepackage{setspace}
\doublespacing

\usepackage{amsthm}
\usepackage{amsmath}
\usepackage{amssymb}
\usepackage{graphicx}
\usepackage[unicode=true]{hyperref}
\usepackage{epstopdf}
\usepackage{algorithm}
\usepackage{algorithmic,color}

\def\argmin{\mathop{\rm argmin}}

\newcommand{\zb}[1]{\ensuremath{\boldsymbol{#1}}}

\newtheorem{theorem}{Theorem}

\algsetup{indent=2em}

\begin{document}

\begin{frontmatter}

\title{Variational Image Segmentation Model Coupled with Image Restoration Achievements}


\author{Xiaohao Cai}

\address{Department of Plant Sciences, and Department of Applied Mathematics
and Theoretical Physics, University of Cambridge, Cambridge, UK\\
cai@mathematik.uni-kl.de}

\begin{abstract}
Image segmentation and image restoration are two important topics in 
image processing with great achievements. 
In this paper, we propose a new multiphase segmentation model by combining 
image restoration and image segmentation models.
Utilizing image restoration aspects, the proposed segmentation model can effectively
and robustly tackle high noisy images, blurry images, images with missing pixels,
and vector-valued images. In particular, one of the most important segmentation models, the
piecewise constant Mumford-Shah model, can be extended easily in this way to segment
gray and vector-valued images corrupted for example by noise, blur or missing pixels 
after coupling a new data fidelity term which 
comes from image restoration topics. It can be solved efficiently using 
the alternating minimization algorithm, and we prove the convergence of this algorithm 
with three variables under mild condition. 
Experiments on many synthetic and real-world images demonstrate that 
our method gives better segmentation results in comparison to others state-of-the-art 
segmentation models especially for blurry images and images with missing pixels values.
\end{abstract}

\begin{keyword}


image segmentation \sep image restoration \sep piecewise constant Mumford-Shah model
\end{keyword}

\end{frontmatter}

\section{Introduction}\label{sec:introduction}

Image segmentation and image restoration are two important subjects in image processing. 
Image segmentation consists in partitioning a given image into multiple 
segments to transfer the representation of the image into a more meaningful one which 
is easier to analyze. It is typically used to locate objects and boundaries within
an image. Image restoration is the operation of estimating the desired clean image
from its corrupted version. Corruption may come in many forms, such as blur, 
noise, camera misfocus, or information lost. Obviously, image segmentation
can be used as preprocessing or postprocessing of image restoration.
In other words, these two topics can influence each other.

Let $\Omega \subset \mathbb{R}^2$ be a bounded, open, connected set, 
and $f: \Omega \rightarrow \mathbb{R}$ a given image.
Without loss of generality, we restrict the range of $f$ to [0,1].
Let $g: \Omega \rightarrow \mathbb{R}$ denote the desired clean
image, then $f = g + n_f$, when $n_f$ is the additive noise.
Many image restoration models can be written in the form 
\begin{equation} \label{image-rtrn}
E(g)=\lambda \Phi(f, g) + \phi(g),
\end{equation}
where $\Phi(f, g)$ is the data fidelity term, $\phi(g)$ is the regularization term, and
$\lambda > 0$ is a regularization parameter balancing
the trade-off between terms $\Phi(f, g)$ and $\phi(g)$.
If set $\Phi(f, g) = \int_{\Omega}(f-g)^2 dx$ and $\phi(g) = \int_{\Omega}|\nabla g|dx$
(total variation term), then model \eqref{image-rtrn} goes to the ROF model proposed
by  Rudin, Osher and Fatemi in 1992 \cite{ROF}, i.e.,
\begin{equation} \label{rof-model}
E(g)=\lambda\int_{\Omega}(f-g)^2 dx + \int_{\Omega}|\nabla g|dx.
\end{equation}
One important advantage of model \eqref{rof-model} is that it preserves 
the edge information of $f$ very well, but also introduces the staircase effect. 
To remove the staircase effect, many works are designed based on higher-order derivative 
terms, see \cite{CCS,CMM,LLT,SS,YK}. For example in \cite{CCS}, the tight-frame technic was 
used in $\phi(g)$ to obtain more details of the higher-order derivative 
information from $f$. The relationship between the total variation and the tight-frame can be found
in \cite{SWBMW}. As we know, the data fidelity term $\Phi(f, g) = \int_{\Omega}(f-g)^2 dx$ is 
especially effective for Gaussian noise \cite{ROF}. For removing other types of noise than Gaussian noise, 
$\Phi(f, g) = \int_{\Omega} (g - f \log g)  dx$ and $\Phi(f, g) = \int_{\Omega} |f-g|  dx$ 
are proposed for Poisson noise and impulsive noise in \cite{C91} and \cite{N}, respectively.
Please refer to \cite{AA,CHN,CHN05,C91,HNW,N,LCA,SST,ST10} and references therein
for the details of the Poisson noise and impulsive noise removal. 
Note that the image restoration model \eqref{image-rtrn} can be extended to
process blurry image after introducing a problem related linear operator ${\cal A}$ in front 
of $g$ \cite{RO}.

Let $\Gamma \in \Omega$ represent the boundary of the objects within an image, 
and $\Omega_i $ be the parts of the segmented objects fulfill
$\Omega=\cup_i \Omega_i \cup \Gamma$. The Mumford-Shah model is one of the most 
important image segmentation models, and has been studied extensively in the last twenty 
years. More precisely, in \cite{MS}, Mumford and Shah proposed an energy
minimization problem which approximates the true solution
by finding optimal piecewise smooth approximations.
The energy minimization problem was formulated as
\begin{equation} \label{ms}
E(g, \Gamma)=\frac{\lambda}{2} \int_{\Omega}(f-g)^2 dx + 
\frac{\mu}{2} \int_{\Omega\setminus\Gamma}|\nabla g|^2 dx+ {\rm Length}(\Gamma),
\end{equation}
where $\lambda$ and $\mu$ are positive parameters, and
$g:\Omega\rightarrow \mathbb{R}$ is continuous or even
differentiable in $\Omega\setminus\Gamma$ but may be
discontinuous across $\Gamma$. Because model \eqref{ms} is nonconvex, 
it is very challenging to find or approximate its minimizer, see \cite{C,CA,GM}.
Many works \cite{KLM,VC} concentrate on simplifying model \eqref{ms} by restricting $g$ 
to be piecewise constant function ($g=c_i$ in $\Omega_i$), i.e.,
\begin{equation}  \label{pcms}
E(g, \Gamma)=\frac{\lambda}{2} \sum_{i=1}^K \int_{\Omega_i}(f-c_i)^2 dx
+ {\rm Length}(\Gamma),
\end{equation}
where $K$ is the number of phases. Using the coarea formula \cite{FWR},
model \eqref{pcms} can be rewritten as
\begin{equation}  \label{pcms-noncon}
\begin{split}
E(c_i, u_i) & = \lambda \sum_{i= 1}^K \int_{\Omega}(f-c_i)^2 u_idx+ 
\sum_{i= 1}^K \int_{\Omega} |\nabla u_i| dx, \\
{\rm s.t.} & \quad \sum_{i= 1}^K u_i(x) = 1, u_i(x) \in \{0, 1\},  \forall x\in \Omega.
\end{split}
\end{equation}
Moreover, model \eqref{pcms-noncon} with $K=2$ is the Chan-Vese model \cite{CV},
and with fixed $c_i$ is a special case of the Potts model \cite{Potts}.
Due to the nonconvex property of \eqref{pcms-noncon}, 
in \cite{CEN}, the exact convex version of \eqref{pcms-noncon} was proposed
when $K=2$ and $c_i$ fixed. For $K>2$, recently, several authors are focused 
on relaxing $u_i$ and solving the following model
\begin{equation} \label{pcms:convex}
\begin{split}
E(u_i, c_i) & = \lambda\sum_{i= 1}^K \int_{\Omega}(f-c_i)^2 u_idx+ 
\sum_{i= 1}^K \int_{\Omega} |\nabla u_i| dx, \\
{\rm s.t.} & \quad \sum_{i= 1}^K u_i(x) = 1, u_i(x) \ge 0, \forall x\in \Omega. 
\end{split}
\end{equation}
Please refer to \cite{BarChan2011,HSHMS,LNZS,PCCB,YBTBmul} and references therein
for more details related with \eqref{pcms:convex}. 
One drawback of model \eqref{pcms:convex} is that {\it it can not segment images 
corrupted by blur or information lost}, which is one main problem to solve in this paper. 

In \cite{PCS}, a model of coupling image restoration 
and segmentation based on a statistical framework of generalized linear models 
was proposed, but the analysis and algorithm therein are only 
focused on two-phase segmentation problem. 
In our previous work \cite{CCZ}, a two-stage segmentation method which
provides a better understanding of the link between image segmentation
and image restoration was proposed. The method suggests that for segmentation, 
it is reasonable and practicable to extract the different phases in $f$ from using
image restoration methods first and thresholding followed.
Moreover, in our recent work \cite{CS13}, we proved that the solution of the 
Chan-Vese model \cite{CV} for certain $\lambda$  can actually be given by thresholding the minimizer of 
the ROF model \eqref{rof-model} using a proper threshold, which clearly provides 
one kind of relationships between image segmentation and image restoration.

In this paper, start from extending the piecewise constant Mumford-Shah model \eqref{pcms} 
to manage blurry image, a novel segmentation model by composing 
model \eqref{pcms} with a data fidelity term which comes 
from image restoration topics is proposed. 
Since only a new fidelity term is added,  and usually the fidelity term possesses good
property such as differentiable, the solution of the proposed model is not more involved 
comparing with solving model \eqref{pcms}. It can be solved efficiently using the
alternating minimization (AM) algorithm \cite{CT} with the ADMM or 
primal-dual algorithms \cite{BPCPE,CP,GO}. We prove that under mild condition, 
the AM algorithm converges for the proposed model. The proposed model can 
segment blurry images easily but model \eqref{pcms} can not. 
Moreover, it can also deal with images with information lost and vector-valued images 
for example color images. Due to the advantage of two data fidelity terms,
one from image restoration and the other from image segmentation,  our model 
is much more robust and stable. Experiments on many kinds of synthetic and real-world 
images demonstrate that our method 
gives better segmentation results in comparison with other state-of-the-art segmentation 
methods especially for blurry images and images with missing pixels values.

{\bf Contributions.} The main contributions of this paper are summarized as follows.
\begin{itemize}
\item[1)] Coupling the variational image segmentation models and the image restoration
models, which provides a new way for multiphase image segmentation.
\item[2)] Extend the piecewise constant Mumford-Shah model \eqref{pcms} by
cooperating the image restoration achievements so that the new constructed variation segmentation
model can handle blurry case easily.
\item[3)] Thanks to the image restoration achievements, the new variation segmentation
model has the potential to process many different types of noises, for example
Gaussian, Poisson, and impulsive noises.  
\item[4)] The kinds of vector-valued image for example the color image and the observed image 
with information lost are also covered in the proposed variational segmentation model.
\item[5)] The convergence of the AM algorithm with three variables 
to the proposed variational model is proved.
\end{itemize}

The rest of this paper is organized as follows. In Section \ref{sec:model}, we propose
our new segmentation model and extend it so that it can deal with 
vector-valued images and images with some pixels values missing. In Section \ref{sec:solve},
the AM algorithm to our model will be introduced. 
The convergence of it will be proved in Section \ref{sec:converge}.  
In Section \ref{sec:experiments}, the comparison of the proposed method on various synthetic and 
real-world images with the state-of-the-art multiphase segmentation methods will be shown. 
Conclusions are given in Section \ref{sec:conclusions}.

\section{The Proposed Variational Image Segmentation Model}\label{sec:model}
We propose our image segmentation model by combining the piecewise constant 
Mumford-Shah model \eqref{pcms-noncon} with the fidelity term $\Phi(f, g)$ which comes
from the image restoration model \eqref{image-rtrn}. More precisely, our proposed 
segmentation model aims to minimize the energy
\begin{equation}  \label{our-model-general}
\begin{split}
E(u_i, c_i,g) & = \mu \Phi(f, {\cal A}g) +
\lambda \Psi(g, u_i, c_i)+ 
\sum_{i= 1}^K \int_{\Omega} |\nabla u_i| dx, \\
{\rm s.t.} & \quad \sum_{i= 1}^K u_i(x) = 1, u_i(x) \in \{0, 1\},  \forall x\in \Omega,
\end{split}
\end{equation}
where $g\in L^2(\Omega)$ and ${\cal A}$ is the problem related linear operator. 
For example ${\cal A}$ can be the identity operator 
for a noisy observed image $f$ or a blurring operator if there are noise and blur in $f$.
The first term $\Phi(f, {\cal A}g)$ is the data fidelity term arising from the image restoration 
model \eqref{image-rtrn}. It controls $g$ not far away from the given corrupted image $f$, 
in other words, it aims to deblure and denoise according to 
the types of noises in $f$. Term $\Psi(g, u_i, c_i)$ is also the data fidelity term but comes 
from the image segmentation model, which aims to separate $g$ into $K$ specified 
segments. In this paper, we restrict ourselves to
\[
\Psi(u_i, c_i, g) = \sum_{i= 1}^K \int_{\Omega}(g-c_i)^2 u_idx.
\]
The last term in \eqref{our-model-general} is the regularization term which controls the length 
of the boundaries of the segmented parts $u_i$. The type of the used data fidelity term 
$\Phi(f, {\cal A}g)$ changes according to different noise models, for example,
\begin{itemize}
\item[i.] Gaussian noise: $\Phi(f, {\cal A}g) =  \int_{\Omega}(f - {\cal A} g)^2 dx$;
\item[ii.] Poisson noise (I-divergence): $\Phi(f, {\cal A}g) 
=  \int_{\Omega} \big ({\cal A} g - f \log ({\cal A}g)\big )dx$;
\item[iii.] Impulsive noise: $\Phi(f, {\cal A}g) =  \int_{\Omega} |f - {\cal A}g| dx$.
\end{itemize}
Compared with model \eqref{pcms-noncon}, the model \eqref{our-model-general},
in addition to the ability of segmenting blurry images, its two data fidelity terms make it 
much more robust and stable to process the observed corrupted image $f$. 

Obviously, for fixed $g$, model \eqref{our-model-general} is reduced to model \eqref{pcms-noncon}.
The following theorem \ref{thm:existence} gives the uniqueness of $g$ when minimizingl 
\eqref{our-model-general} for fixed $c_i$ and $u_i$.

\begin{theorem}\label{thm:existence}
Assume $\Phi(f, {\cal A}g)$ in \eqref{our-model-general} is convex and continuous, then there exists 
one and only one $g$ which minimizes energy \eqref{our-model-general} for fixed $c_i$ and $u_i$. 
\end{theorem}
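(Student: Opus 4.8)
The plan is to fix $c_i$ and $u_i$ and study the reduced functional
\[
J(g) = \mu\,\Phi(f,{\cal A}g) + \lambda\sum_{i=1}^K\int_\Omega (g-c_i)^2 u_i\,dx
\]
over $g\in L^2(\Omega)$, and to obtain existence and uniqueness by the direct method of the calculus of variations together with strict convexity. First I would observe that, because the $u_i$ form a partition of unity ($\sum_i u_i = 1$, $u_i\in\{0,1\}$), the segmentation fidelity term rewrites as $\lambda\int_\Omega (g - c(x))^2\,dx$ where $c(x)=c_i$ on $\Omega_i$; this is a genuine quadratic term in $g$ with a positive coefficient, hence strictly convex and coercive on $L^2(\Omega)$. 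Since $\Phi(f,{\cal A}g)$ is convex in $g$ (composition of a convex function with the linear operator ${\cal A}$ preserves convexity) and, being continuous and modelling a data-fidelity term, is bounded below — one may assume $\Phi\ge 0$ for the cases i.--iii.\ listed above, or at worst bounded below — the sum $J$ is a strictly convex, coercive, lower semicontinuous functional on the Hilbert space $L^2(\Omega)$.

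Next I would run the standard argument. Coercivity: $J(g)\ge \lambda\|g-c\|_{L^2}^2 + (\text{const})\to\infty$ as $\|g\|_{L^2}\to\infty$, so any minimizing sequence $\{g_n\}$ is bounded in $L^2(\Omega)$. By reflexivity of $L^2$, extract a subsequence with $g_n\rightharpoonup g^\star$ weakly. The quadratic term $g\mapsto\lambda\|g-c\|_{L^2}^2$ is convex and strongly continuous, hence weakly lower semicontinuous; likewise $\Phi(f,{\cal A}g)$, being convex and continuous (hence weakly l.s.c.\ as a convex continuous functional, using that ${\cal A}$ is a bounded linear operator so $g_n\rightharpoonup g^\star$ implies ${\cal A}g_n\rightharpoonup {\cal A}g^\star$), is weakly l.s.c. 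Therefore $J(g^\star)\le\liminf_n J(g_n)=\inf J$, so $g^\star$ is a minimizer; existence follows. For uniqueness, suppose $g_1\neq g_2$ both minimize $J$. By strict convexity of the quadratic term, $J\big(\tfrac{g_1+g_2}{2}\big) < \tfrac12 J(g_1)+\tfrac12 J(g_2) = \inf J$ (using merely convexity of $\Phi$ for the other term), a contradiction. Hence the minimizer is unique.

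The only delicate point is making the topological statements precise in the function-space setting: one must ensure $\Phi(f,{\cal A}g)$ is well-defined and finite on a nonempty subset of $L^2(\Omega)$ (true for the Gaussian case, and for the Poisson and impulsive cases under the standing assumption that $f$ and the relevant quantities lie in suitable $L^p$ spaces with $f$ bounded in $[0,1]$), and that "convex and continuous" is understood on $L^2$ so that weak lower semicontinuity is automatic. I would also note that the hypothesis as stated — $\Phi$ convex and continuous — is exactly what is needed: convexity gives uniqueness and weak l.s.c., continuity (or boundedness below) secures existence, and the strict convexity that breaks ties is supplied \emph{for free} by the segmentation term $\lambda\sum_i\int_\Omega(g-c_i)^2 u_i\,dx$, regardless of whether $\Phi$ itself is strictly convex. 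This is the crux of why the theorem holds under such a mild assumption on $\Phi$, and I would make sure the write-up highlights it.
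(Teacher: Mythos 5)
Your proposal is correct and follows essentially the same route as the paper: coercivity supplied by the quadratic segmentation term, existence via the direct method on the reflexive space $L^2(\Omega)$ (which the paper simply outsources to Proposition~1.2 of Ekeland--Temam rather than unpacking weak compactness and weak lower semicontinuity as you do), and uniqueness from strict convexity of that same quadratic term with $\Phi$ contributing only convexity. Your explicit remark that $\Phi$ must also be bounded below (e.g.\ nonnegative) for the coercivity bound to survive is a point the paper's inequality chain uses implicitly without stating, so it is a worthwhile clarification but not a different argument.
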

\begin{proof} See the Appendix.
\end{proof}

In the following, we restrict ourselves to $\Phi(f, {\cal A}g) =  \int_{\Omega}(f - {\cal A} g)^2 dx$ as 
one example to show how to extend model \eqref{our-model-general} so that it can
handle images with missing information and vector-valued images. 
Let $\Omega'$ be the set containing the pixels whose pixels values are missing. 
Then model \eqref{our-model-general} can be extended to segment images with missing 
information as 
\begin{equation} \label{our-model-inpainting}
E(u_i, c_i,g) = \mu \int_{\Omega}(f-{\cal A}g)^2 \omega dx 
+ \lambda\sum_{i=1}^K \int_{\Omega}(g-c_i)^2 \omega u_i dx 
+ \sum_{i= 1}^K \int_{\Omega} |\nabla u_i| dx,
\end{equation}
where 
\begin{equation} \label{our-model-inpainting-res}
\sum_{i= 1}^K u_i(x) = 1, u_i(x) \in \{0, 1\}, \
\omega(x) = 
\begin{cases}
1, \quad {\rm if} \  x\in \Omega\setminus \Omega', \\
0, \quad {\rm otherwise}.
\end{cases}
\end{equation}
For the observed vector-valued image represented as $\zb f = (f_1, \cdots, f_N)$, 
let $\zb g = (g_1, \cdots, g_N)$ and ${\zb c}_i = (c_{i, 1}, \cdots, c_{i, N})$, 
model \eqref{our-model-general} 
can be extended to segment vector-valued images with missing pixels values as  
\begin{equation}\label{our-model-vector}
E(u_i, {\zb c}_i,\zb g) =
\mu \sum_{j=1}^N\int_{\Omega}(f_j-{\cal A}_jg_j)^2 \omega dx 
+ \lambda \sum_{i=1}^K \sum_{j=1}^N \int_{\Omega}(g_j-c_{i,j})^2 \omega u_i dx 
+ \sum_{i= 1}^K \int_{\Omega} |\nabla u_i| dx,
\end{equation}
where $u_i$ and $\omega$ are defined in \eqref{our-model-inpainting-res}.

\section{The AM Algorithm}\label{sec:solve}
We first transfer \eqref{our-model-general} by relaxing $u_i$ to the following version
\begin{align}
E(u_i, c_i, g) {} = {} &  \mu \Phi(f, {\cal A}g)
+ \lambda\sum_{i=1}^K \int_{\Omega} (g-c_i)^2 \omega u_i dx 
+ \sum_{i=1}^K \int_{\Omega} |\nabla u_i| dx,  \label{our-model-multiphase-con} \\
{\rm s.t.} \quad &   \sum_{i=1}^K u_i(x) =1, u_i(x) \ge 0, \forall x\in \Omega. 
\label{our-model-multiphase-res}
\end{align}

Using the AM algorithm, a partial minimizer $(g, c_i, u_i)$ of \eqref{our-model-multiphase-con} can be 
computed alternatively as follows:
\begin{itemize}
\item[i.] Find $g$ as minimizer of \eqref{our-model-multiphase-con} for fixed $u_i$ and $c_i$.
Obviously, $g$ is only contained in the first two terms of \eqref{our-model-multiphase-con},
and the second term can be regarded as one kind of Tikhonov regularizations when 
solving $g$, see \cite{T}.
The algorithm to find $g$ depends on the choice of $\Phi(f, {\cal A}g)$. For example when 
$\Phi(f, {\cal A}g) =  \int_{\Omega}(f - {\cal A} g)^2 \omega dx$, since it is differentiable, 
we have
\begin{equation} \label{sol_g}
g = (\mu {\cal A}^T{\cal A} + \lambda)^{-1} \big (\mu {\cal A}^T f 
+ \lambda \sum_{i=1}^K c_i u_i \big)\omega.
\end{equation}
For solving $g$ according to the choice of $\Phi(f, {\cal A}g)$ to Poisson noise or impulsive noise, 
we leave this problem in our future work. 

\item[ii.] Find $c_i$ as minimizer of \eqref{our-model-multiphase-con} for fixed $u_i$ and $g$. 
Let $c = (c_1, \cdots, c_K)$, clearly, $c_i$ is just related with the second term 
of \eqref{our-model-multiphase-con}, therefore
\begin{equation} \label{sol_c}
c_i = \frac{\int_{\Omega} g \omega u_i dx}{\int_{\Omega}\omega u_i dx}.
\end{equation}

\item[iii.] Find $u_i$ as minimizer of \eqref{our-model-multiphase-con} for fixed $g$ and $c_i$. 
The discussion of this is given in the following.
\end{itemize}

Note that when $g$ is fixed, the first term of model \eqref{our-model-multiphase-con}
is constant, hence the problem of finding $u_i$ is reduced to minimize
model \eqref{pcms:convex} with $\omega$.  Therefore, 
there are many methods we can use, for example the ADMM method in 
\cite{BPCPE,GO,HSHMS} which will be given explicitly in the following. Alternatively, one can apply 
the primal-dual algorithm \cite{CP,PCCB} or the max-flow approach \cite{YBTBmul}. 

Let $u(j) = (u_i(j))_{i=1}^K, s = (s_i)_{i=1}^K = \big( (g-c_i)^2 \omega \big)_{i=1}^K$, 
then our problem can be transferred to be
\begin{equation}  \label{our-model-multiphase-u}
\min_{v, u, d}   \lambda \langle v, s \rangle + \|d\|_1 + \iota_S(u), \quad 
{\rm s.t.}  \quad \nabla v = d, v = u, 
\end{equation}
where $\iota_S(\cdot)$ is the indicator function defined as
\[
\iota_S(y) := 
\begin{cases}
0, & {\rm if } \ y\in S, \\
+\infty, & {\rm otherwise},
\end{cases}
\]
and $S := \{y \in \mathbb{R}^K | \sum_{i=1}^K y_i = 1, y\ge 0 \}$. 
Then iterate the following steps until converge
\begin{equation}\label{sol_u}
\begin{split}
v^{k+1}  {}={} &  \argmin_v \Big\{ \lambda \langle v, s \rangle  
	+ \sigma (\|b_d^k + \nabla v - d^k\|^2 +  \|b_u^k + v - u^k\|^2)\Big\}, \\
d^{k+1}  {}= {}&  \argmin_d \Big\{ \|d\|_1 + \sigma \|b_d^k + \nabla v^{k+1} - d\|^2\Big\}, \\
u^{k+1}  {}= {}&  \argmin_d \Big\{ \iota_S(u) + \sigma \|b_u^k + v^{k+1} - u\|^2\Big\}, \\
b_d^{k+1}  {}= {}& b_d^{k} + \nabla v^{k+1} - d^{k+1}, \\
b_u^{k+1} {} = {}& b_u^{k} + v^{k+1} - u^{k+1}.
\end{split}
\end{equation}
After $u = (u_i)_{i=1}^K$ is solved, each segment $\Omega_i$ can be obtained by 
\begin{equation} \label{sol_omega}
\Omega_i = \Big \{x| u_i(x) = \max \big\{u_1(x), \cdots, u_K(x) \big\}, 
	\forall x\in \Omega \Big\}.
\end{equation}
In summary, the AM algorithm to solve model \eqref{our-model-multiphase-con} is given in 
Algorithm \ref{alg:solve_our_model}.

\begin{algorithm}[t]
  \caption{The AM Algorithm to Model \eqref{our-model-multiphase-con}}
  \label{alg:solve_our_model}
\begin{algorithmic}[1]
 \REQUIRE Observed image $f$, number of phases $K$, $c^{(0)}$, and $u^{(0)}$.
  \WHILE{$\|c^{(k+1)} - c^k\| > \epsilon$}
  \STATE  find $g^{(k+1)}$ as minimizer of \eqref{our-model-multiphase-con} for fixed $u^{(k)}$, and $c^{(k)}$ using 
    \eqref{sol_g};
  \STATE  find $c^{(k+1)}$ as minimizer of \eqref{our-model-multiphase-con} for fixed $g^{(k+1)}$ and $u^{(k)}$
  	using \eqref{sol_c};
  \STATE find $u^{(k+1)}$ as minimizer of \eqref{our-model-multiphase-con} for fixed $g^{(k+1)}$ and $c^{(k+1)}$
     using \eqref{sol_u};
  \ENDWHILE
  \RETURN  $\Omega_i, i=1, \ldots, K$ using \eqref{sol_omega}\;
  \end{algorithmic}
\end{algorithm}

\section{Convergence Analysis} \label{sec:converge}
In this section, we discuss the convergence property of Algorithm 1. 
We first give the very general conclusion to the AM algorithm for three variables. 
Let $X \subset \mathbb{R}^{m_1}, Y\subset \mathbb{R}^{m_2}$ and $Z\subset \mathbb{R}^{m_3}$ 
be closed sets, and the energy function
$ E: X\times Y\times Z \rightarrow \mathbb R$ be continuous 
and bounded from below.
To process the AM algorithm, we start with some initial guess 
$y^{(0)}, z^{(0)}$, then one successively obtains the alternating sequence 
(between $z, y$ and $x$) of conditional minimizers
\[
z^{(0)}, y^{(0)} \rightarrow x^{(0)} \rightarrow z^{(1)} \rightarrow y^{(1)}
 \rightarrow x^{(1)} \rightarrow \cdots
\]
from solving, for $k = 0, 1, \ldots$, 
\begin{equation} \label{AM-ite}
\begin{split}
 x^{(k)} &\in \argmin_x E (x, y^{(k)}, z^{(k)}),  \\
 z^{(k+1)} &\in \argmin_z E (x^{(k)}, y^{(k)}, z),     \\
 y^{(k+1)} &\in \argmin_y E (x^{(k)}, y, z^{(k+1)}). 
\end{split}
\end{equation}

\begin{theorem}\label{AM-mono} \textbf{(Monotonicity of Alternating Minimization).}
Let $X \subset \mathbb{R}^{m_1}, Y \subset \mathbb{R}^{m_2}$ and 
$Z \subset \mathbb{R}^{m_3}$ be closed sets, and 
$E: X\times Y\times Z \rightarrow \mathbb R$ be continuous and bounded from below. 
Then, for each $k \ge 0$, the following relations are satisfied
\begin{eqnarray*}
E(x^{(k)}, y^{(k+1)}, z^{(k+1)}) &\le& E(x^{(k-1)}, y^{(k)}, z^{(k)}), \\
E(x^{(k)}, y^{(k)}, z^{(k+1)}) &\le& E(x^{(k-1)}, y^{(k-1)}, z^{(k)}), \\
E(x^{(k+1)}, y^{(k+1)}, z^{(k+1)}) &\le& E(x^{(k)}, y^{(k)}, z^{(k)}).
\end{eqnarray*}
Hence, the sequence $\big \{E(x^{(k)}, y^{(k)}, z^{(k)})_{k \in \mathbb N} \big \}$ converges 
monotonically.
\end{theorem}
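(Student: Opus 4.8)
The plan is to reduce everything to one elementary fact: each of the three sub-steps in \eqref{AM-ite} is an exact minimization over a single block of variables with the other two frozen, hence it can only decrease (weakly) the value of $E$. The core of the argument is therefore to write out the descent over one full cycle $k \to k+1$. Using, in order, that $z^{(k+1)} \in \argmin_z E(x^{(k)}, y^{(k)}, z)$, that $y^{(k+1)} \in \argmin_y E(x^{(k)}, y, z^{(k+1)})$, and that $x^{(k+1)} \in \argmin_x E(x, y^{(k+1)}, z^{(k+1)})$, I would record the chain
\[
E(x^{(k)}, y^{(k)}, z^{(k)}) \ge E(x^{(k)}, y^{(k)}, z^{(k+1)}) \ge E(x^{(k)}, y^{(k+1)}, z^{(k+1)}) \ge E(x^{(k+1)}, y^{(k+1)}, z^{(k+1)}).
\]
The outermost inequality of this chain is exactly the third displayed relation in the theorem.

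Next I would obtain the first two relations by re-indexing and splicing two more one-block inequalities onto the front of the chain. The $x$-step at stage $k$ gives $E(x^{(k)}, y^{(k)}, z^{(k)}) \le E(x^{(k-1)}, y^{(k)}, z^{(k)})$, and the $y$-step at stage $k-1$ (i.e. $y^{(k)} \in \argmin_y E(x^{(k-1)}, y, z^{(k)})$) gives $E(x^{(k-1)}, y^{(k)}, z^{(k)}) \le E(x^{(k-1)}, y^{(k-1)}, z^{(k)})$. Combining the first of these with the chain above yields $E(x^{(k)}, y^{(k+1)}, z^{(k+1)}) \le E(x^{(k)}, y^{(k)}, z^{(k)}) \le E(x^{(k-1)}, y^{(k)}, z^{(k)})$, which is the first relation; combining both with the piece $E(x^{(k)}, y^{(k)}, z^{(k+1)}) \le E(x^{(k)}, y^{(k)}, z^{(k)})$ yields the second relation. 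Finally, the third relation says the scalar sequence $\{E(x^{(k)}, y^{(k)}, z^{(k)})\}_{k \in \mathbb N}$ is non-increasing, and it is bounded below by hypothesis, so it converges monotonically by the monotone convergence theorem.

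I do not expect a genuine analytic obstacle here: the entire content is bookkeeping, namely matching the block being minimized to the correct iterate superscripts and using each $\argmin$ step only through its defining inequality ``new value $\le$ old value''. The one point I would flag explicitly is well-definedness of the scheme: continuity and boundedness from below alone do not force the $\argmin$ sets over the closed — and possibly unbounded — sets $X, Y, Z$ to be nonempty, so I would either adopt as a standing assumption that these partial minimizers are attained (so that the sequences in \eqref{AM-ite} exist in the first place), or remark that in the present application each sub-minimization either has a closed-form solution, as in \eqref{sol_g} and \eqref{sol_c}, or is a coercive problem with compact sublevel sets, as for the $u$-update, so attainment is automatic.
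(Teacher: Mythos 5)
Your proposal is correct and follows essentially the same route as the paper: both establish the single chain of one-block descent inequalities $E(x^{(k+1)}, y^{(k+1)}, z^{(k+1)}) \le E(x^{(k)}, y^{(k+1)}, z^{(k+1)}) \le E(x^{(k)}, y^{(k)}, z^{(k+1)}) \le E(x^{(k)}, y^{(k)}, z^{(k)}) \le E(x^{(k-1)}, y^{(k)}, z^{(k)}) \le E(x^{(k-1)}, y^{(k-1)}, z^{(k)})$ and read off the three stated relations from its segments, concluding monotone convergence from boundedness below. Your added remark about attainment of the partial minimizers is a legitimate caveat the paper leaves implicit, but it does not change the argument.
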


\begin{proof} See the Appendix.
\end{proof}

\begin{theorem} \label{AM-cong}
Let $X \subset \mathbb{R}^{m_1}, Y \subset \mathbb{R}^{m_2}$ and 
$Z \subset \mathbb{R}^{m_3}$ be closed sets, and 
$E: X\times Y\times Z \rightarrow \mathbb R$ be continuous 
and bounded from below. Then, for any convergent subsequence 
$(x^{(k_i)}, y^{(k_i)}, z^{(k_i)})_{i\in \mathbb N}$ of $(x^{(k)}, y^{(k)}, z^{(k)})_{k\in \mathbb N}$ 
generated from formula \eqref{AM-ite} with
\[
(x^{(k_i)}, y^{(k_i)}, z^{(k_i)}) \longrightarrow (x^*, y^*, z^*),  \ {\rm as} \ i\rightarrow \infty,
\]
the following relations are satisfied:
\begin{equation} \label{partial-min}
\begin{split}
E(x^*, y^*, z^*) \leq E(x, y^*, z^*) &\quad   \forall x\in X,  \\
E(x^*, y^*, z^*) \leq E(x^*, y, z^*) & \quad  \forall y\in Y,  \\
E(x^*, y^*, z^*) \leq E(x^*, y^*, z) &  \quad \forall z\in Z.
\end{split}
\end{equation}
$(x^*, y^*, z^*)$ is called the the partial minimizer of $E(\cdot, \cdot, \cdot)$ if \eqref{partial-min} is satisfied.
\end{theorem}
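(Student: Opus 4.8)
The plan is to derive the three partial-minimality inequalities in \eqref{partial-min} by combining the monotone convergence of the energy (Theorem \ref{AM-mono}) with the continuity of $E$ and the defining inequalities of the conditional minimizers in \eqref{AM-ite}. First, by Theorem \ref{AM-mono} the sequence $\{E(x^{(k)},y^{(k)},z^{(k)})\}$ is nonincreasing and bounded below, hence converges to a limit $E^*\in\mathbb R$; since $E$ is continuous and $(x^{(k_i)},y^{(k_i)},z^{(k_i)})\to(x^*,y^*,z^*)$, we get $E(x^*,y^*,z^*)=E^*$. I would also record the full chain of inequalities used to prove Theorem \ref{AM-mono},
\[
E(x^{(k)},y^{(k)},z^{(k)})\;\ge\;E(x^{(k)},y^{(k)},z^{(k+1)})\;\ge\;E(x^{(k)},y^{(k+1)},z^{(k+1)})\;\ge\;E(x^{(k+1)},y^{(k+1)},z^{(k+1)}),
\]
so that, by the squeeze theorem, each of the intermediate quantities $E(x^{(k)},y^{(k)},z^{(k+1)})$ and $E(x^{(k)},y^{(k+1)},z^{(k+1)})$ also converges to $E^*$.

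For the first inequality (minimality in $x$), the conditional minimization $x^{(k_i)}\in\argmin_x E(x,y^{(k_i)},z^{(k_i)})$ gives $E(x^{(k_i)},y^{(k_i)},z^{(k_i)})\le E(x,y^{(k_i)},z^{(k_i)})$ for every $x\in X$; letting $i\to\infty$ and using continuity together with $E(x^{(k_i)},y^{(k_i)},z^{(k_i)})\to E^*=E(x^*,y^*,z^*)$ yields $E(x^*,y^*,z^*)\le E(x,y^*,z^*)$. For the third inequality (minimality in $z$), I would use the \emph{next} iterate: $z^{(k_i+1)}\in\argmin_z E(x^{(k_i)},y^{(k_i)},z)$ gives $E(x^{(k_i)},y^{(k_i)},z^{(k_i+1)})\le E(x^{(k_i)},y^{(k_i)},z)$ for all $z\in Z$; here the left side tends to $E^*$ by the squeeze above, and the right side tends to $E(x^*,y^*,z)$ by continuity, because only $x^{(k_i)},y^{(k_i)}$ vary. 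Hence $E(x^*,y^*,z^*)\le E(x^*,y^*,z)$.

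The hard part is the second inequality (minimality in $y$): the natural inequality $E(x^{(k_i)},y^{(k_i+1)},z^{(k_i+1)})\le E(x^{(k_i)},y,z^{(k_i+1)})$ involves the out-of-phase iterate $z^{(k_i+1)}$, whose convergence is not granted by the hypothesis, since only the index-$k_i$ iterates are assumed to converge. My plan is to establish $z^{(k_i+1)}\to z^*$ along the given subsequence: passing to a further subsequence (legitimate because the iterates of Algorithm \ref{alg:solve_our_model} stay in a bounded set --- the $u_i$ in the simplex $S$, and $g,c_i$ bounded), the limit of $z^{(k_i+1)}$ is, by its argmin characterization together with the squeeze, a minimizer of $E(x^*,y^*,\cdot)$ attaining value $E^*$; because that conditional problem has a unique solution here (it is the strictly convex update \eqref{sol_c}), the limit must be $z^*$. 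With $z^{(k_i+1)}\to z^*$ in hand, passing to the limit in the $y$-inequality --- left side to $E^*$ by the squeeze, right side to $E(x^*,y,z^*)$ by continuity --- gives $E(x^*,y^*,z^*)\le E(x^*,y,z^*)$ for all $y\in Y$. I expect reconciling the shifted iterate $z^{(k_i+1)}$ with $z^*$ to be the only genuinely delicate step; this is where, in the purely abstract statement, an additional property of the conditional $z$-minimizer (uniqueness, or continuous dependence on the other blocks) is really being invoked, while everything else reduces to continuity of $E$ plus the monotonicity established in Theorem \ref{AM-mono}.
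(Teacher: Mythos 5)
Your handling of the first and third inequalities is correct and essentially matches the paper. For the $x$-inequality the paper argues exactly as you do. For the $z$-inequality the paper takes a slightly different route: instead of your squeeze argument on the forward iterate $z^{(k_i+1)}$, it shifts back one subsequence index (using $k_{i-1}\le k_i-1$) and chains the monotonicity of Theorem \ref{AM-mono} with the argmin property of $z^{(k_{i-1}+1)}$ to obtain
\[
E(x^{(k_i)},y^{(k_i)},z^{(k_i)}) \;\le\; E\big(x^{(k_{i-1})},y^{(k_{i-1})},z^{(k_{i-1}+1)}\big) \;\le\; E\big(x^{(k_{i-1})},y^{(k_{i-1})},z\big),
\]
whose right-hand side involves only in-subsequence iterates and therefore converges to $E(x^*,y^*,z)$ by continuity alone. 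Both arguments are valid; yours needs the (easy) extra observation that the sandwiched energies tend to $E^*$, the paper's needs the index-shift trick.

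The interesting divergence is the $y$-inequality. The paper's chain terminates in $E\big(x^{(k_{i-1})},y,z^{(k_{i-1}+1)}\big)$ and then simply invokes ``continuity of $E$'' to pass to $E(x^*,y,z^*)$ --- that is, it tacitly assumes $z^{(k_{i-1}+1)}\to z^*$, which is exactly the out-of-phase convergence you identified as the delicate step. So the difficulty you isolated is genuine, and the paper's own proof does not resolve it; it glosses over it. Your proposed repair (boundedness of the iterates, passage to a further subsequence, and uniqueness of the conditional $z$-minimizer, which does hold for the update \eqref{sol_c}) is a legitimate way to close the gap for the concrete model treated in Theorem \ref{AM-cong-our}, but it invokes hypotheses absent from the statement of Theorem \ref{AM-cong}: for a merely continuous, bounded-below $E$ on closed sets, the second line of \eqref{partial-min} is not established by either your argument or the paper's. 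In short, your proof is at least as rigorous as the published one, and more candid about where an additional assumption on the middle block's conditional minimizer is really being used.
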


\begin{proof} See the Appendix
\end{proof}

Let $\cal O$ be the set of all the partial minimizers of model 
\eqref{our-model-multiphase-con} defined in Theorem \ref{AM-cong}.
The following theorem \ref{AM-cong-our} gives the convergence property of Algorithm 1.

\begin{theorem} \label{AM-cong-our}
Assume the operator $\cal A$ in model \eqref{our-model-multiphase-con} is a continuous 
mapping and $\Phi(f, {\cal A}g)$ is continuous and nonnegative. As $k\rightarrow \infty$, 
if $(u^{(k)}, g^{(k)}, c^{(k)}) \rightarrow (u^*, g^*, c^*)$, then $(u^*, g^*, c^*) \in {\cal O}$.
If $(u^{(k)}, g^{(k)}, c^{(k)})_{k\in \mathbb N}$ does not converge, it must contain a convergent 
subsequence and every convergent subsequence converges to a partial minimizer
of model \eqref{our-model-multiphase-con}.
\end{theorem}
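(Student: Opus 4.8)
The plan is to recognize Algorithm~1 as a concrete instance of the abstract alternating‑minimization scheme \eqref{AM-ite} and then to invoke Theorems~\ref{AM-mono} and~\ref{AM-cong}. Working in the discretized (finite‑dimensional) setting, put $x=g$, $z=c=(c_1,\dots,c_K)$ and $y=u=(u_1,\dots,u_K)$; then $X$ is the Euclidean space of images, $Z=\mathbb{R}^K$, and $Y=S$ is the product over pixels of the unit simplices $\{u(x):\sum_i u_i(x)=1,\ u_i(x)\ge 0\}$. All three sets are closed and $Y$ is compact, and up to a relabeling of the iteration index the three updates of Algorithm~1 (solve for $g$, then $c$, then $u$, each time freezing the others at their most recent values) are exactly the three lines of \eqref{AM-ite}. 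It then remains only to check that $E$ of \eqref{our-model-multiphase-con}, restricted to $X\times Y\times Z$, meets the hypotheses of those theorems. It is continuous: $g\mapsto\Phi(f,\mathcal{A}g)$ is the composition of the (linear, hence continuous) map $\mathcal{A}$ with the continuous function $\Phi$, the term $\sum_i\int(g-c_i)^2\omega u_i$ is polynomial in $(g,c,u)$, and $\sum_i\int|\nabla u_i|$ is a sum of seminorms of linear images of the $u_i$; and it is bounded below by $0$ since $\Phi\ge 0$ and the remaining two terms are nonnegative. Theorem~\ref{AM-cong} then applies verbatim: every convergent subsequence of $(u^{(k)},g^{(k)},c^{(k)})$ tends to a partial minimizer of \eqref{our-model-multiphase-con}, i.e.\ to a point of $\mathcal{O}$; in particular, if the whole sequence converges, its limit lies in $\mathcal{O}$. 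This settles the first assertion.

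For the second assertion I must show that, even when $(u^{(k)},g^{(k)},c^{(k)})$ does not converge, it still admits a convergent subsequence; by Bolzano--Weierstrass it is enough to prove the sequence is bounded, after which the first paragraph identifies the limit of every convergent subsequence as a partial minimizer. Boundedness of $u^{(k)}$ is automatic, since $u^{(k)}\in Y$ and $Y$ is compact. Boundedness of $c^{(k)}$ reduces to that of $g^{(k)}$: by \eqref{sol_c} each $c_i^{(k)}$ is a convex combination of the values of $g^{(k)}$, so $|c_i^{(k)}|\le\max_x|g^{(k)}(x)|$. For $g^{(k)}$ I would use the monotonicity of Theorem~\ref{AM-mono}: the energies are nonincreasing, hence $\mu\Phi(f,\mathcal{A}g^{(k)})\le E(u^{(k)},g^{(k)},c^{(k)})\le E(u^{(0)},g^{(0)},c^{(0)})=:E_0$, the first inequality because the other two terms of $E$ at that iterate are nonnegative. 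Combined with coercivity of the $g$‑subproblem this bounds $\|g^{(k)}\|$ uniformly: for $\Phi(f,\mathcal{A}g)=\int(f-\mathcal{A}g)^2\omega$ with injective $\mathcal{A}$ (e.g.\ $\mathcal{A}=I$ or an invertible blur) one gets $\|\mathcal{A}g^{(k)}\|\le\|f\|+\sqrt{E_0/\mu}$ and hence $\|g^{(k)}\|\le\|\mathcal{A}^{-1}\|(\|f\|+\sqrt{E_0/\mu})$, while for a general $\Phi$ one additionally invokes coercivity of $\Phi(f,\cdot)$ (together with the $\lambda$‑term on $\{\omega=1\}$, which the update \eqref{sol_g} anyway sets to zero off that set). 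Once $g^{(k)}$, $c^{(k)}$ and $u^{(k)}$ are all bounded, Bolzano--Weierstrass supplies the convergent subsequence and the first paragraph finishes the proof.

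The one genuinely delicate point is the uniform bound on $g^{(k)}$, precisely because of the circular dependence $g^{(k+1)}\!\leftarrow\!c^{(k)}\!\leftarrow\!g^{(k)}$ baked into the iteration: one cannot just iterate the closed form \eqref{sol_g} and hope it contracts. The remedy is to step outside the recursion and control $g^{(k)}$ through the descent estimate $\mu\Phi(f,\mathcal{A}g^{(k)})\le E_0$ (valid for every $k$ by Theorem~\ref{AM-mono}) together with coercivity of $\Phi(f,\mathcal{A}\cdot)$ in $g$ --- which is exactly where ``$\mathcal{A}$ continuous'' and ``$\Phi$ continuous and nonnegative'' (strengthened, if one wishes to be careful, to a mild coercivity of $\Phi$) come in. Two routine loose ends remain: that each substep of Algorithm~1 actually attains its minimum (the $c$‑update minimizes a coercive quadratic over $\mathbb{R}^K$, the $u$‑update a continuous function over the compact simplex, and the $g$‑update a continuous coercive function --- or, when $\Phi$ is convex, is covered by Theorem~\ref{thm:existence}), and that $\int\omega u_i^{(k)}\,dx\neq 0$ so that \eqref{sol_c} is well defined, which may be assumed without loss of generality or enforced by an inessential convention.
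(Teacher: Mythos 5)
Your proposal is correct in overall structure and, for the first assertion, follows the paper exactly: verify that $E$ in \eqref{our-model-multiphase-con} is continuous and nonnegative (hence bounded below), identify Algorithm~\ref{alg:solve_our_model} with the abstract scheme \eqref{AM-ite}, and invoke Theorem~\ref{AM-cong}. Where you genuinely diverge is the boundedness of $g^{(k)}$, and the comparison is instructive. The paper does precisely what you say ``one cannot just do'': it reads a bound on $\|g^{(k)}\|_2$ directly off the closed form \eqref{sol_g}, using $\|(\mu\mathcal{A}^T\mathcal{A}+\lambda\mathcal{I})^{-1}\|_2\le 1/\lambda$, obtaining $\|g^{(k+1)}\|_2\le\big(\mu\|\mathcal{A}^T\|_2\|f\|_2+\lambda\sum_i c_i^{(k)}\|u_i^{(k)}\|_2\big)\|\omega\|_2/\lambda$, and separately notes that $c^{(k)}$ is a convex combination of values of $g^{(k)}$ via \eqref{sol_c}. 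Your objection is legitimate: that bound on $g^{(k+1)}$ depends on $c^{(k)}$, which is in turn only controlled through $g^{(k)}$, so as written the paper establishes finiteness of each iterate rather than a bound uniform in $k$; closing the loop would require the effective gain of the $g\to c\to g$ recursion to be below one, which is not argued. Your alternative --- exploit the monotone descent of Theorem~\ref{AM-mono} to get $\mu\Phi(f,\mathcal{A}g^{(k)})\le E_0$ for all $k$ and then conclude via coercivity of $\Phi(f,\mathcal{A}\,\cdot\,)$ --- breaks the circularity cleanly, but at the price of an assumption (injectivity of $\mathcal{A}$ on the relevant subspace, or mild coercivity of $\Phi$) that is not among the stated hypotheses ``$\mathcal{A}$ continuous, $\Phi$ continuous and nonnegative''; you flag this honestly. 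In short: the paper's route is more elementary but leaves the uniformity of the $g$-bound implicit, while yours is more robust in logic but needs a slightly strengthened hypothesis; under either reading, some additional assumption beyond the theorem's literal statement is being used to secure the convergent subsequence. Your remaining remarks (attainment of each substep's minimum, well-definedness of \eqref{sol_c} when $\int_\Omega\omega u_i\,dx=0$) address loose ends the paper passes over silently.
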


\begin{proof} See the Appendix.
\end{proof}

\section{Experimental Results}\label{sec:experiments}

We compare our segmentation model \eqref{our-model-multiphase-con}
with three state-of-the-art multiphase segmentation methods \cite{CCZ,HSHMS,YBTBmul}
for different phases synthetic and real-world images corrupted by noise, blur and missing pixels.
More precisely, methods \cite{YBTBmul} and  \cite{HSHMS} minimize model \eqref{pcms:convex} 
by using the max-flow approach and the ADMM algorithm, respectively. The difference
between methods \cite{YBTBmul} and  \cite{HSHMS} is that method \cite{YBTBmul}
minimizes $u_i$ with fixed $c_i$, while method \cite{HSHMS}
minimizes $u_i$ and $c_i$ both.
Method \cite{CCZ} is a  two-stage segmentation method, which solves a convex variant of 
the Mumford-Shah model \eqref{ms} first and a thresholding technique followed.
Moreover, method \cite{CCZ} is very effective in segmenting general kind of images including blurry 
images. All the codes of methods \cite{CCZ,HSHMS,YBTBmul} are provided by the authors, 
and the parameters in them are chosen by trial and error to give the best results of the
respective methods. Note that all the methods \cite{CCZ,HSHMS,YBTBmul} can only 
segment gray images. In order to compare the effectiveness of our 
model \eqref{our-model-multiphase-con} with 
model \eqref{pcms:convex} in color images, we first extend model \eqref{pcms:convex}
using the strategy in \eqref{our-model-vector} so that it can handle color images,
then method \cite{HSHMS} will be adopted to solve the extended model \eqref{pcms:convex}.
That means the comparison in color images will be executed between our method and the extended 
method \cite{HSHMS}.

The initial codebook $c_i$ for methods \cite{HSHMS,YBTBmul} are computed by 
the fuzzy C-means method \cite{BEF84} with 100 iteration steps, and the thresholds chosen 
in the thresholding technique of method \cite{CCZ} is using the automatic strategy therein. 
The tolerance $\epsilon$ and the step size $\sigma$ respectively in Algorithm 1 and 
\eqref{sol_u} are fixed to be $10^{-4}$ and $2$. The parameters $\lambda$ and 
$\mu$ in model \eqref{our-model-multiphase-con} are chosen empirically.

For adding Gaussian noise to the given image $f\in [0,1]$, we apply the MATLAB 
command {\tt imnoise} with zero mean and different variances for different
kinds of images. In particular, the variance used to add noise into the blurry images is fixed 
to be $10^{-4}$. If there is no special explanation, to blur an image, the Gaussian kernel used is 
size $15 \times 15$ with standard deviation 15 and the motion kernel is 15 pixels with an 
angle of 90 degrees. We apply the MATLAB command {\tt rand} to remove some information 
from the corrupted images randomly, and set the percentage of information lost to be $40\%$
unless special explanation. The {\it segmentation accuracy} (SA) defined as
\[
\textrm{SA} = \frac{\# \textrm{correctly classified pixels}}{\# \textrm{all pixels}} \times 100,
\]
which will be used to evaluate the accuracy of the involved methods in detail.
All the results were tested on a MacBook with 2.4 GHz processor and 4GB RAM. 

\begin{figure*}[!htb]
\begin{center}
\begin{tabular}{ccccc}
\includegraphics[width=28mm, height=28mm]{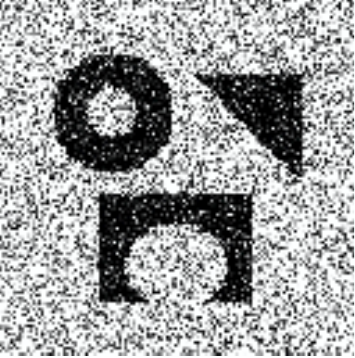}  &
\includegraphics[width=28mm, height=28mm]{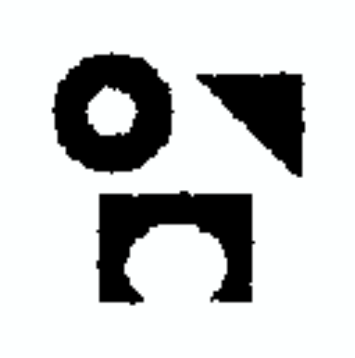}  &
\includegraphics[width=28mm, height=28mm]{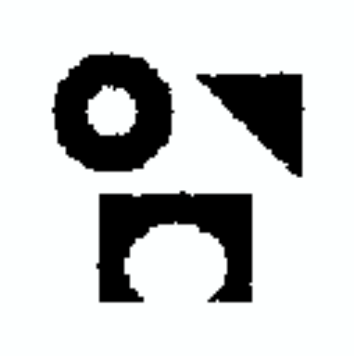}  &
\includegraphics[width=28mm, height=28mm]{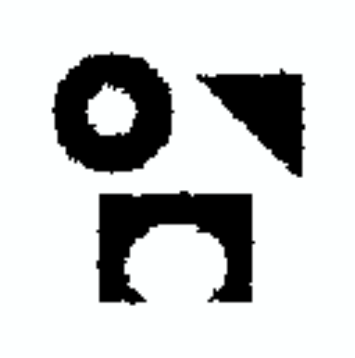}  &
\includegraphics[width=28mm, height=28mm]{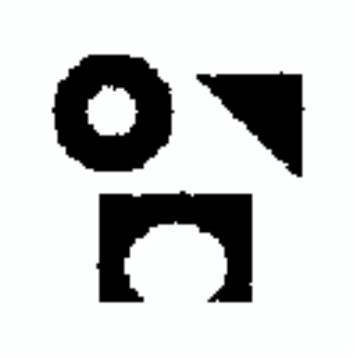} \\
{\small (A1)} &{\small (B1) \cite{YBTBmul} (99.50) }  & {\small (C1) \cite{HSHMS} (99.64) } & 
{\small (D1) \cite{CCZ} (99.48) } & {\small (E1) Our (99.65) } \\
\includegraphics[width=28mm, height=28mm]{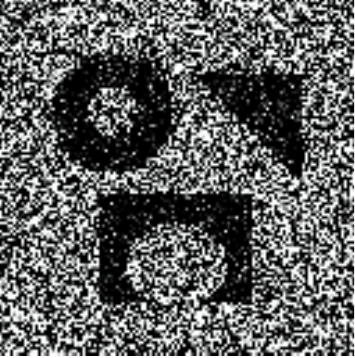} &
\includegraphics[width=28mm, height=28mm]{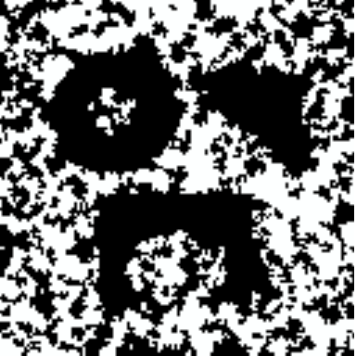} &
\includegraphics[width=28mm, height=28mm]{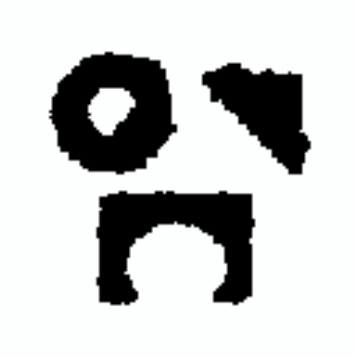} &
\includegraphics[width=28mm, height=28mm]{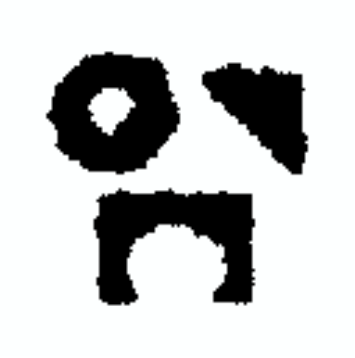} &
\includegraphics[width=28mm, height=28mm]{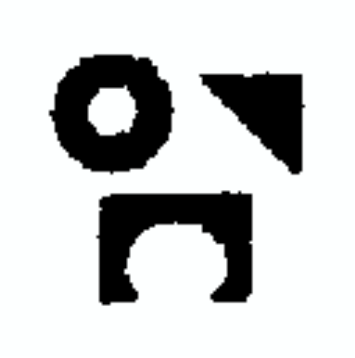}  \\
{\small (A2)} &{\small (B2) \cite{YBTBmul} (64.23) }  & {\small (C2) \cite{HSHMS} (98.13) } & 
{\small (D2) \cite{CCZ} (97.15) } & {\small (E2) Our (99.29) } \\
\includegraphics[width=28mm, height=28mm]{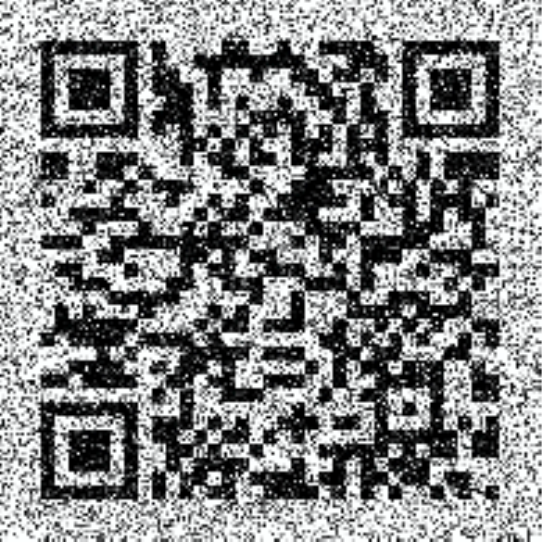}  &
\includegraphics[width=28mm, height=28mm]{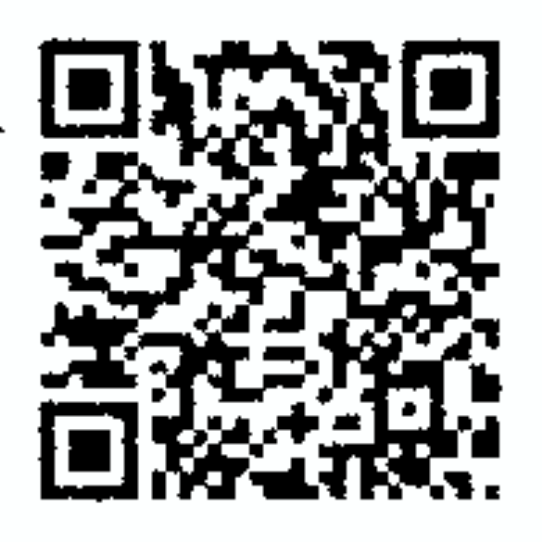}  &
\includegraphics[width=28mm, height=28mm]{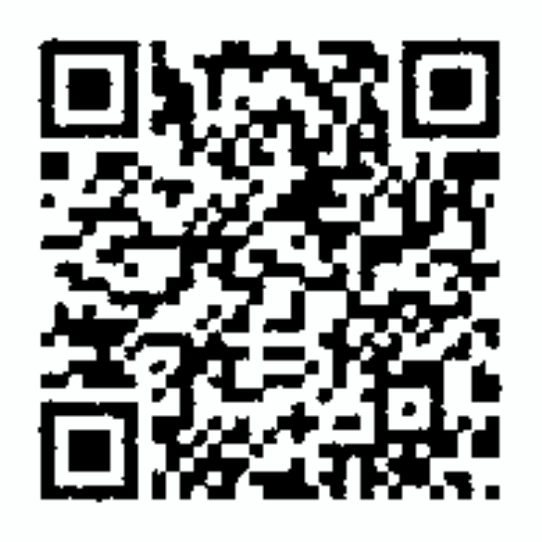}  &
\includegraphics[width=28mm, height=28mm]{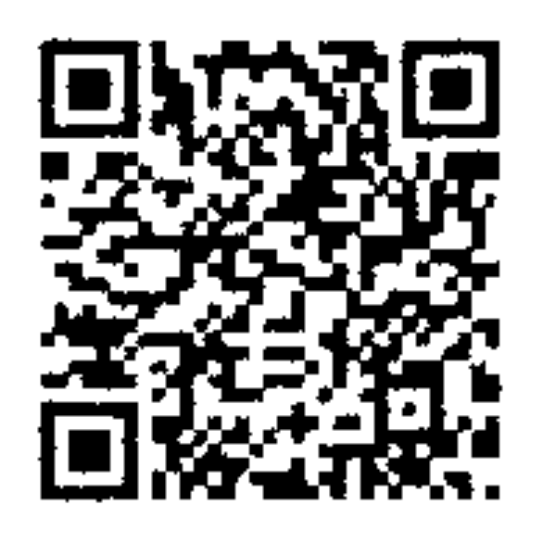}  &
\includegraphics[width=28mm, height=28mm]{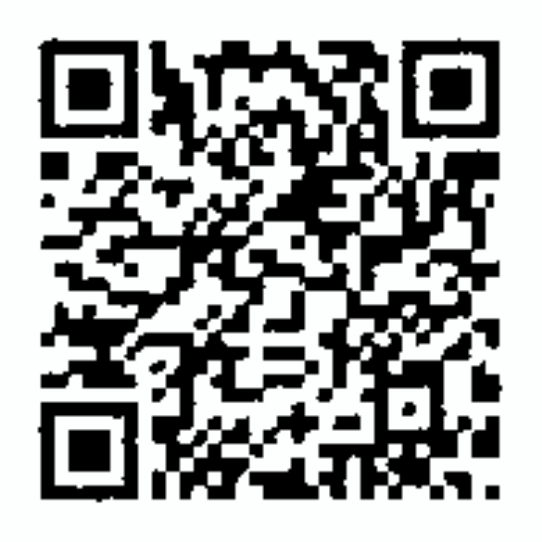} \\
{\small (A3)} &{\small (B3) \cite{YBTBmul} (97.91) }  & {\small (C3) \cite{HSHMS} (98.37) } & 
{\small (D3) \cite{CCZ} (98.08) } & {\small (E3) Our (98.43) } \\
\includegraphics[width=28mm, height=28mm]{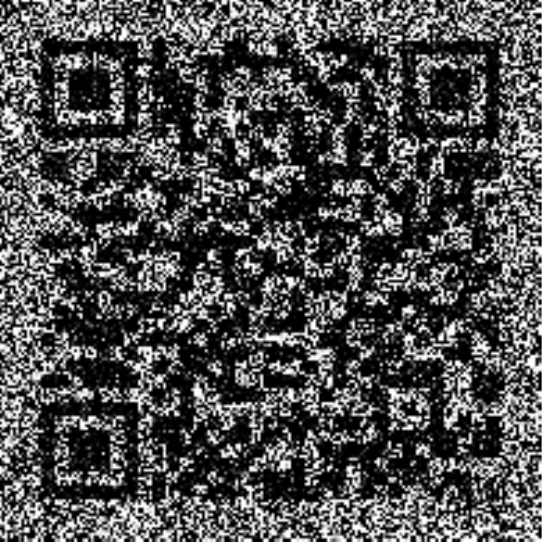} &
\includegraphics[width=28mm, height=28mm]{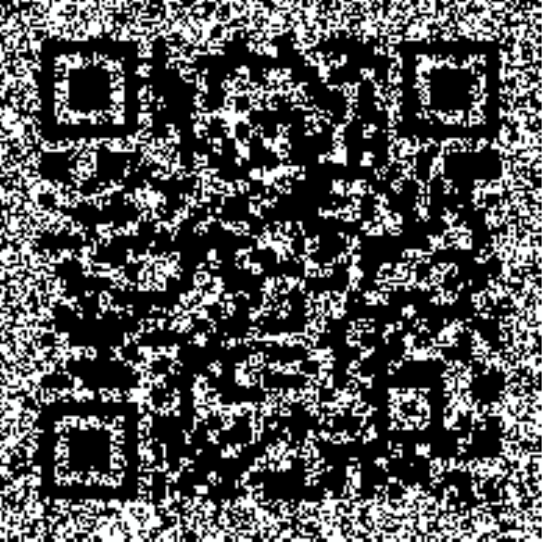} &
\includegraphics[width=28mm, height=28mm]{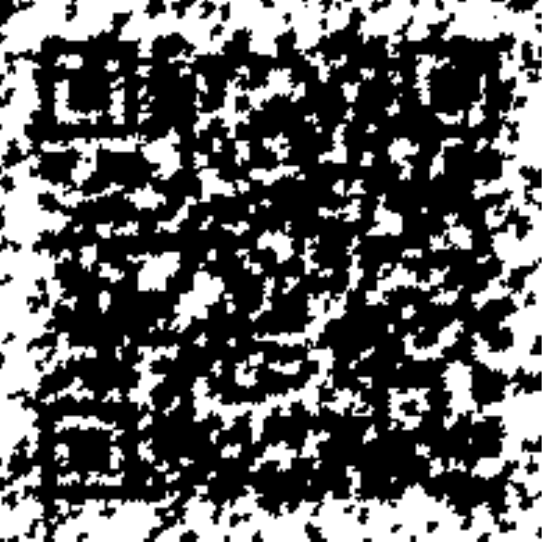} &
\includegraphics[width=28mm, height=28mm]{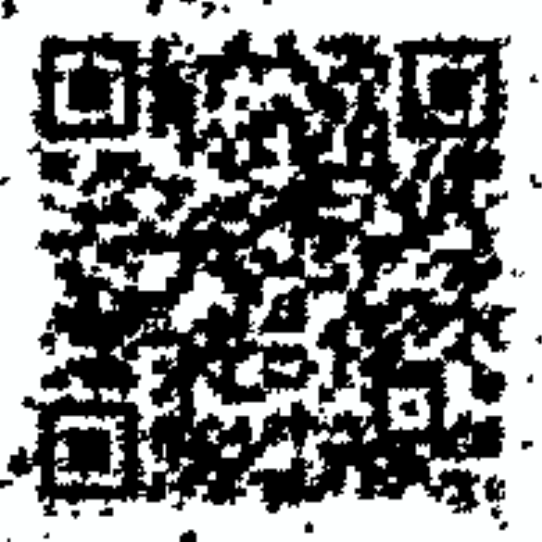} &
\includegraphics[width=28mm, height=28mm]{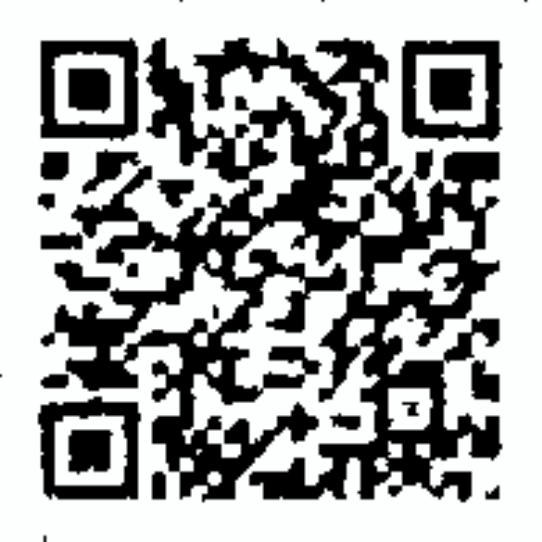}  \\
{\small (A4)} &{\small (B4) \cite{YBTBmul} (68.27) }  & {\small (C4) \cite{HSHMS} (74.28) } & 
{\small (D4) \cite{CCZ} (86.11) } & {\small (E4) Our (95.66) } 
\end{tabular}
\end{center}
\caption{Segmentation of two-phase synthetic images ($128\times 128$ and $195\times 195$). 
(A1) and (A3): the given noisy images; (A2) and (A4): the given noisy images with 
$40\%$ information lost; Columns two to five: the results of methods \cite{YBTBmul,HSHMS,CCZ}
and our method, respectively. Numbers in braces are the {\it segmentation accuracy}.
}\label{twophase-syn}
\end{figure*}

\subsection{Gray image segmentation}
{\it Example 1: two-phase synthetic images.} 
To illustrate the ability of our method in high level noisy images and images with information
lost, we first test it in two two-phase synthetic images, i.e. one contains different 
shapes, and the other is the 2D barcode image which represents data relating to the object
it is attached and is the most frequently used type to scan with smartphones, 
see Fig. \ref{twophase-syn}. Fig. \ref{twophase-syn}(A1)--(A4) give the images corrupted 
by Gaussian noise with variance 0.2, and Fig. \ref{twophase-syn}(A2) and (A4) give the
images with part information removed randomly. The columns two to four of Fig. \ref{twophase-syn}
are the results of methods \cite{YBTBmul,HSHMS,CCZ}, respectively. The last 
column of Fig. \ref{twophase-syn} is the results of our method. From the rows one and three 
of Fig. \ref{twophase-syn}, the results of segmenting  the given noisy images, we see that all 
the methods can give very good results. However, after comparing the {\it segmentation accuracy}
given in the braces under each result, 
we get that our method gives the highest SA compared with others three methods. That means
our model \eqref{our-model-multiphase-con} can really improve the segmentation accuracy 
compared with model \eqref{pcms:convex}.
From the second and the fourth rows of Fig. \ref{twophase-syn}, the results of segmenting the 
given images with part information lost, we can easily see that only our method gives good results both 
in visual and {\it segmentation accuracy}.

\begin{figure*}[!htb]
\begin{center}
\begin{tabular}{ccccc}
\includegraphics[width=28mm, height=28mm]{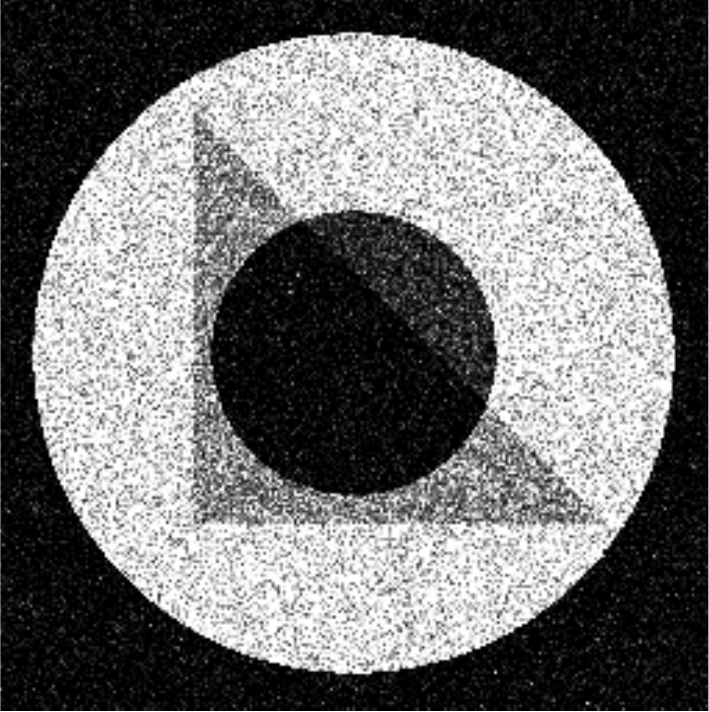}  &
\includegraphics[width=28mm, height=28mm]{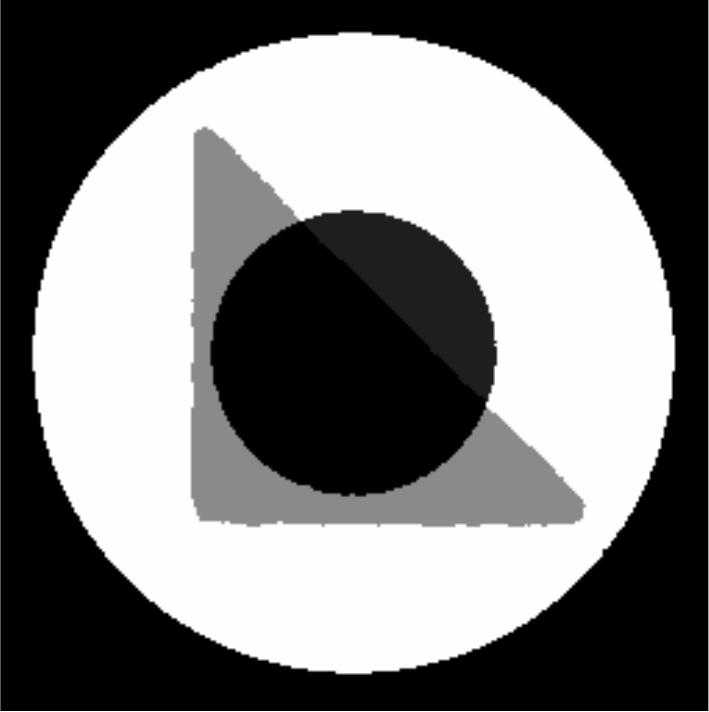}  &
\includegraphics[width=28mm, height=28mm]{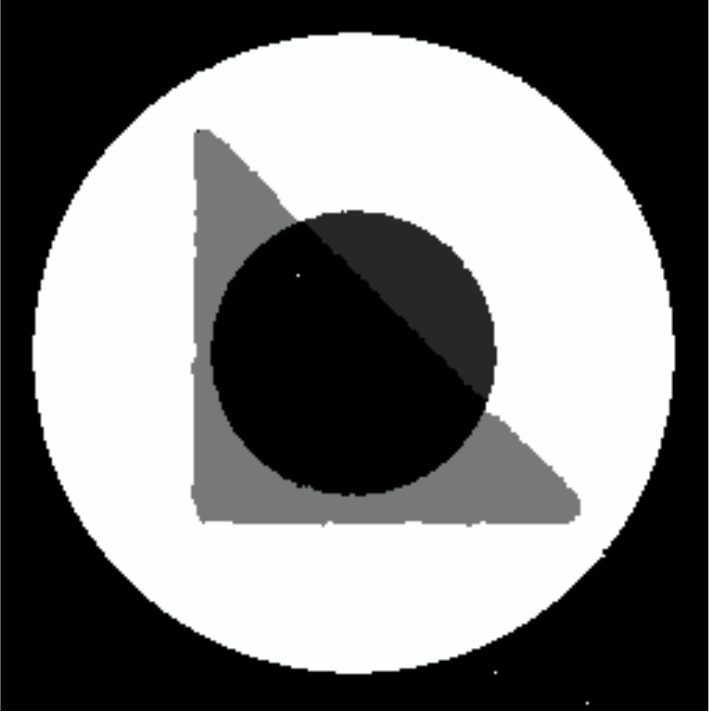}  &
\includegraphics[width=28mm, height=28mm]{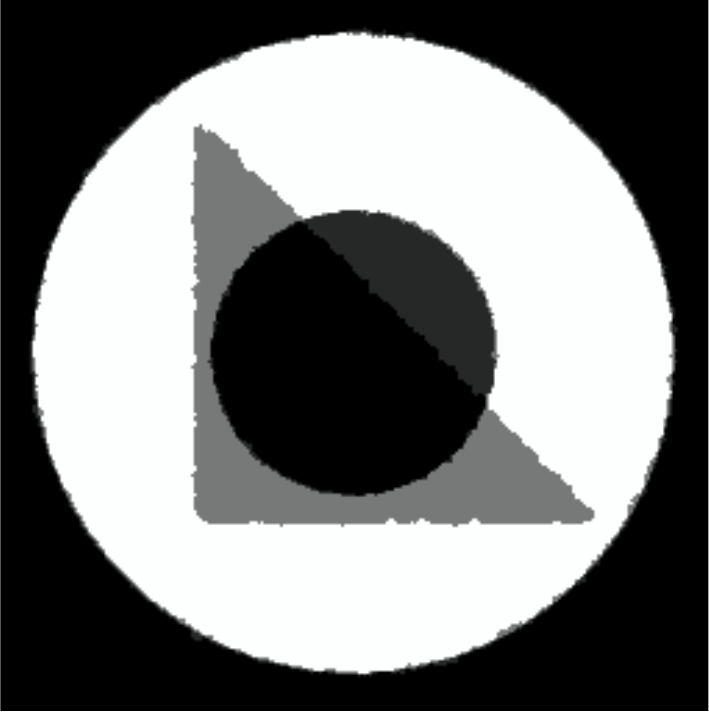}  &
\includegraphics[width=28mm, height=28mm]{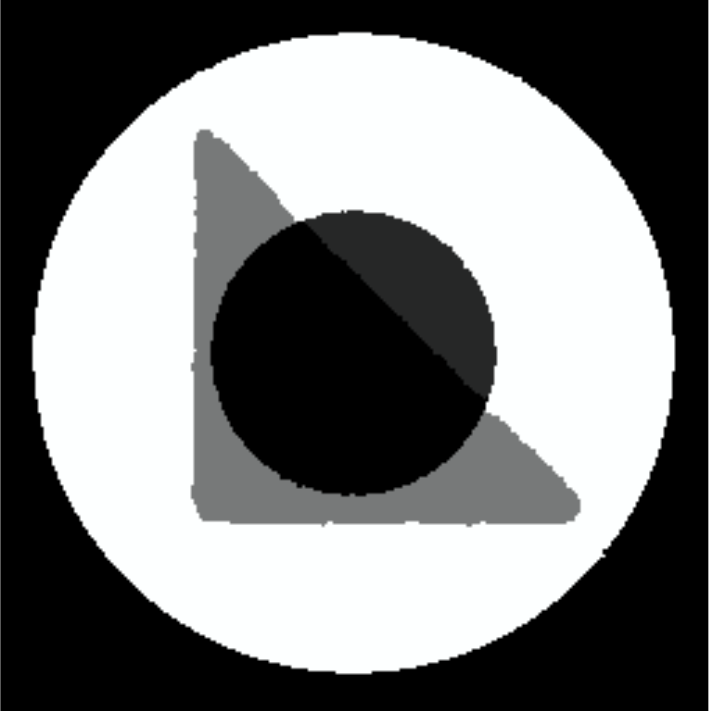} \\
{\small (A1)} &{\small (B1) \cite{YBTBmul} (99.64) }  & {\small (C1) \cite{HSHMS} (99.63) } & 
{\small (D1) \cite{CCZ} (97.96) } & {\small (E1) Our (99.65) } \\
\includegraphics[width=28mm, height=28mm]{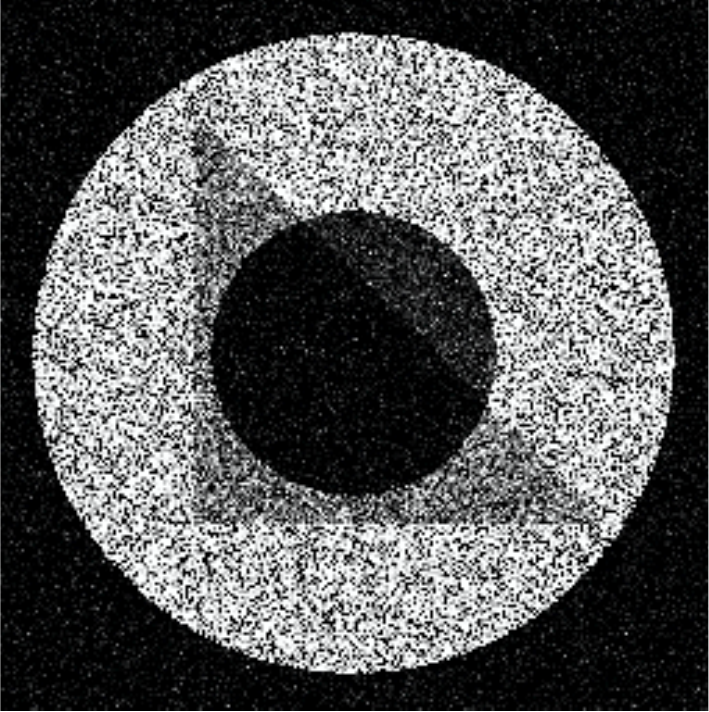} &
\includegraphics[width=28mm, height=28mm]{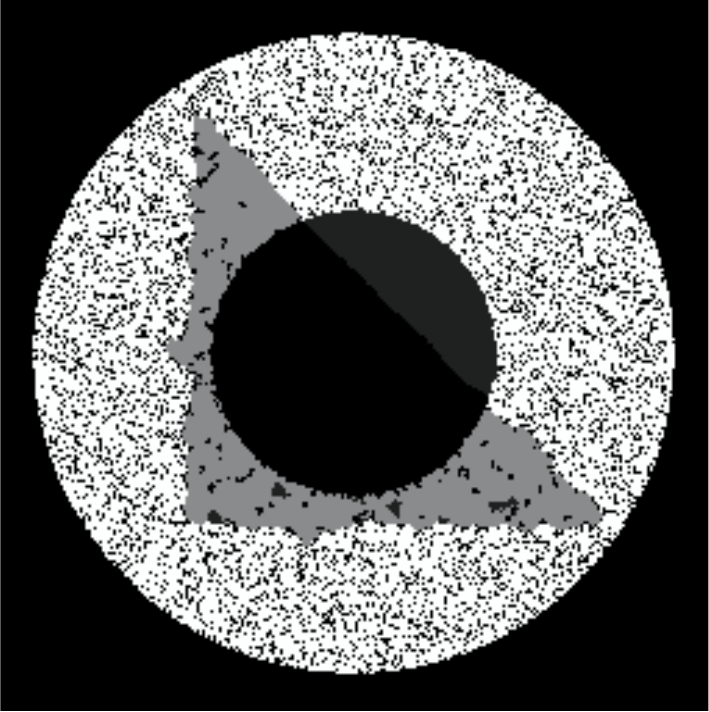} &
\includegraphics[width=28mm, height=28mm]{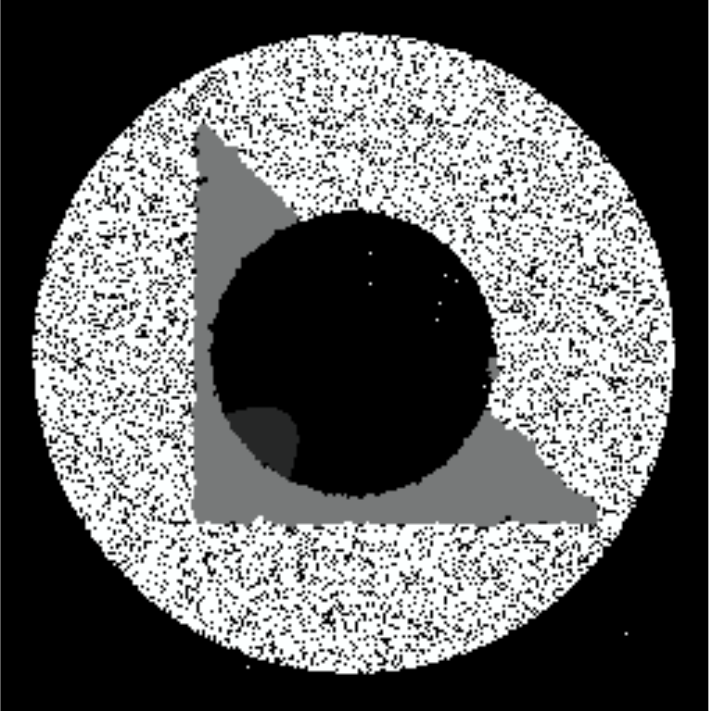} &
\includegraphics[width=28mm, height=28mm]{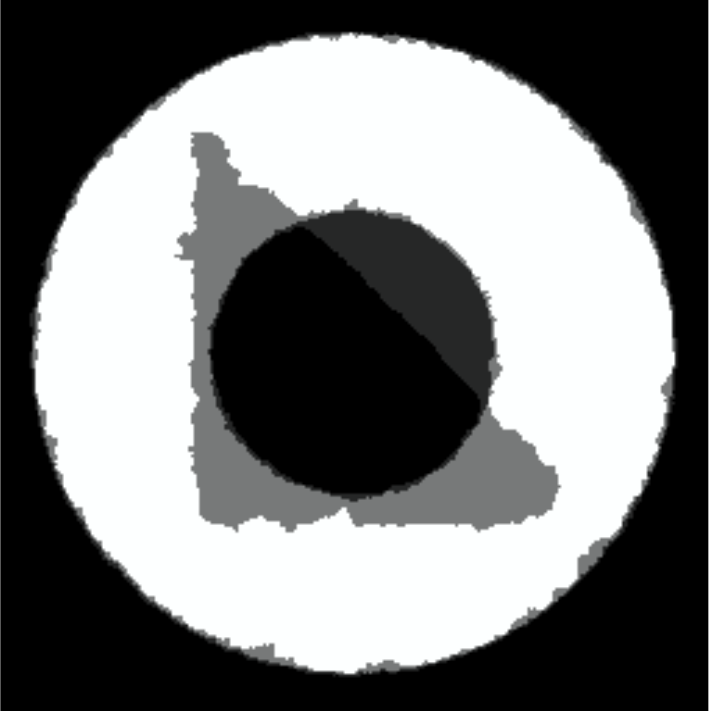} &
\includegraphics[width=28mm, height=28mm]{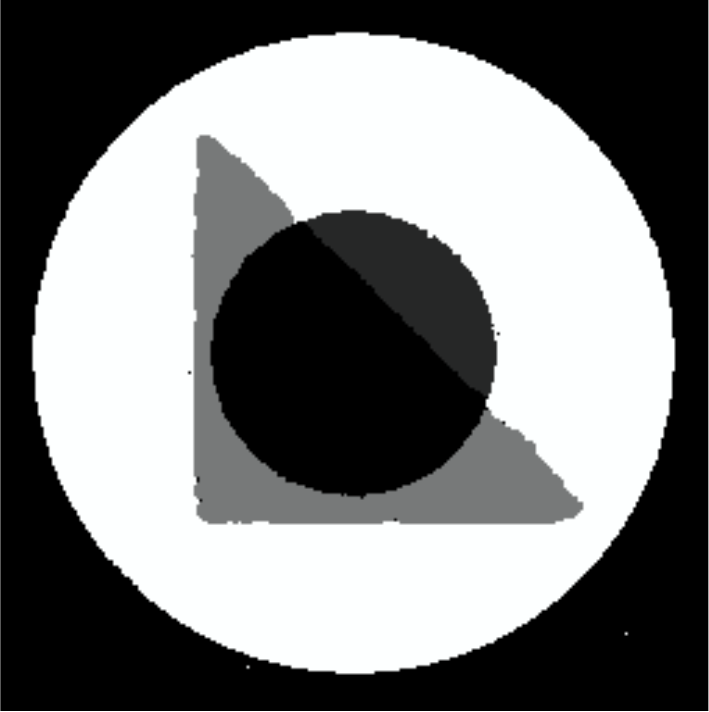}  \\
{\small (A2)} &{\small (B2) \cite{YBTBmul} (75.41) }  & {\small (C2) \cite{HSHMS} (86.89) } & 
{\small (D2) \cite{CCZ} (95.88) } & {\small (E2) Our (99.48) } \\
\includegraphics[width=28mm, height=28mm]{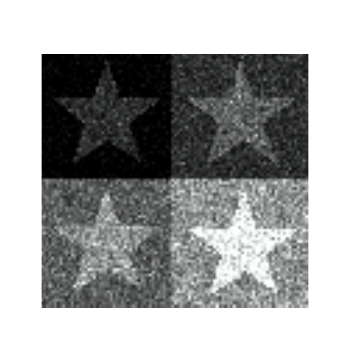}  &
\includegraphics[width=28mm, height=28mm]{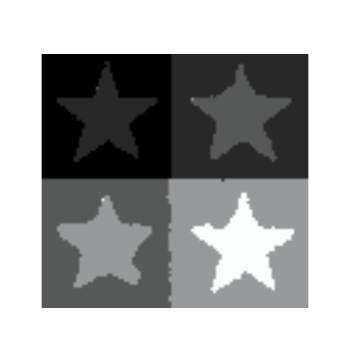}  &
\includegraphics[width=28mm, height=28mm]{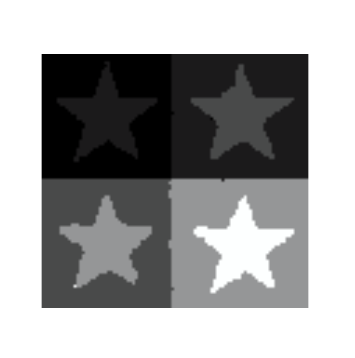}  &
\includegraphics[width=28mm, height=28mm]{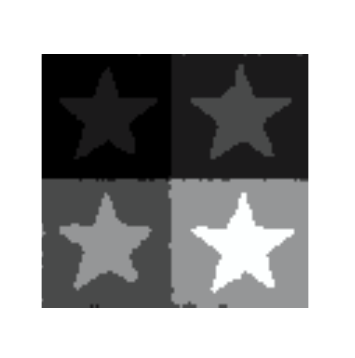}  &
\includegraphics[width=28mm, height=28mm]{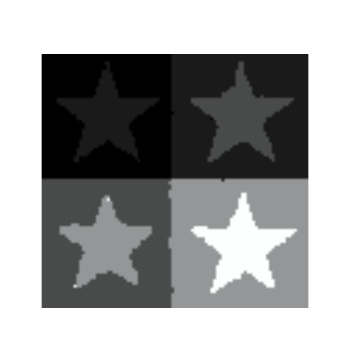} \\
{\small (A3)} &{\small (B3) \cite{YBTBmul} (97.58) }  & {\small (C3) \cite{HSHMS} (98.63) } & 
{\small (D3) \cite{CCZ} (97.83) } & {\small (E3) Our (98.72) } \\
\includegraphics[width=28mm, height=28mm]{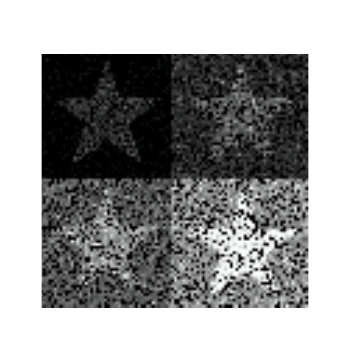} &
\includegraphics[width=28mm, height=28mm]{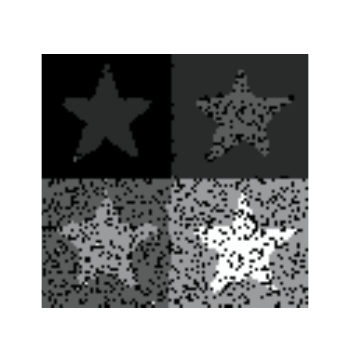} &
\includegraphics[width=28mm, height=28mm]{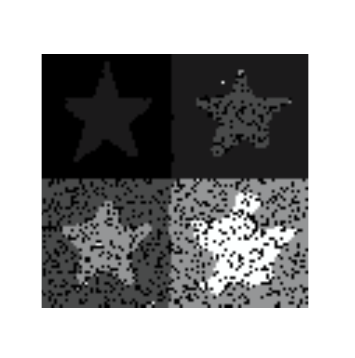} &
\includegraphics[width=28mm, height=28mm]{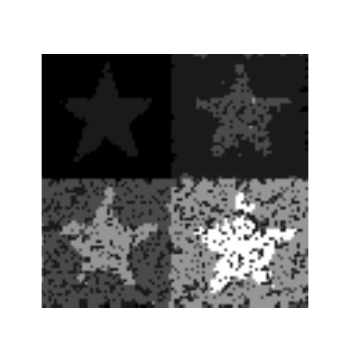} &
\includegraphics[width=28mm, height=28mm]{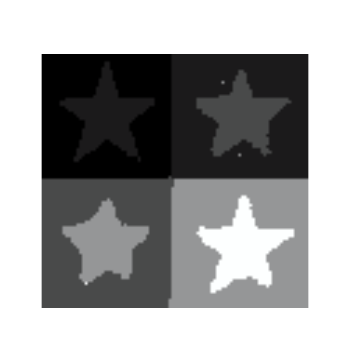}  \\
{\small (A4)} &{\small (B4) \cite{YBTBmul} (85.61) }  & {\small (C4) \cite{HSHMS} (84.17) } & 
{\small (D4) \cite{CCZ} (86.11) } & {\small (E4) Our (97.45) } 
\end{tabular}
\end{center}
\caption{Segmentation of fourphase and fivephase synthetic images 
($256\times 256$ and $91\times 91$). (A1) and (A3): the given noisy images; (A2) and (A4): 
the given noisy images with $20\%$ information lost; Columns two to five: the results of 
methods \cite{YBTBmul,HSHMS,CCZ} and our method, respectively. Numbers in braces 
are the {\it segmentation accuracy}.
}\label{mutiphase-syn}
\end{figure*}

\begin{figure*}[!htb]
\begin{center}
\begin{tabular}{ccccc}
\includegraphics[width=28mm, height=28mm]{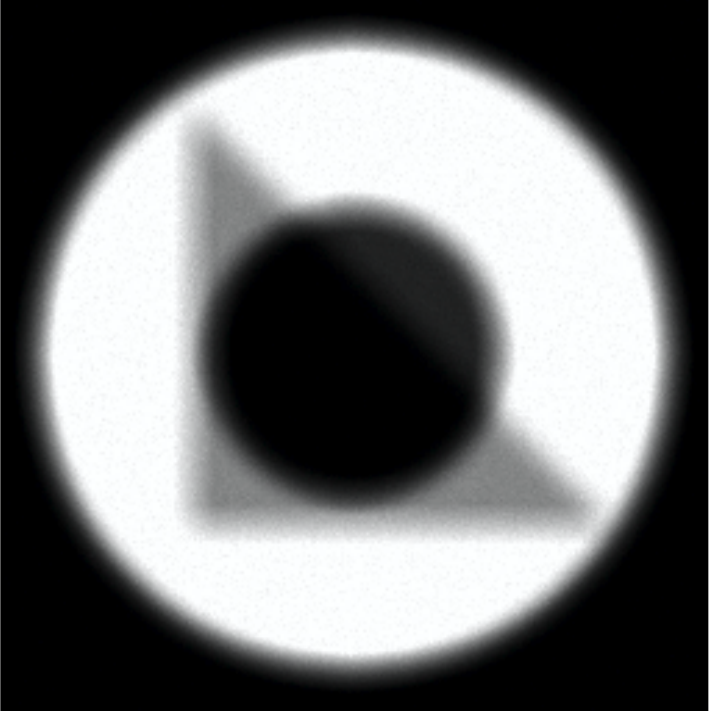}  &
\includegraphics[width=28mm, height=28mm]{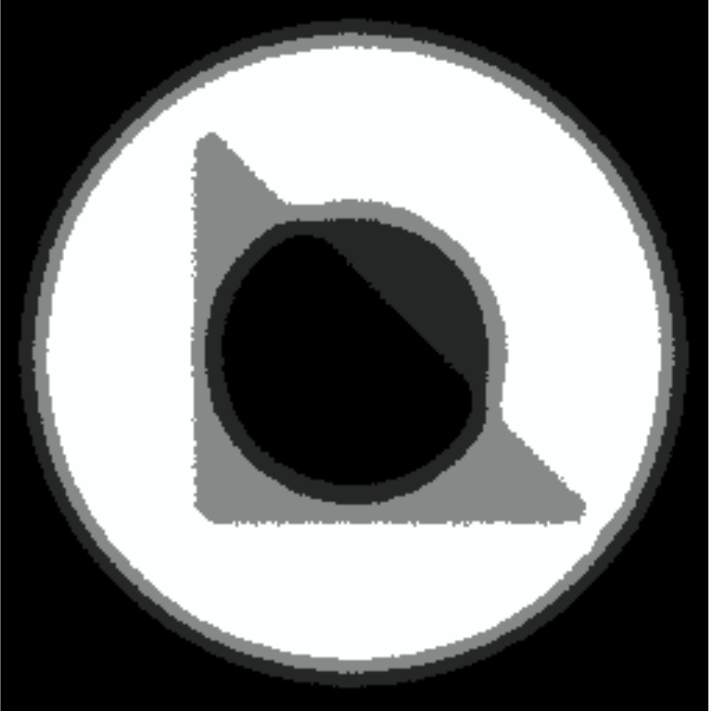}  &
\includegraphics[width=28mm, height=28mm]{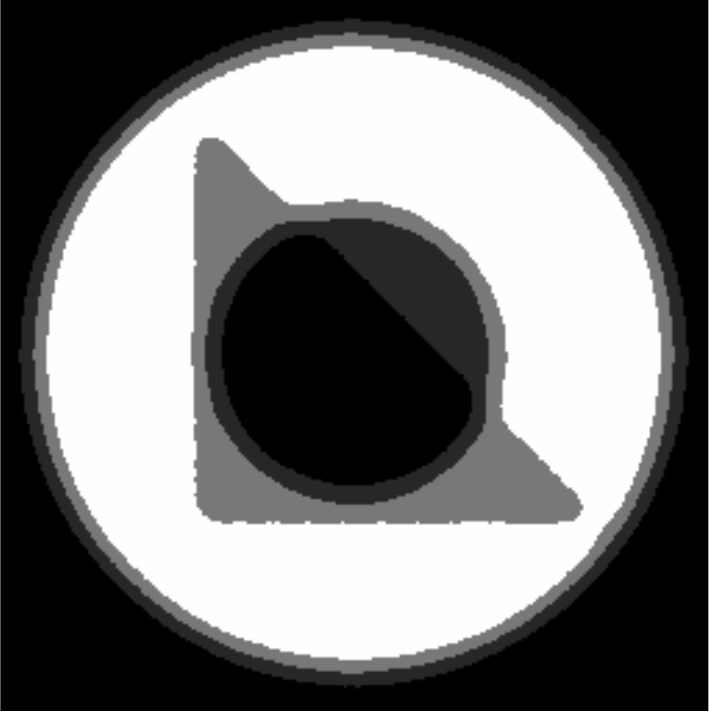}  &
\includegraphics[width=28mm, height=28mm]{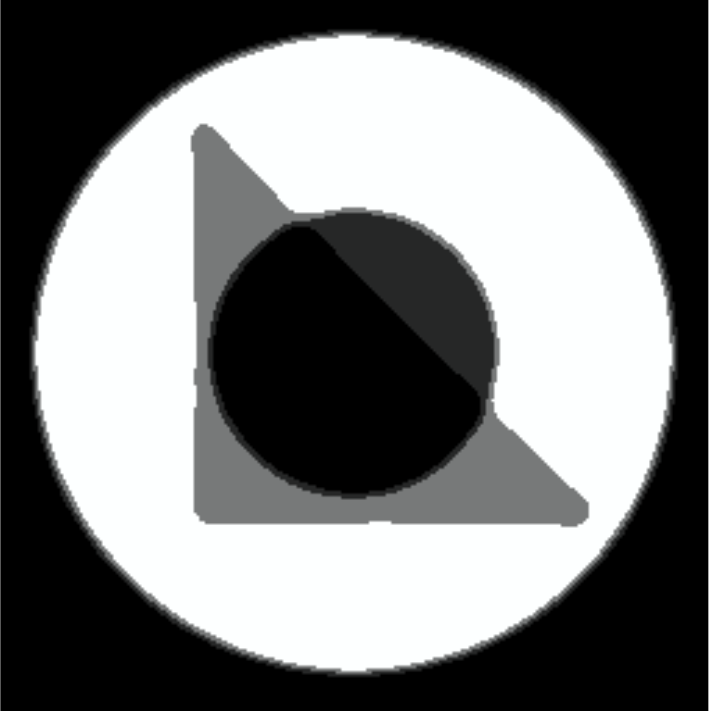}  &
\includegraphics[width=28mm, height=28mm]{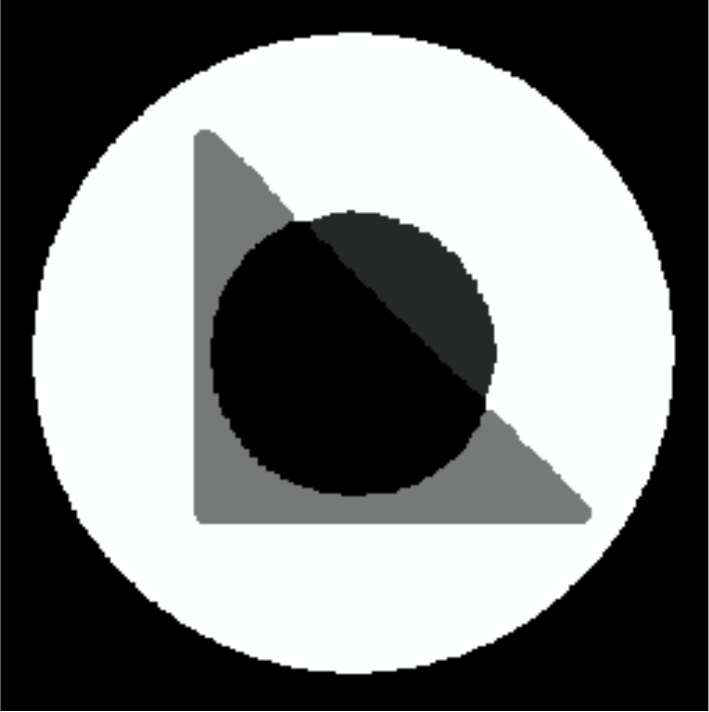} \\
{\small (A1)} &{\small (B1) \cite{YBTBmul} (86.05) }  & {\small (C1) \cite{HSHMS} (86.31) } & 
{\small (D1) \cite{CCZ} (95.61) } & {\small (E1) Our (99.44) } \\
\includegraphics[width=28mm, height=28mm]{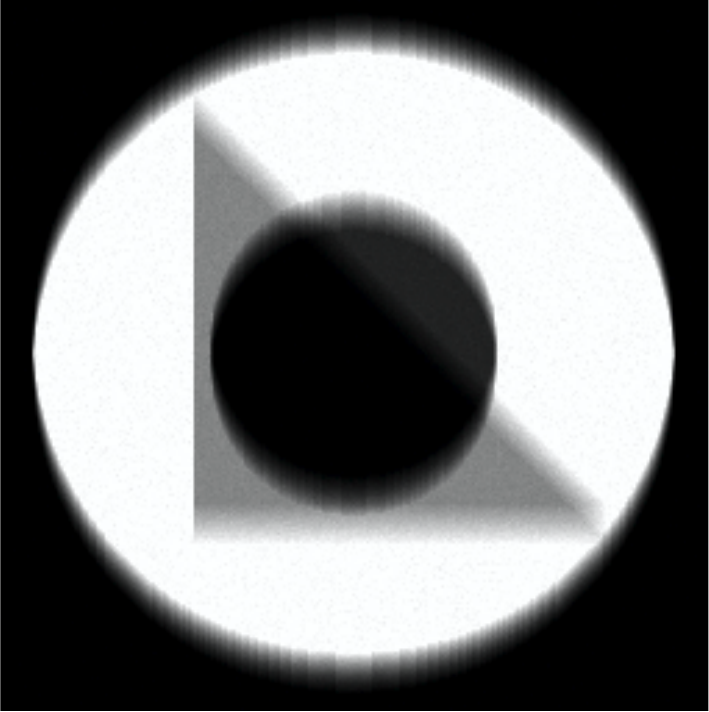}  &
\includegraphics[width=28mm, height=28mm]{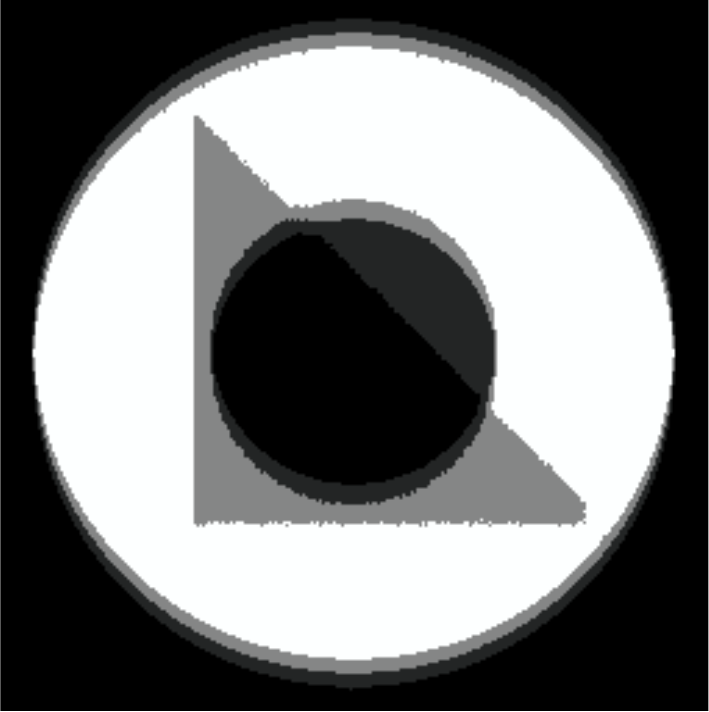}  &
\includegraphics[width=28mm, height=28mm]{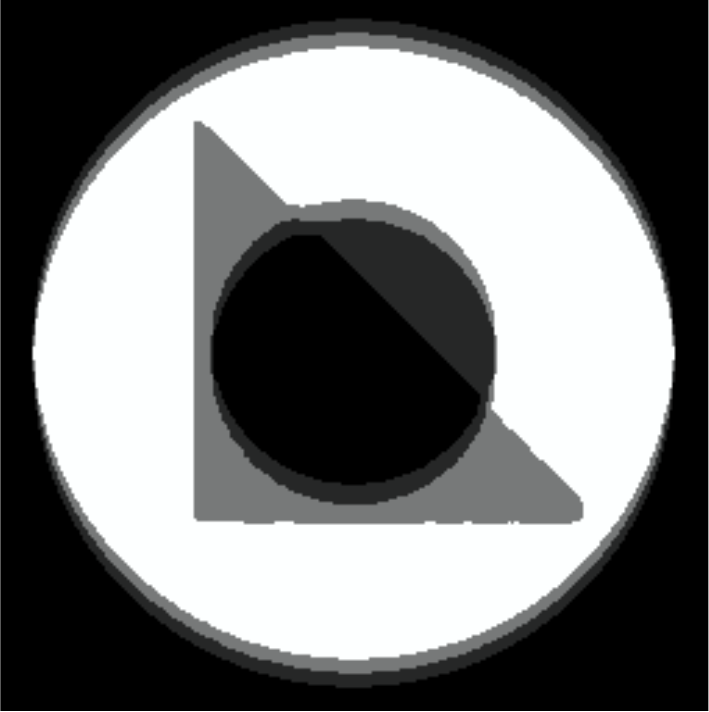}  &
\includegraphics[width=28mm, height=28mm]{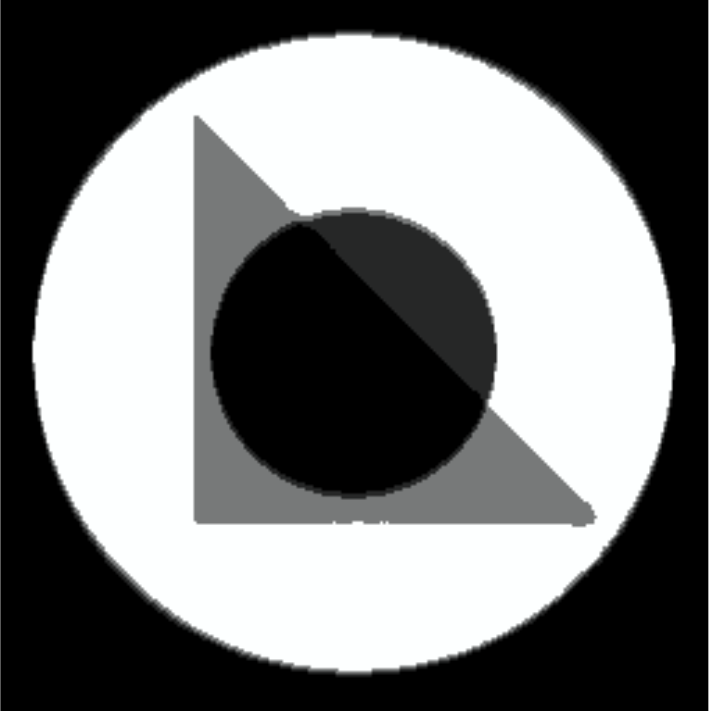}  &
\includegraphics[width=28mm, height=28mm]{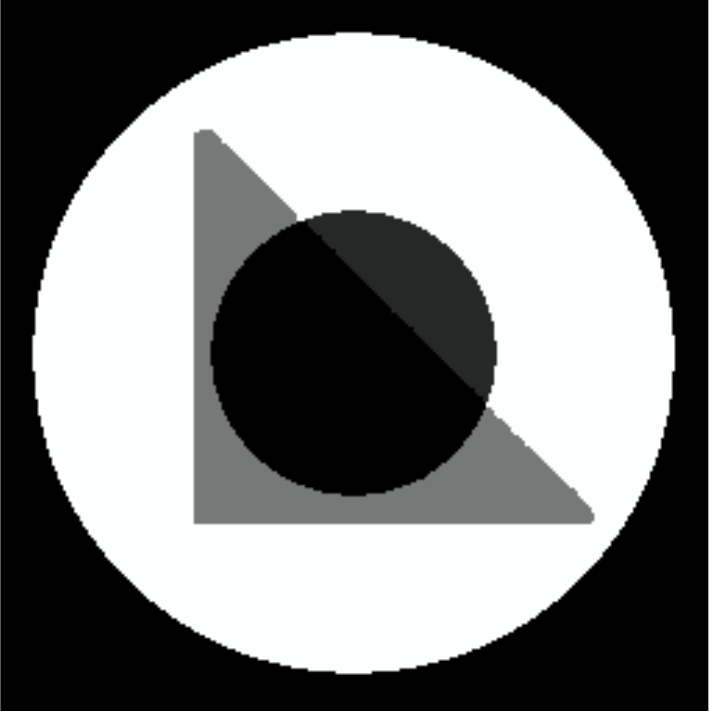} \\
{\small (A2)} &{\small (B2) \cite{YBTBmul} (90.42) }  & {\small (C2) \cite{HSHMS} (90.44) } & 
{\small (D2) \cite{CCZ} (97.24) } & {\small (E2) Our (99.92) } \\
\includegraphics[width=28mm, height=28mm]{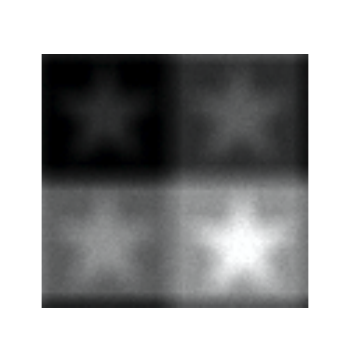}  &
\includegraphics[width=28mm, height=28mm]{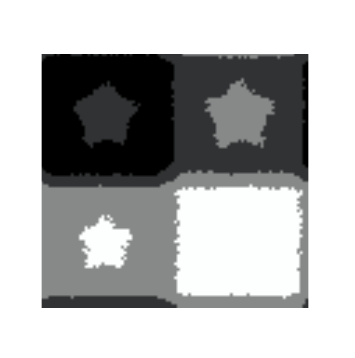}  &
\includegraphics[width=28mm, height=28mm]{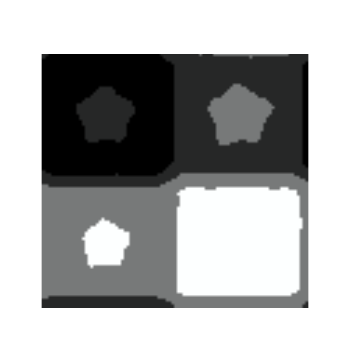}  &
\includegraphics[width=28mm, height=28mm]{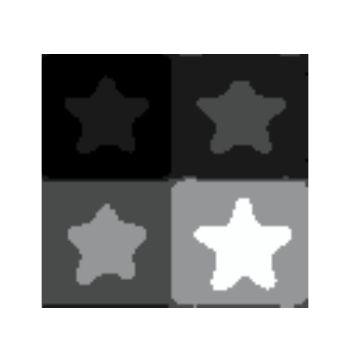}  &
\includegraphics[width=28mm, height=28mm]{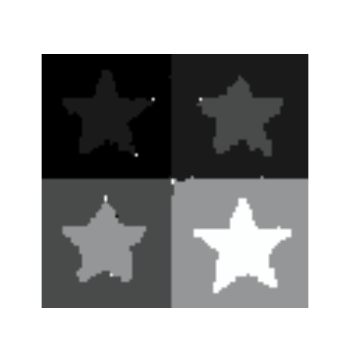} \\
{\small (A3)} &{\small (B3) \cite{YBTBmul} (72.91) }  & {\small (C3) \cite{HSHMS} (72.66) } & 
{\small (D3) \cite{CCZ} (92.66) } & {\small (E3) Our (96.38) } \\
\includegraphics[width=28mm, height=28mm]{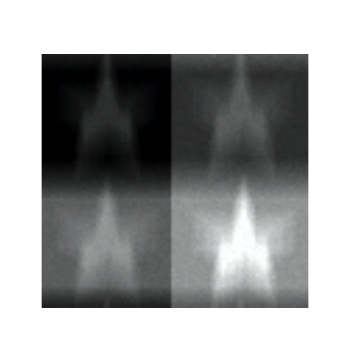}  &
\includegraphics[width=28mm, height=28mm]{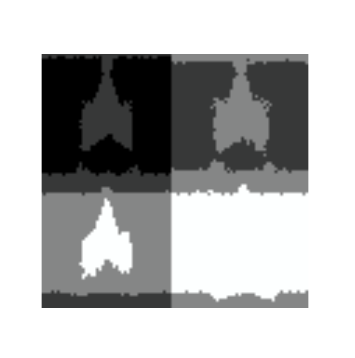}  &
\includegraphics[width=28mm, height=28mm]{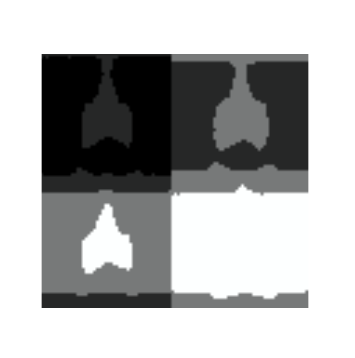}  &
\includegraphics[width=28mm, height=28mm]{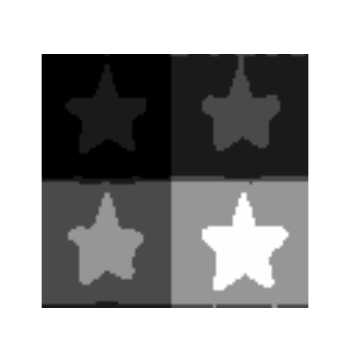}  &
\includegraphics[width=28mm, height=28mm]{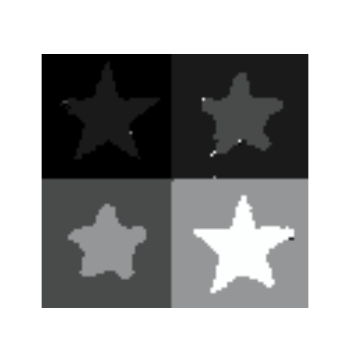} \\
{\small (A4)} &{\small (B4) \cite{YBTBmul} (71.05) }  & {\small (C4) \cite{HSHMS} (71.25) } & 
{\small (D4) \cite{CCZ} (92.53) } & {\small (E4) Our (96.96) } 
\end{tabular}
\end{center}
\caption{Segmentation of fourphase and fivephase synthetic blurry images 
($256\times 256$ and $91\times 91$). (A1) and (A3): the given images with Gaussion blur; 
(A2) and (A4): the given images with motion blur; Columns two to five: the results of 
methods \cite{YBTBmul,HSHMS,CCZ} and our method, respectively. Numbers in braces 
are the {\it segmentation accuracy}.
}\label{mutiphase-syn-blur}
\end{figure*}

{\it Example 2: multiphase synthetic images.} 
Two multiphase synthetic images will be tested in this example, i.e., one is four phases image 
with different shapes inside, and the other is five phases image including stars with different
intensities. In Fig. \ref{mutiphase-syn}(A1)--(A4), the variances used to add noise on the
four phases and five phases images are 0.05 and 0.01, respectively. Moreover, 
Fig. \ref{mutiphase-syn}(A2) and (A4) give the noisy images with $20\%$ of all pixels 
randomly removed. From the results in Fig. \ref{mutiphase-syn}, we can get very similar 
conclusions as those obtained in example 1, i.e., all the methods give very good results for noisy images 
but our method gives the highest {\it segmentation accuracy} which illustrates our model 
\eqref{our-model-multiphase-con} is superior compared with model \eqref{pcms:convex}; 
and only our method can give good results when the given images with information lost. 

To illustrate the effect of our method in segmenting blurry images, we first test our method in 
two synthetic multiphase images used in Fig. \ref{mutiphase-syn} but with Gaussion blur and 
motion blur involved, see Fig. \ref{mutiphase-syn-blur}. Fig. \ref{mutiphase-syn-blur}(A3) 
is blurred by using the gaussian kernel with size $10\times 10$ and standard deviation 10. 
After comparing with our method 
with methods \cite{YBTBmul,HSHMS,CCZ} in Fig. \ref{mutiphase-syn-blur}, we see 
that only method \cite{CCZ} and our method can give good results. 
More precisely, after comparing the {\it segmentation accuracy} of method \cite{CCZ} and 
our method, we can see that our method gives much higher SA which means our method 
gives better results than method \cite{CCZ}. Methods 
\cite{YBTBmul,HSHMS} are not able to segment the blurry images correctly. More
precisely, for the results of methods \cite{YBTBmul,HSHMS} in Fig. \ref{mutiphase-syn-blur}(A1) 
and (A2), the pixels around the boundaries are segmented uncorrectly; for their 
results in Fig. \ref{mutiphase-syn-blur}(A3) and (A4), even one star located in the right 
bottom corner is missed for both methods \cite{YBTBmul} and \cite{HSHMS}.

{\it Example 3: real-world images.}
Test our method in two real-world images, i.e., camera man and MRI (magnetic resonance imaging) 
brain image which comes from medical imaging subject, see Fig. \ref{mutiphase-real}. 
We first test our method in noisy images and images with information lost. 
In Fig. \ref{mutiphase-real}, the variance used for adding noise is 0.01, and the percentage 
of information lost  is $20\%$. The conclusions we get are very close to those obtained when 
we test the methods in synthetic images in examples 1 and 2. From the rows one and three of
Fig. \ref{mutiphase-real}, we see that all the methods give very good results in segmenting 
the two original real-world images. But for the images with information lost especially for the 
image in Fig. \ref{mutiphase-real}(A4), the results of 
methods \cite{YBTBmul,HSHMS} are worse than the results of methods \cite{CCZ} and ours, 
see Fig. \ref{mutiphase-real}(B4), (C4), (D4), and (E4). Moreover,
for the results of methods \cite{CCZ} and ours, we see that our result gives 
much more details for the white matter, see Fig. \ref{mutiphase-real}(D4) and (E4). 

The ability of our method in segmenting blurry images
is given in Fig. \ref{mutiphase-real-blur}. After comparing the results in 
Fig. \ref{mutiphase-real-blur}, we can see that method \cite{CCZ} and our method 
give very similar good results, on the contrary, the results of methods 
\cite{YBTBmul,HSHMS} are worse.

\begin{figure*}[!htb]
\begin{center}
\begin{tabular}{ccccc}
\includegraphics[width=28mm, height=28mm]{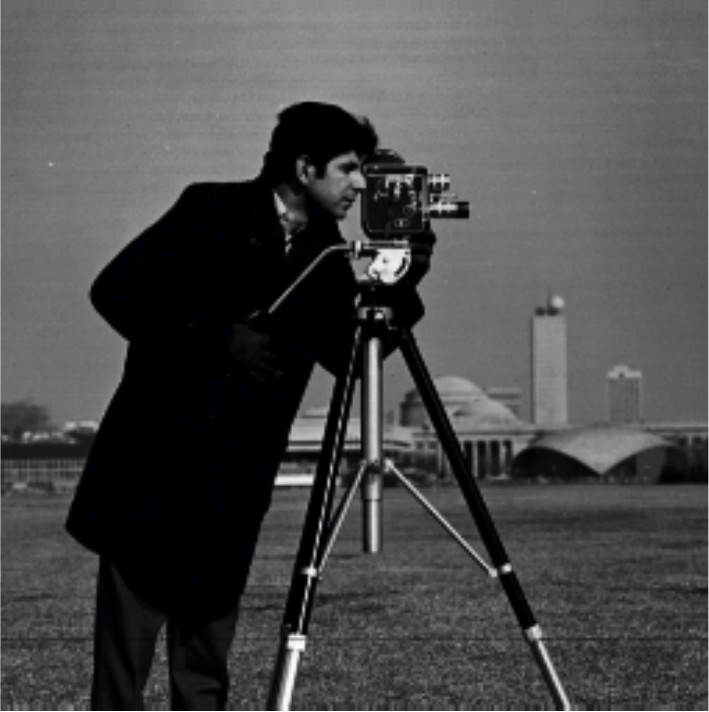}  &
\includegraphics[width=28mm, height=28mm]{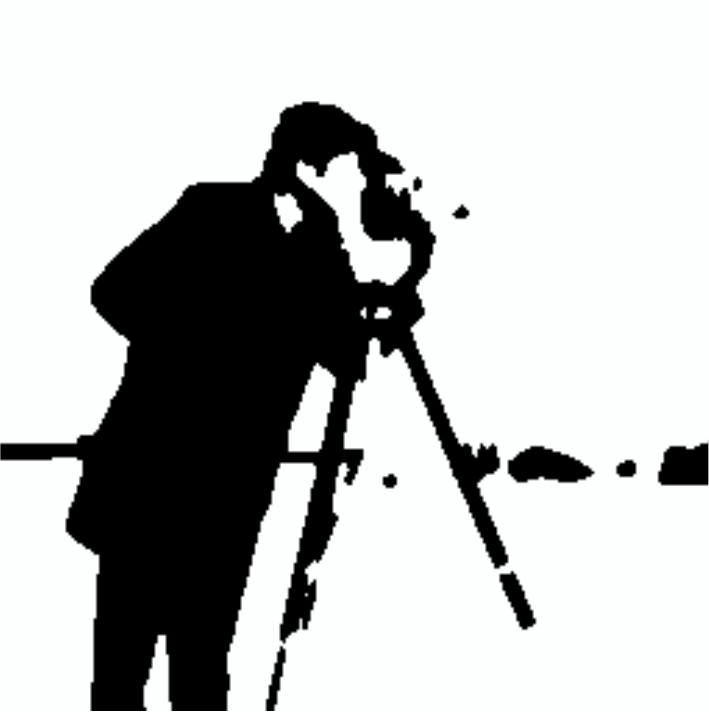}  &
\includegraphics[width=28mm, height=28mm]{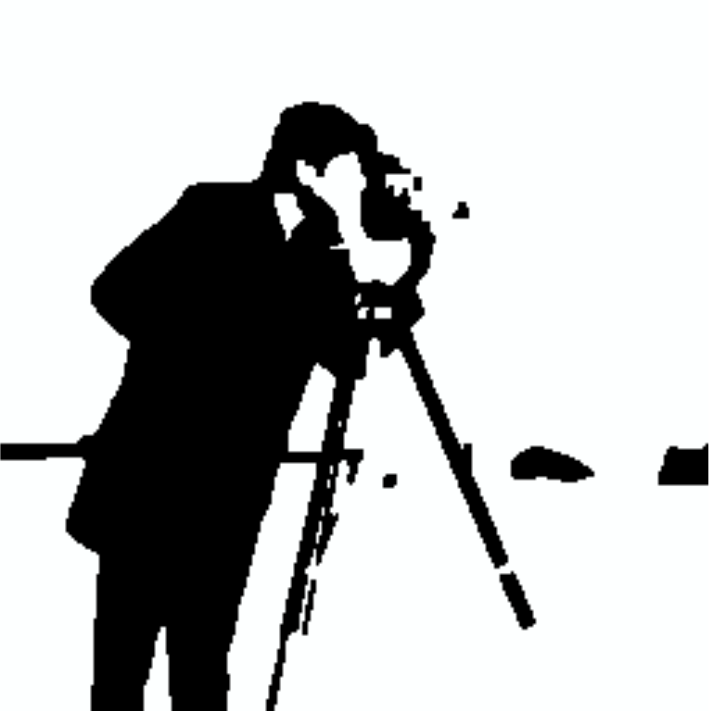}  &
\includegraphics[width=28mm, height=28mm]{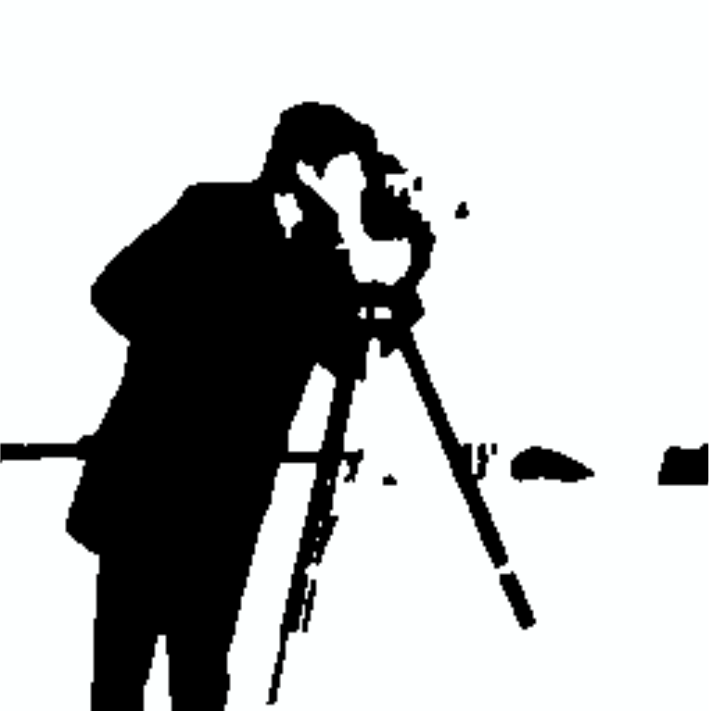}  &
\includegraphics[width=28mm, height=28mm]{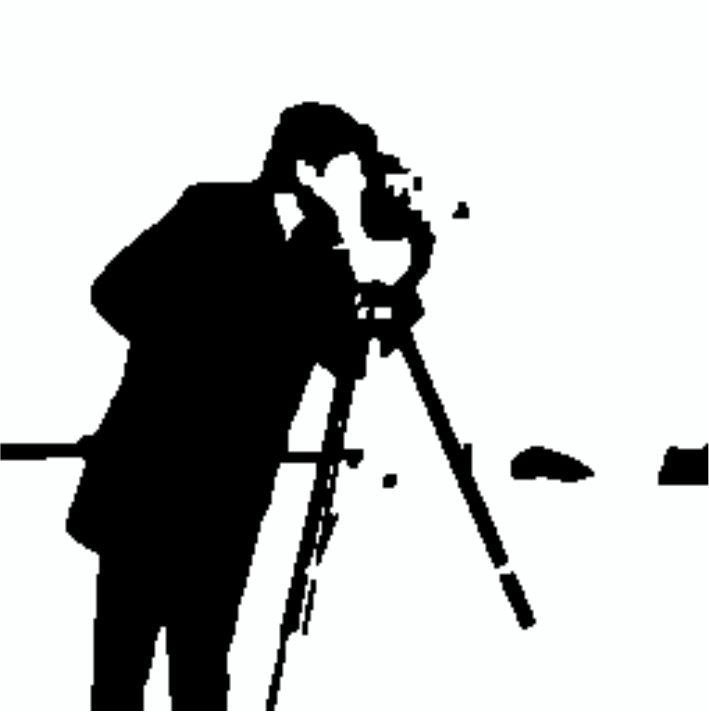} \\
{\small (A1) } &{\small (B1) \cite{YBTBmul} }  & {\small (C1) \cite{HSHMS} } & 
{\small (D1) \cite{CCZ}  } & {\small (E1) Our  } \\
\includegraphics[width=28mm, height=28mm]{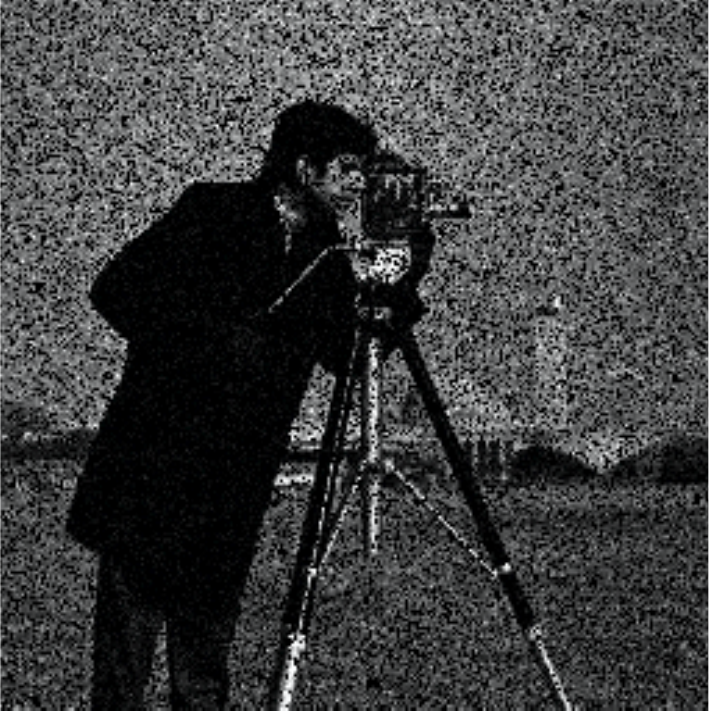}  &
\includegraphics[width=28mm, height=28mm]{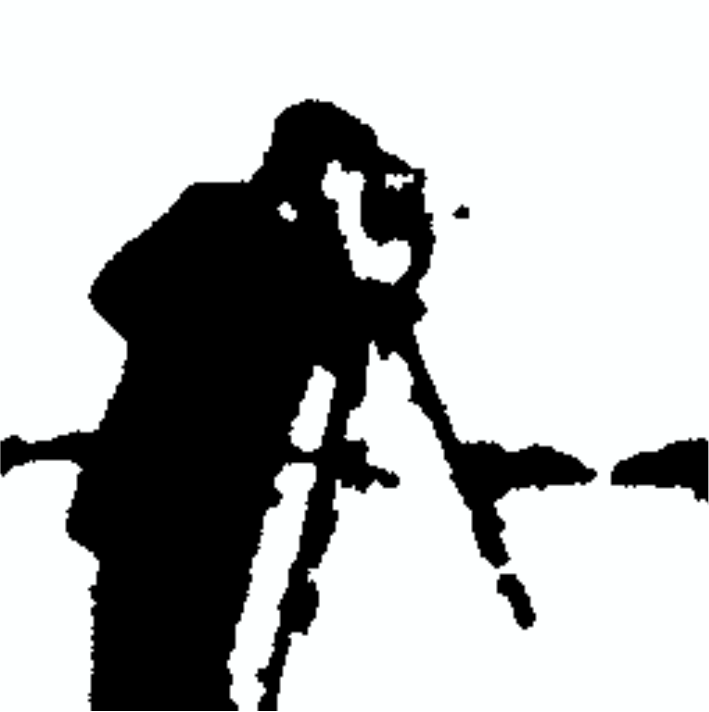}  &
\includegraphics[width=28mm, height=28mm]{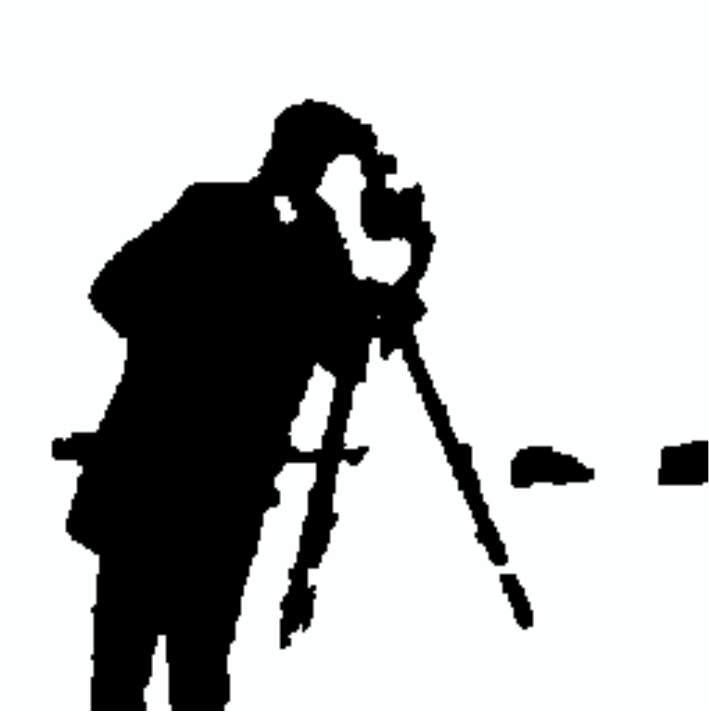}  &
\includegraphics[width=28mm, height=28mm]{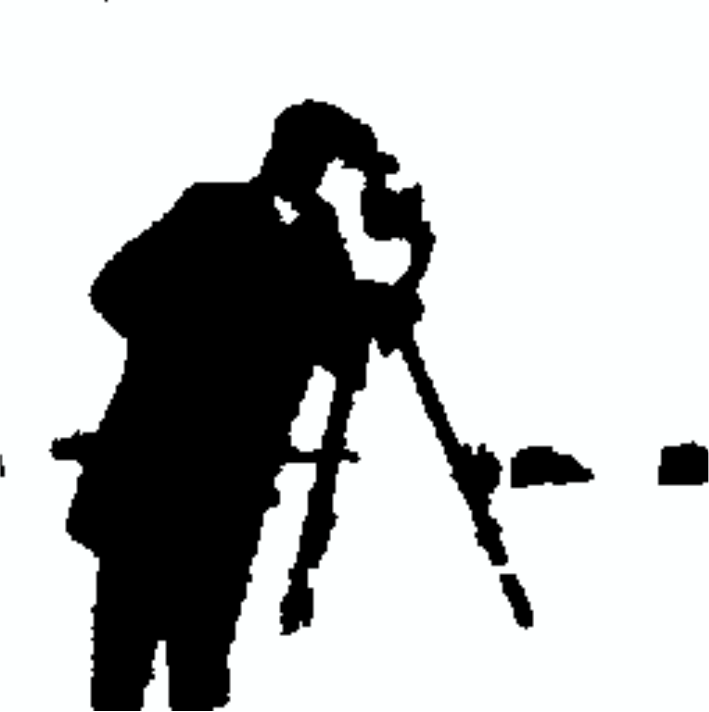}  &
\includegraphics[width=28mm, height=28mm]{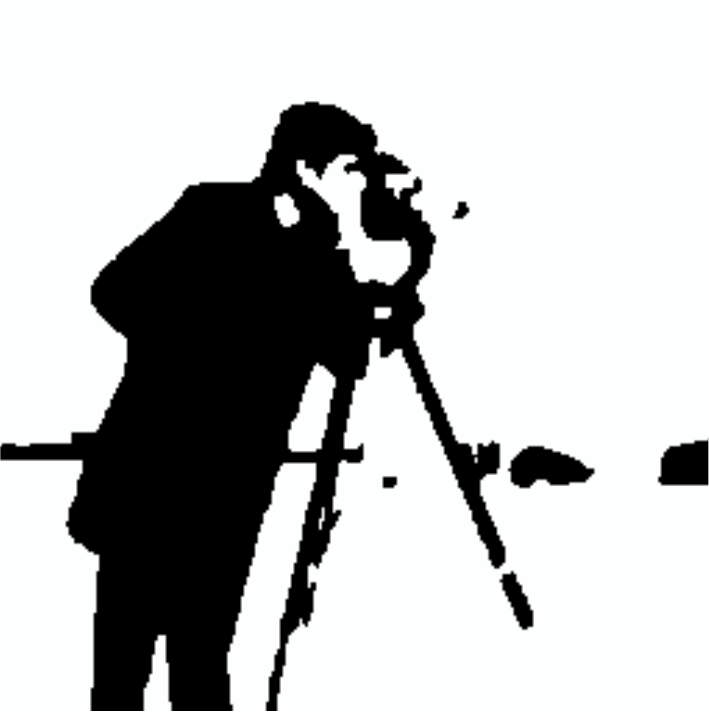} \\
{\small (A2)} &{\small (B2) \cite{YBTBmul}}  & {\small (C2) \cite{HSHMS} } & 
{\small (D2) \cite{CCZ} } & {\small (E2) Our  } \\
\includegraphics[width=28mm, height=33mm]{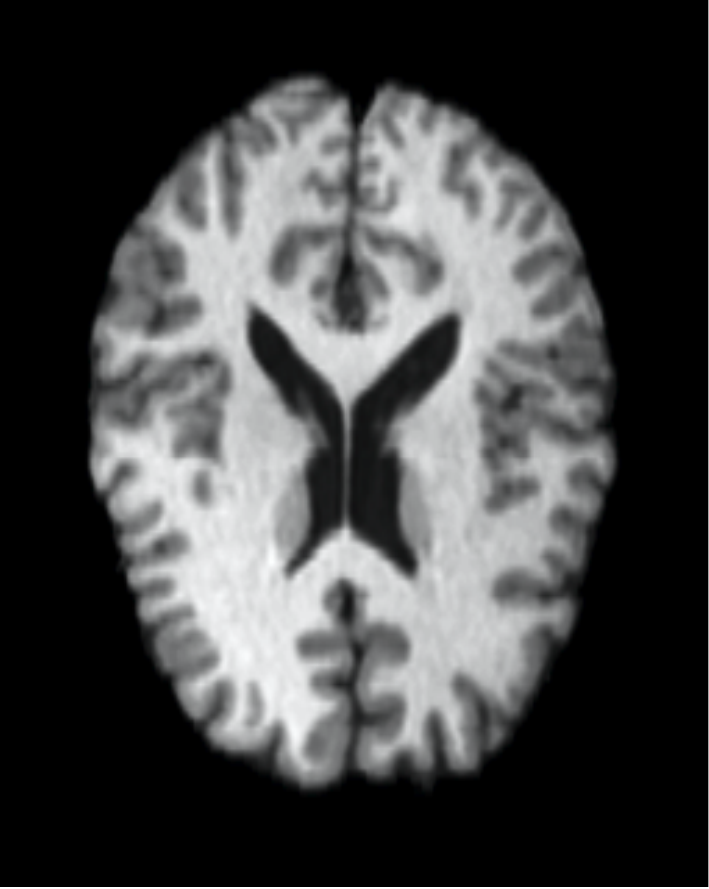}  &
\includegraphics[width=28mm, height=33mm]{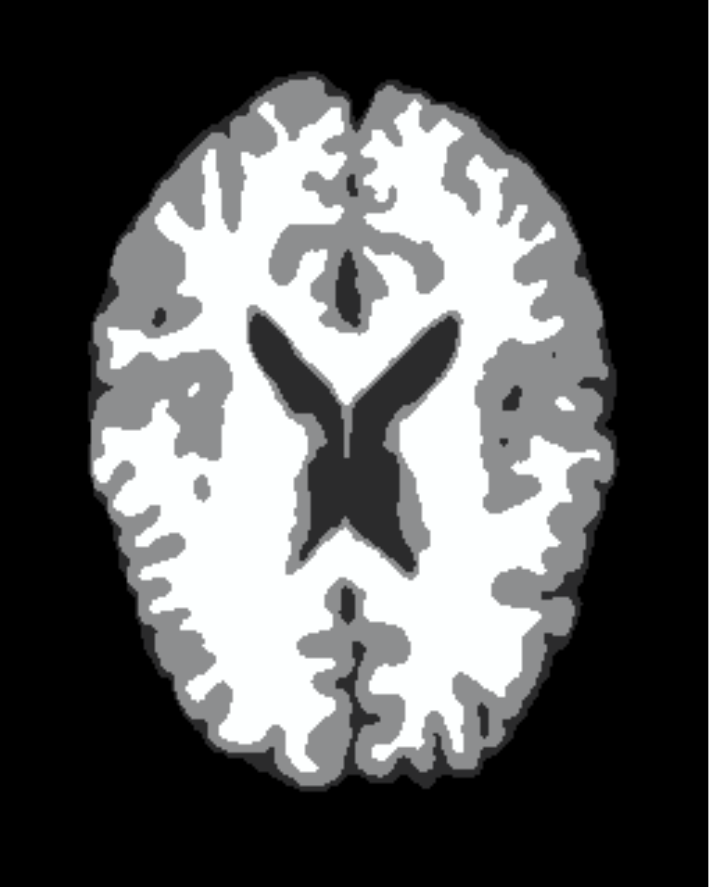}  &
\includegraphics[width=28mm, height=33mm]{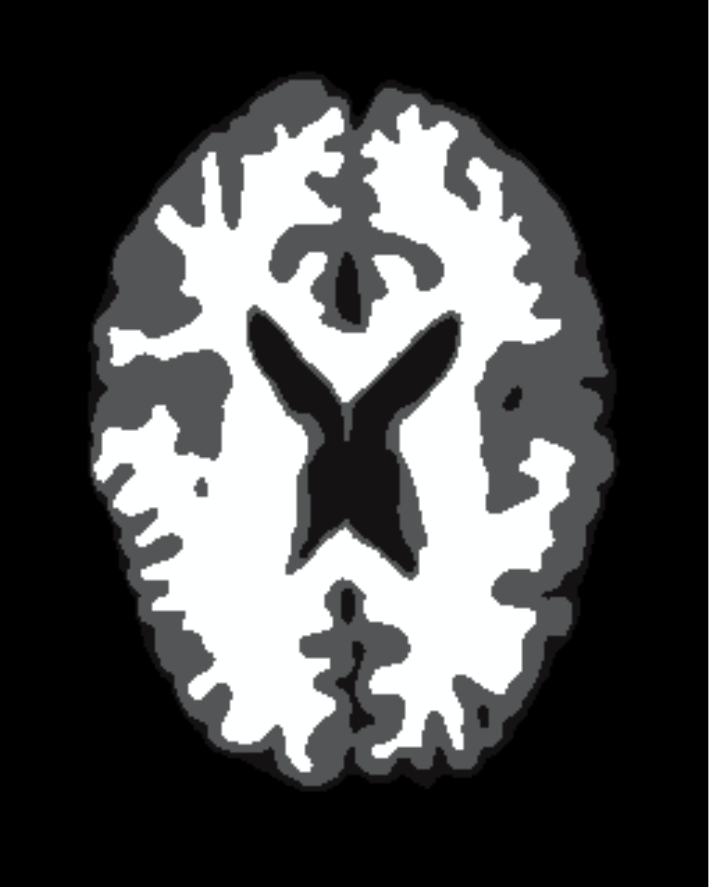}  &
\includegraphics[width=28mm, height=33mm]{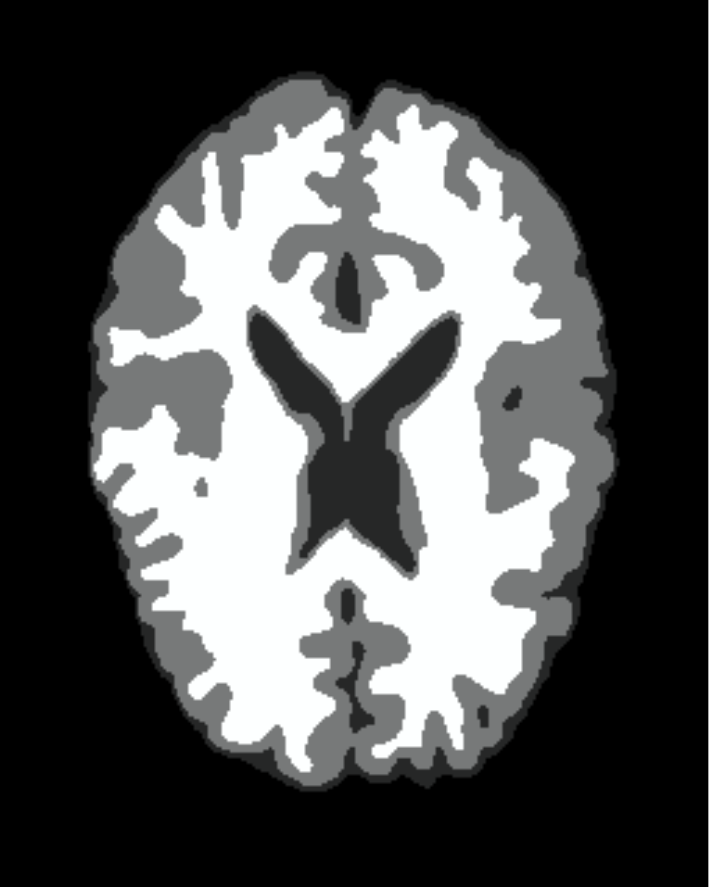}  &
\includegraphics[width=28mm, height=33mm]{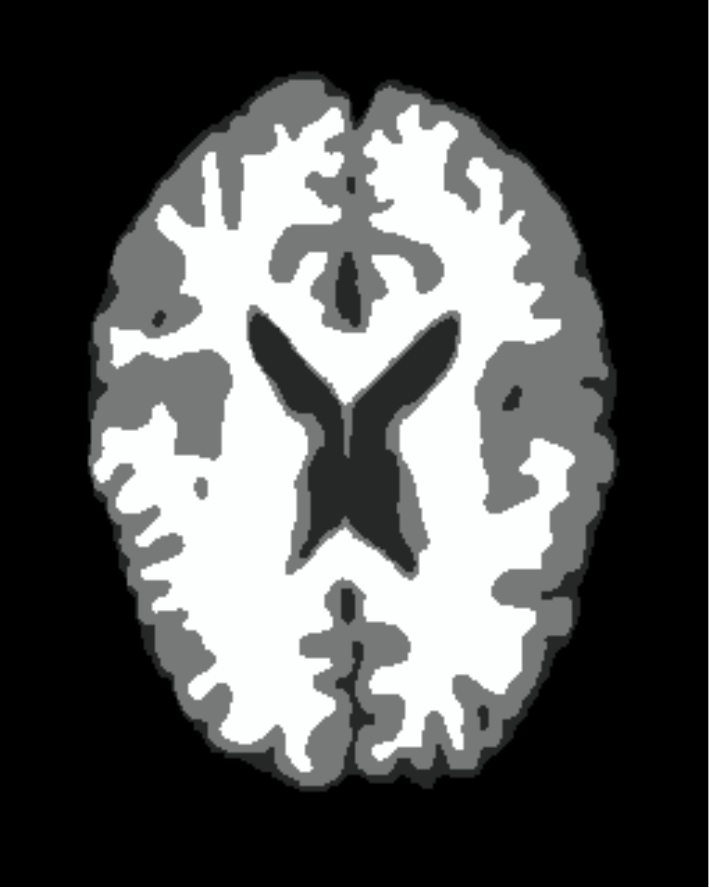} \\
{\small (A3)} &{\small (B3) \cite{YBTBmul} }  & {\small (C3) \cite{HSHMS} } & 
{\small (D3) \cite{CCZ} } & {\small (E3) Our  } \\
\includegraphics[width=28mm, height=33mm]{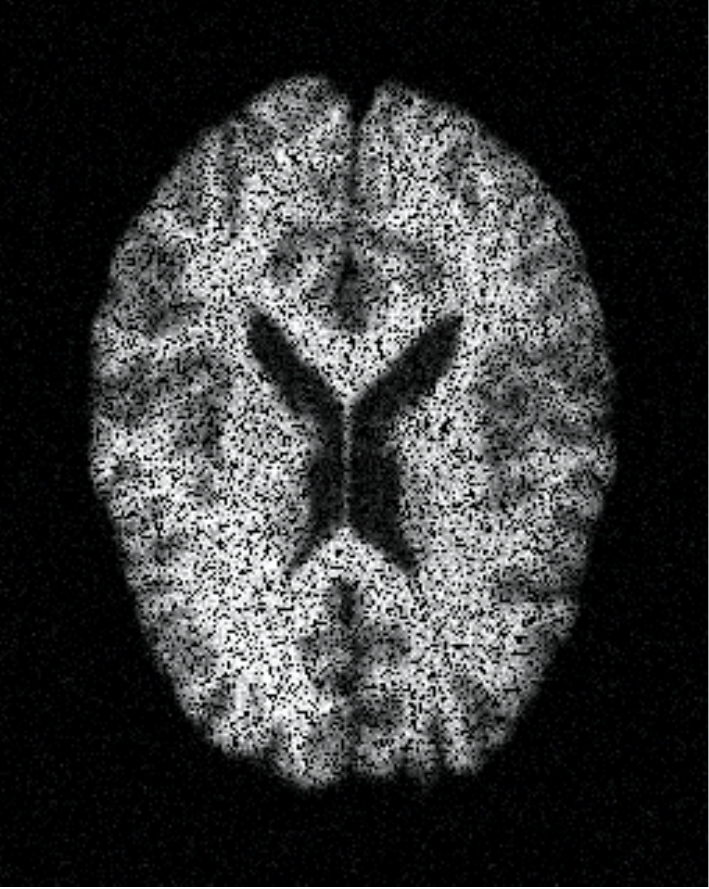} &
\includegraphics[width=28mm, height=33mm]{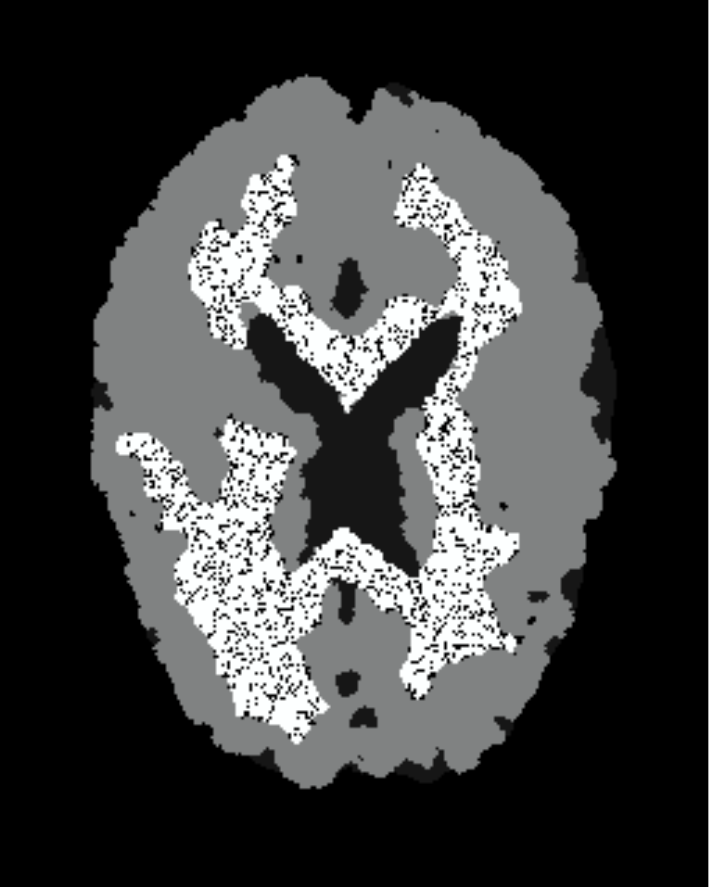} &
\includegraphics[width=28mm, height=33mm]{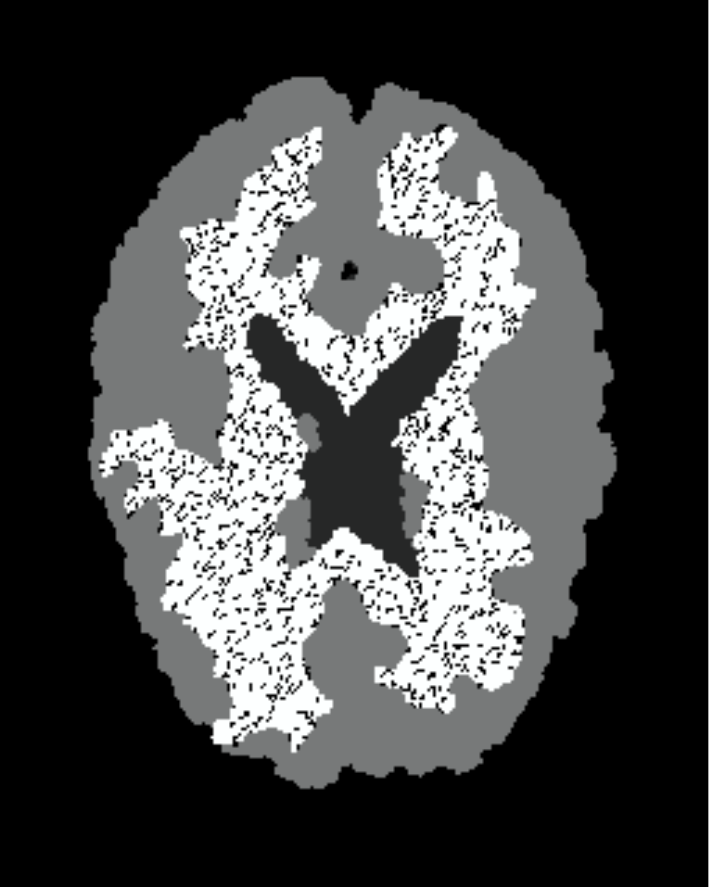} &
\includegraphics[width=28mm, height=33mm]{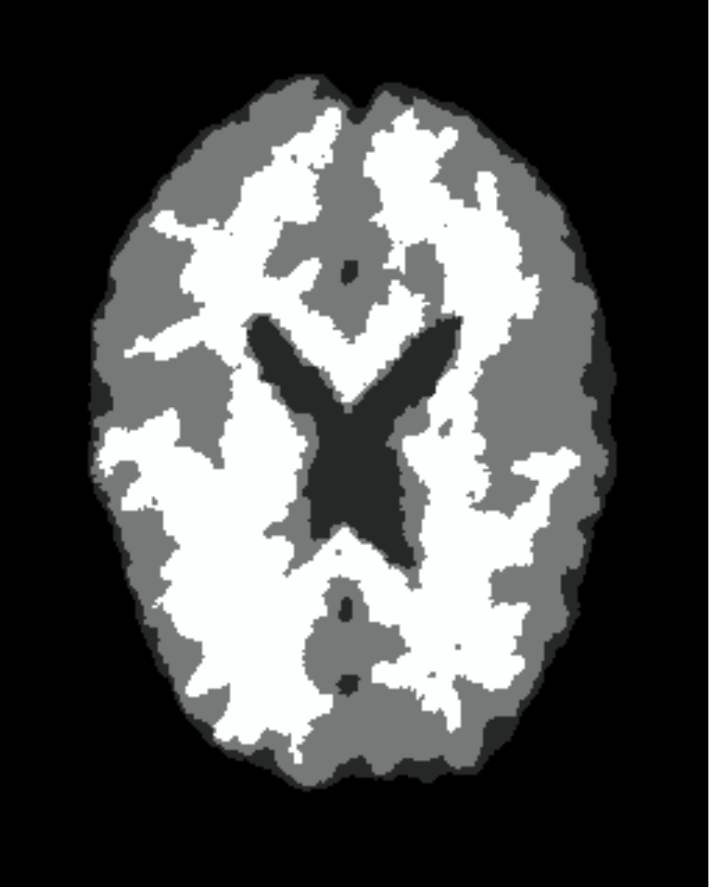} &
\includegraphics[width=28mm, height=33mm]{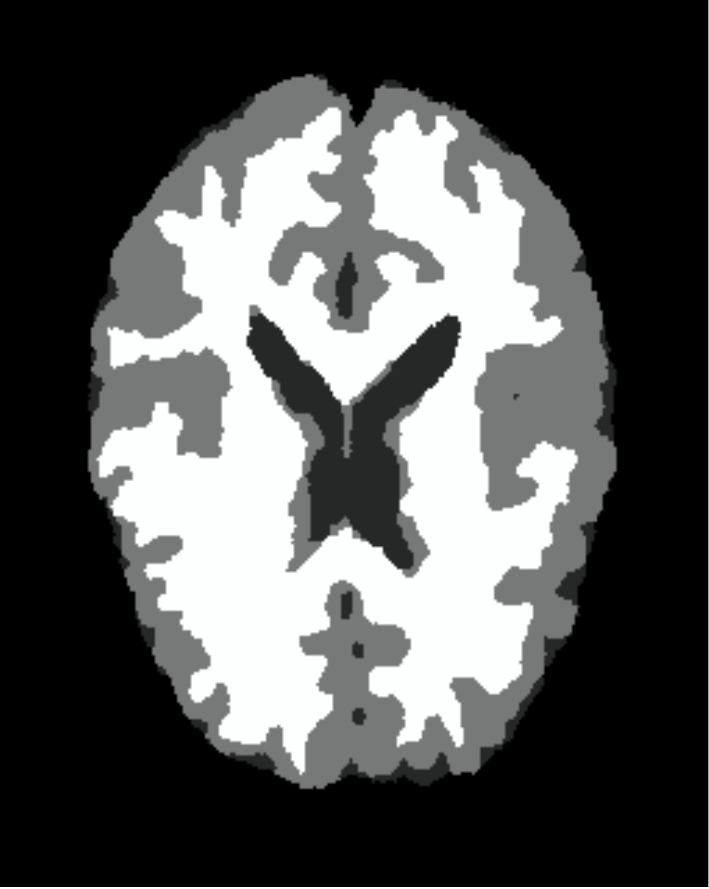}  \\
{\small (A4)} &{\small (B4) \cite{YBTBmul}}  & {\small (C4) \cite{HSHMS} } & 
{\small (D4) \cite{CCZ} } & {\small (E4) Our } 
\end{tabular}
\end{center}
\caption{Segmentation of real-world images: camera man and MRI brain 
($256\times 256$ and $319 \times 256$). 
(A1) and (A3): the given images; (A2) and (A4): the given noisy images with $20\%$ 
information lost; Columns two to five: the results of methods \cite{YBTBmul,HSHMS,CCZ} 
and our method, respectively. 
}\label{mutiphase-real}
\end{figure*}

\begin{figure*}[!htb]
\begin{center}
\begin{tabular}{ccccc}
\includegraphics[width=28mm, height=28mm]{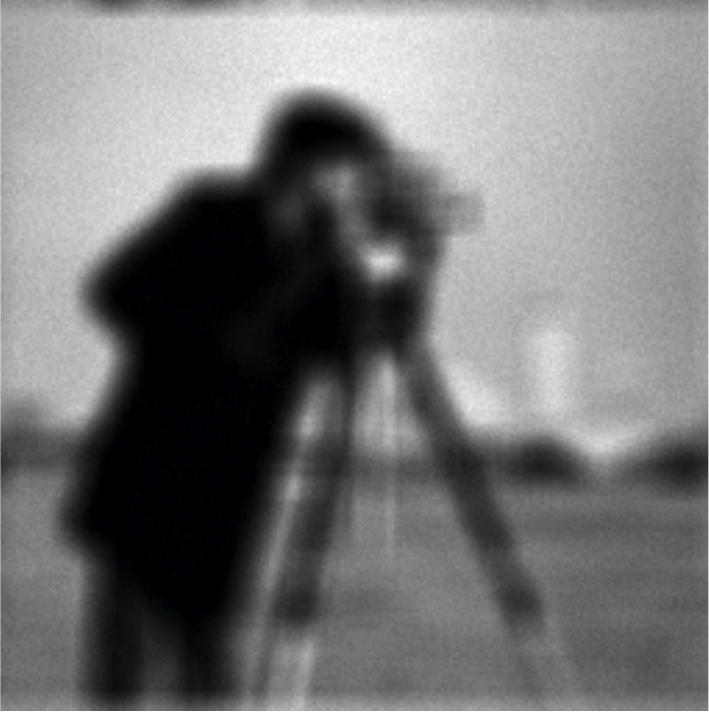}  &
\includegraphics[width=28mm, height=28mm]{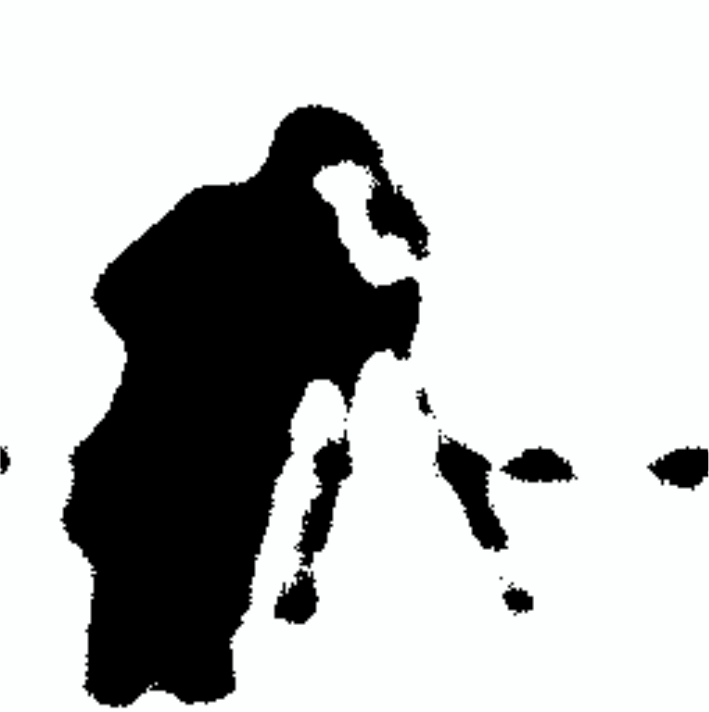}  &
\includegraphics[width=28mm, height=28mm]{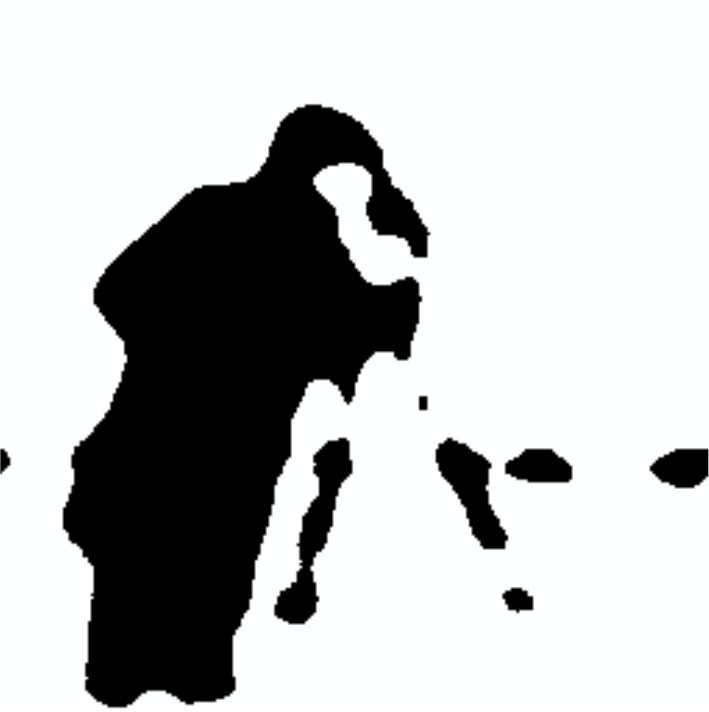}  &
\includegraphics[width=28mm, height=28mm]{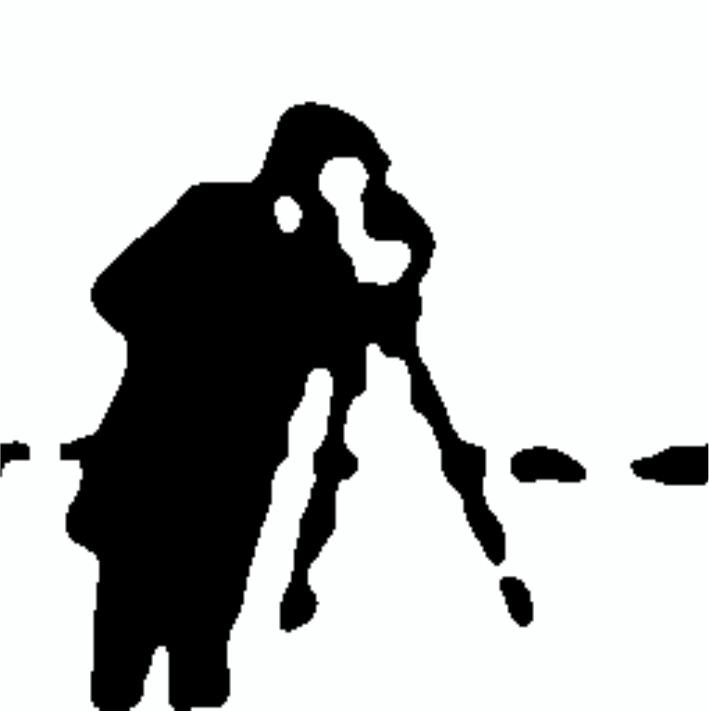}  &
\includegraphics[width=28mm, height=28mm]{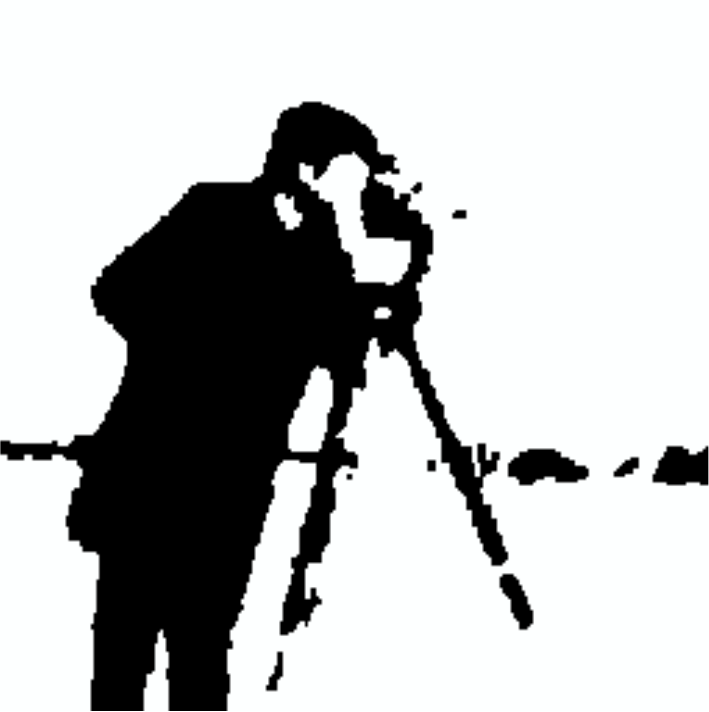} \\
{\small (A1) } &{\small (B1) \cite{YBTBmul} }  & {\small (C1) \cite{HSHMS} } & 
{\small (D1) \cite{CCZ}  } & {\small (E1) Our  } \\
\includegraphics[width=28mm, height=32mm]{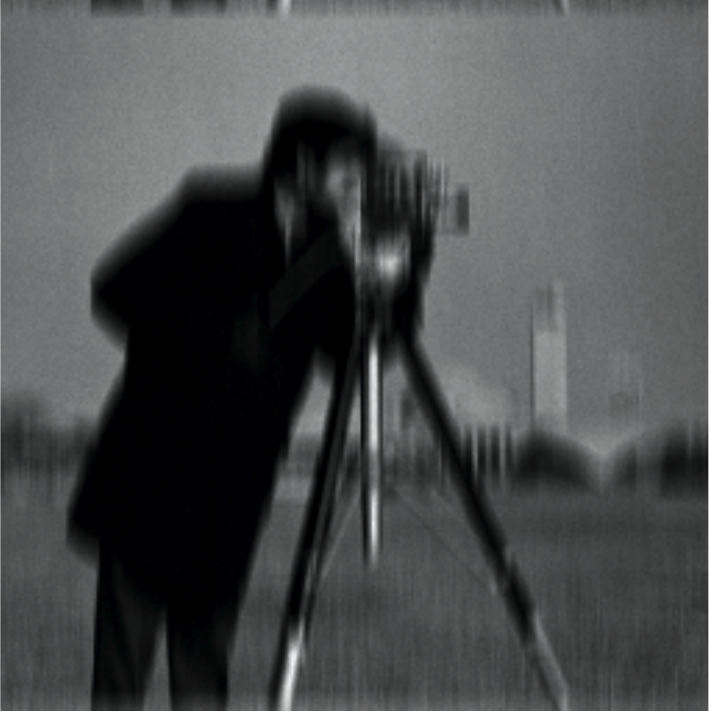}  &
\includegraphics[width=28mm, height=28mm]{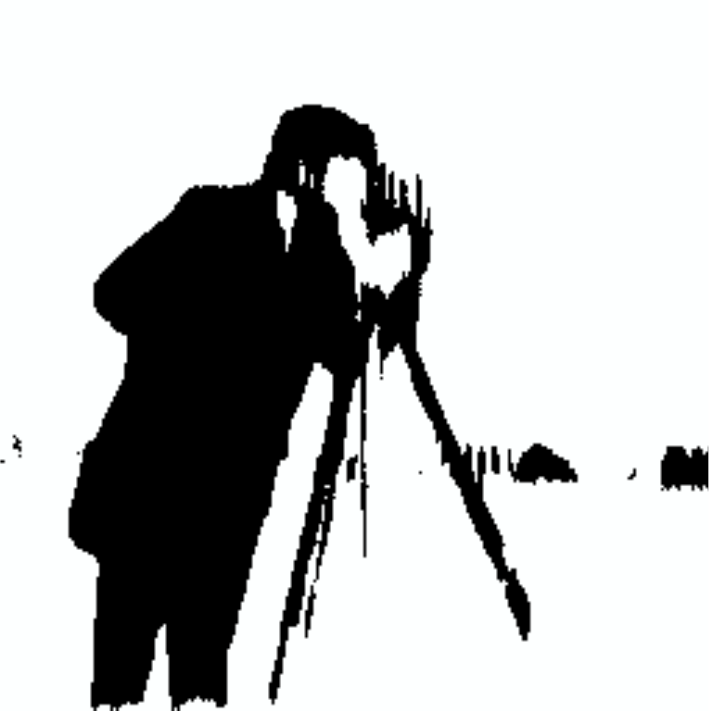}  &
\includegraphics[width=28mm, height=28mm]{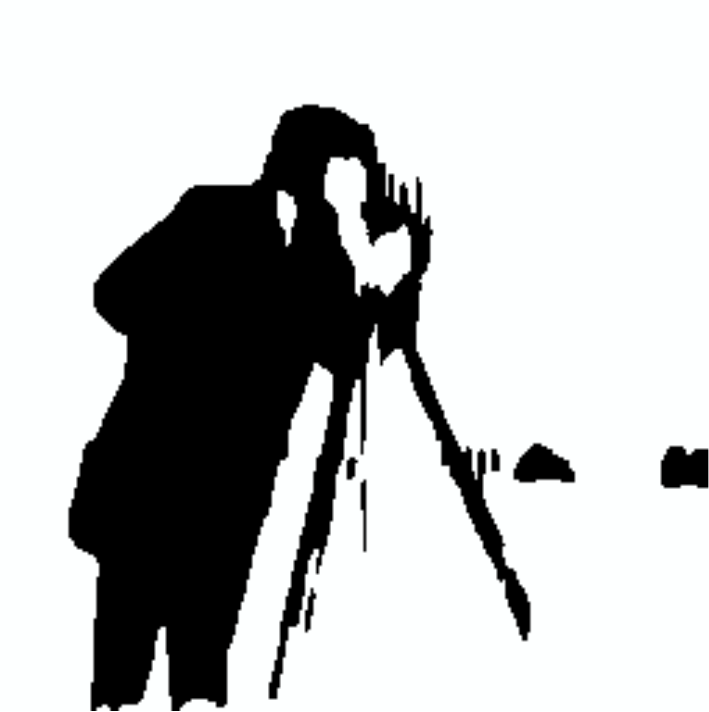}  &
\includegraphics[width=28mm, height=28mm]{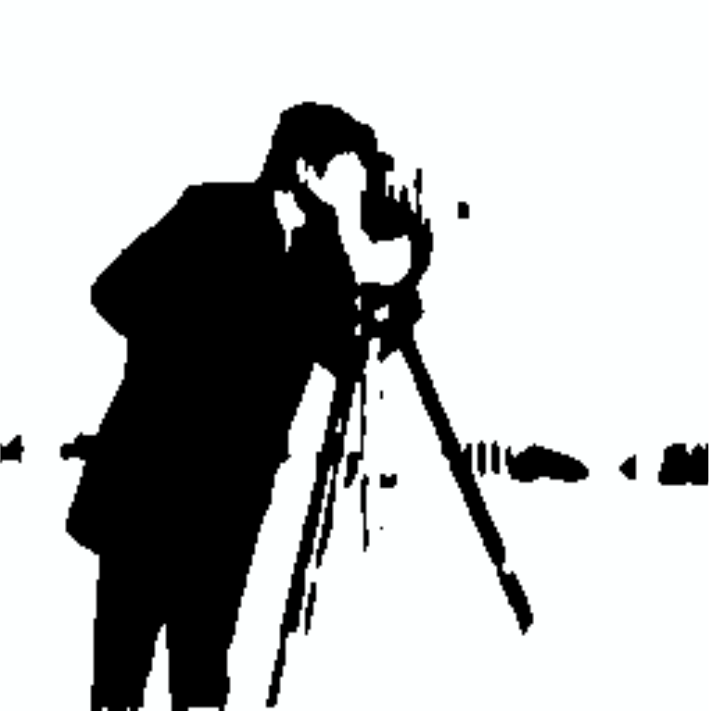}  &
\includegraphics[width=28mm, height=28mm]{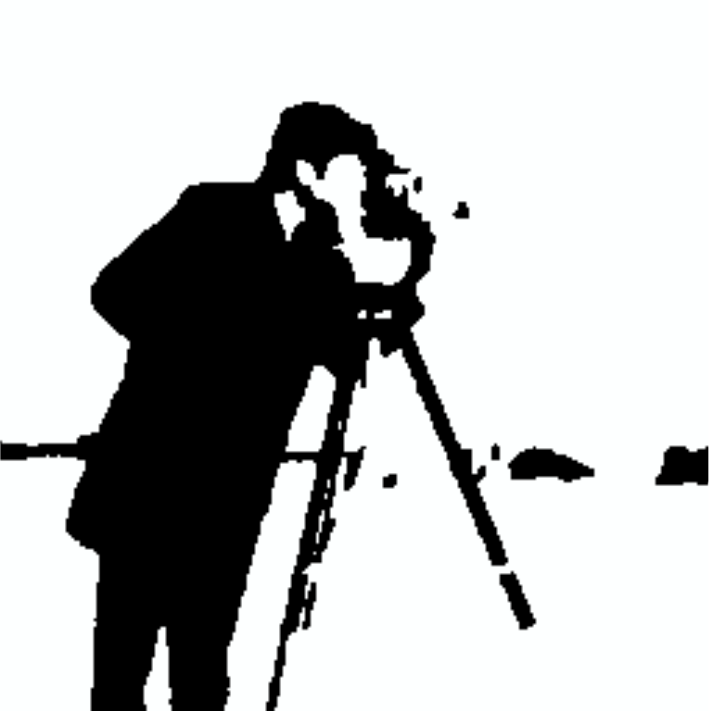} \\
{\small (A2)} &{\small (B2) \cite{YBTBmul}}  & {\small (C2) \cite{HSHMS} } & 
{\small (D2) \cite{CCZ} } & {\small (E2) Our  } \\
\includegraphics[width=28mm, height=33mm]{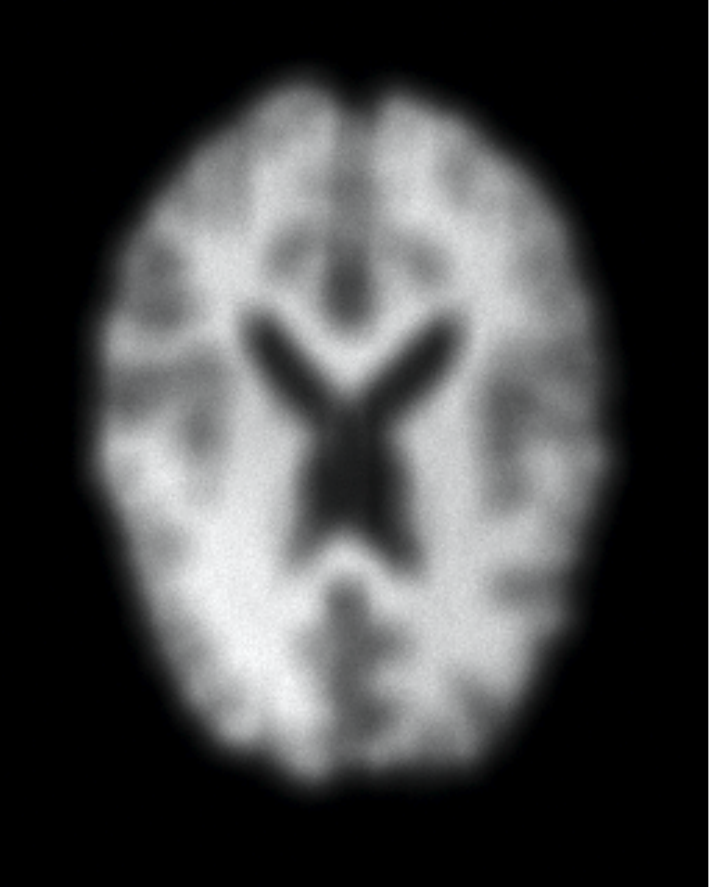}  &
\includegraphics[width=28mm, height=33mm]{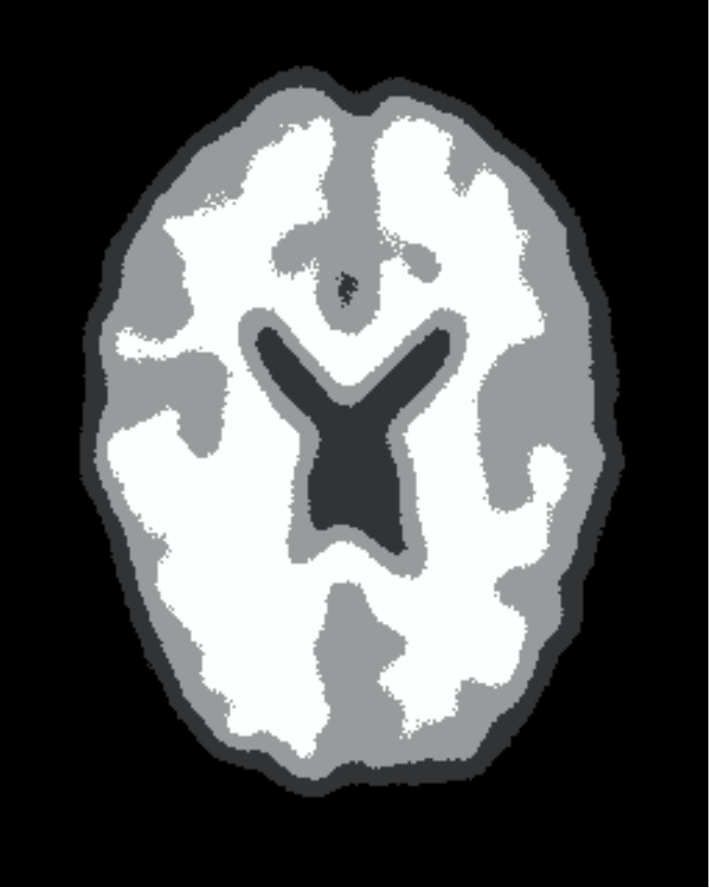}  &
\includegraphics[width=28mm, height=33mm]{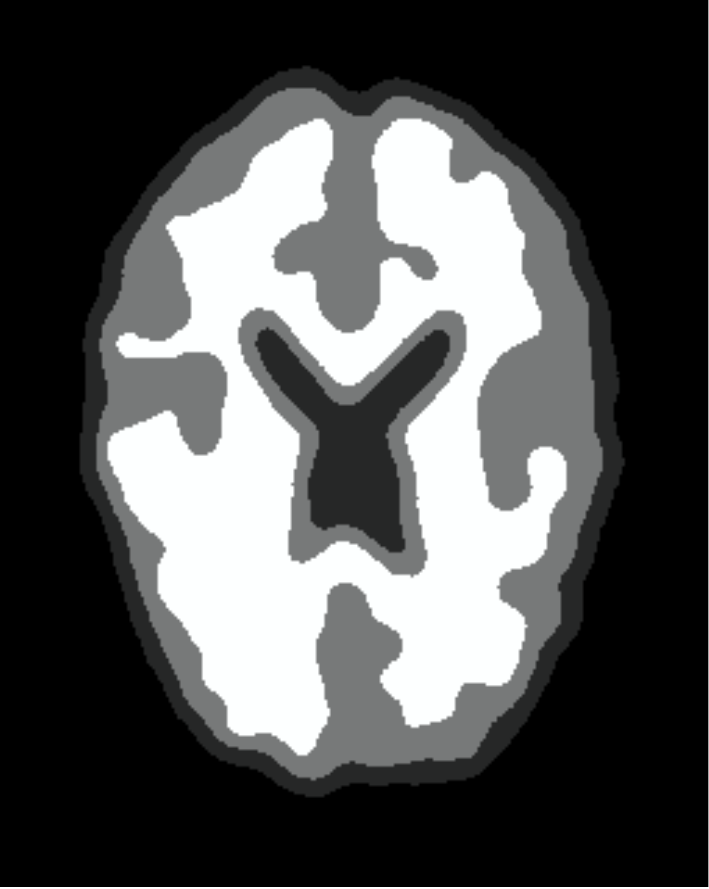}  &
\includegraphics[width=28mm, height=33mm]{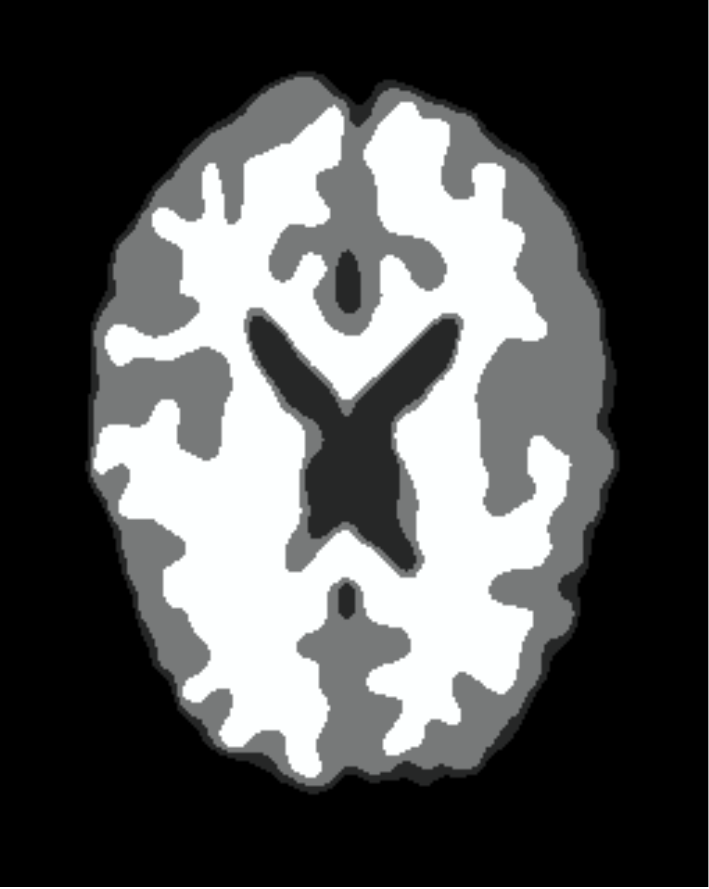}  &
\includegraphics[width=28mm, height=33mm]{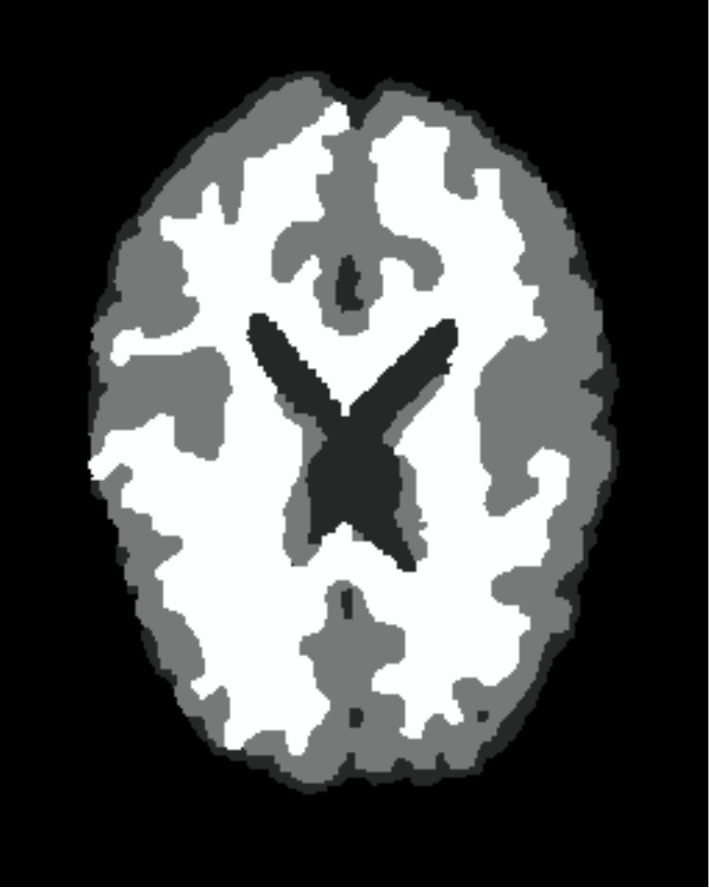} \\
{\small (A3)} &{\small (B3) \cite{YBTBmul} }  & {\small (C3) \cite{HSHMS} } & 
{\small (D3) \cite{CCZ} } & {\small (E3) Our  } \\
\includegraphics[width=28mm, height=33mm]{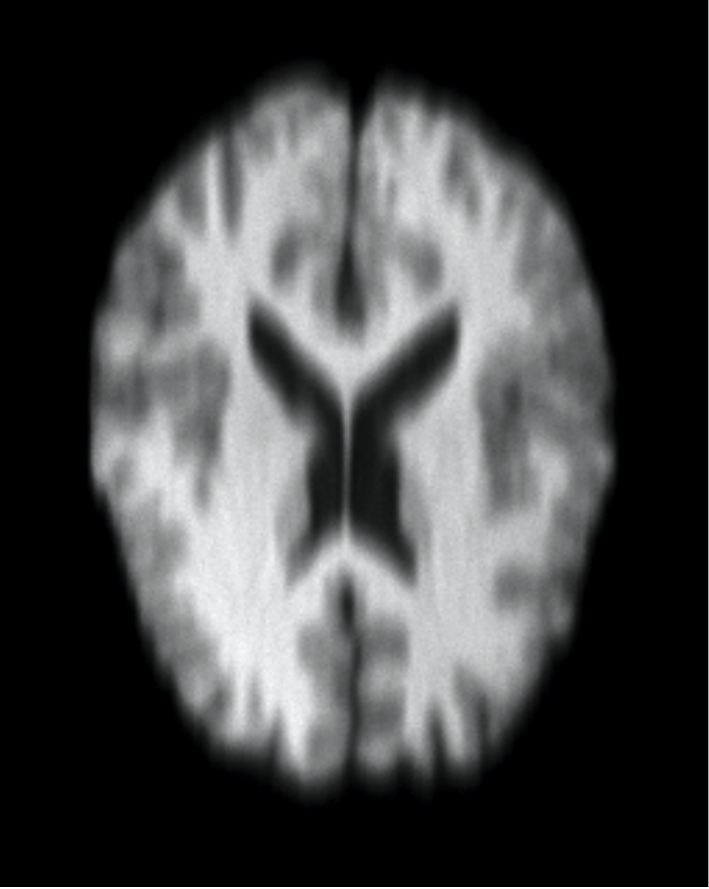}  &
\includegraphics[width=28mm, height=33mm]{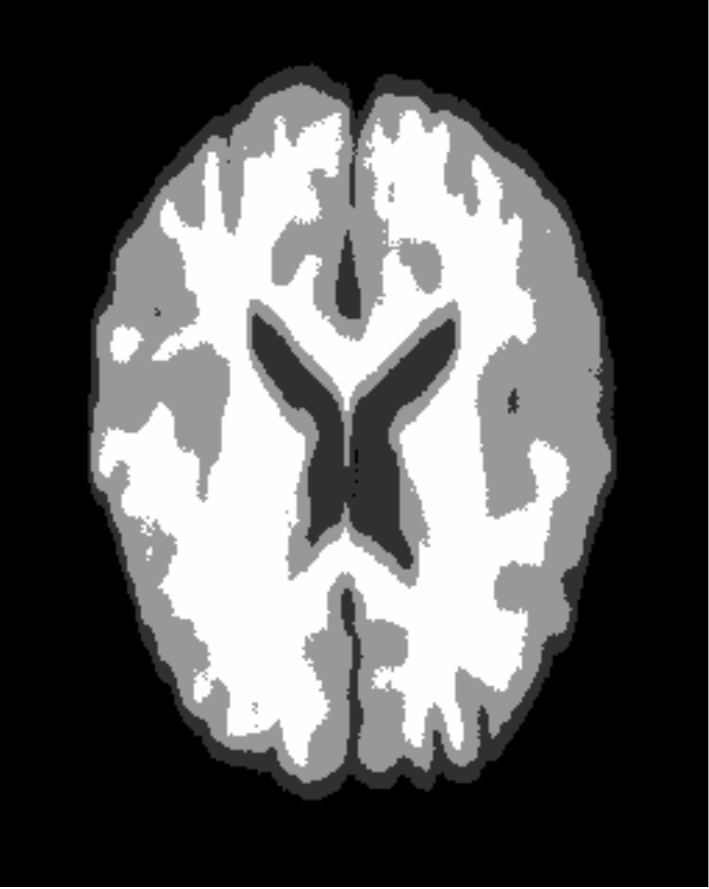}  &
\includegraphics[width=28mm, height=33mm]{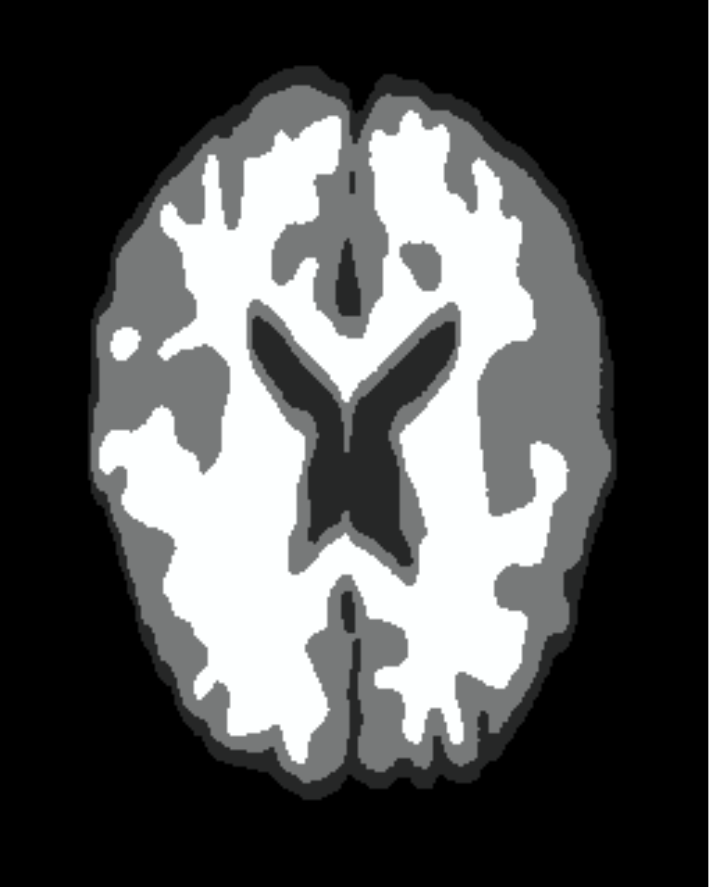}  &
\includegraphics[width=28mm, height=33mm]{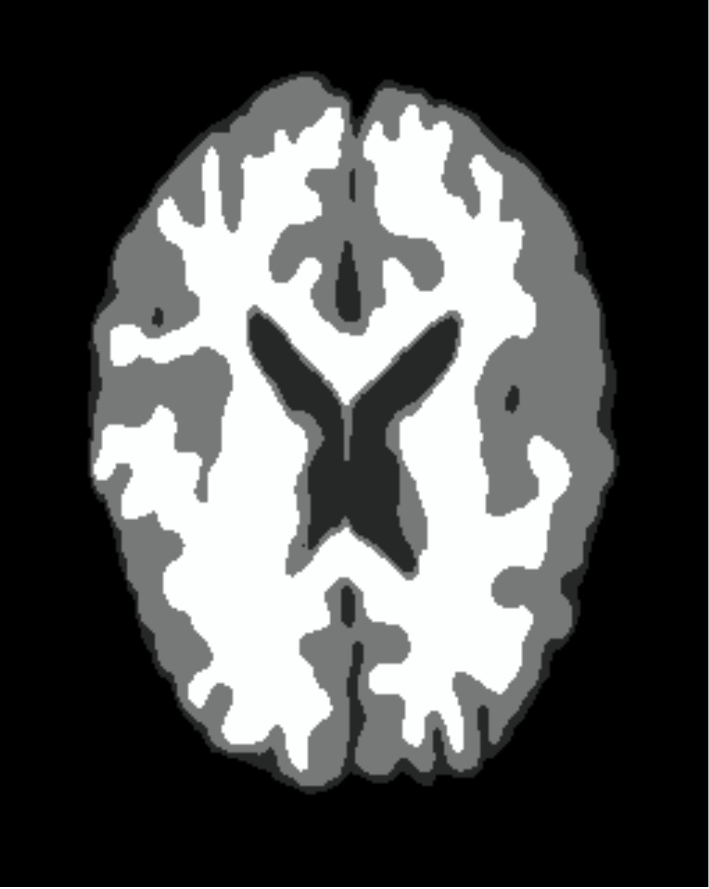}  &
\includegraphics[width=28mm, height=33mm]{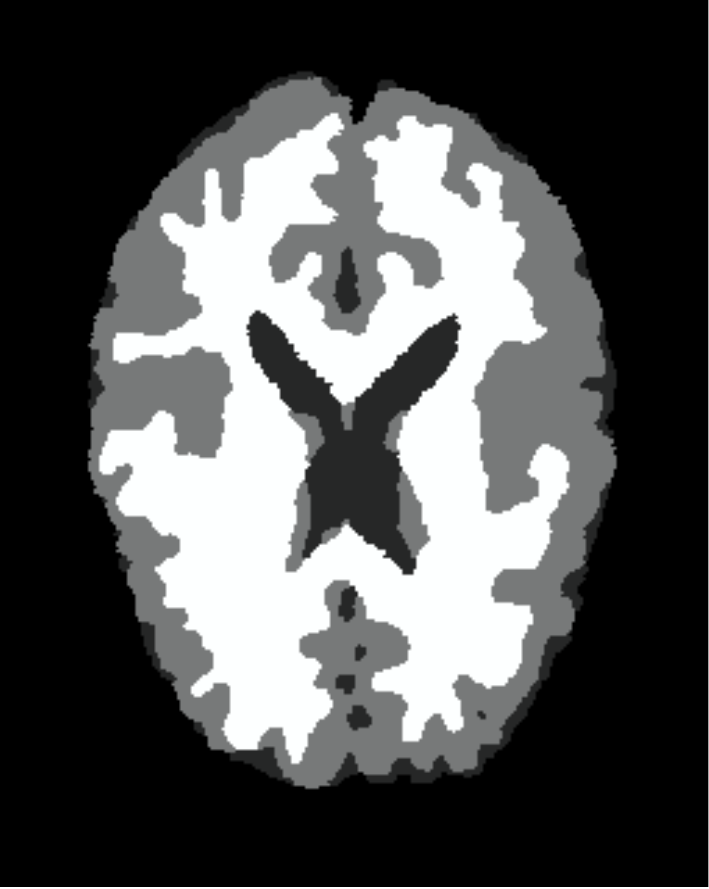} \\
{\small (A4)} &{\small (B4) \cite{YBTBmul}}  & {\small (C4) \cite{HSHMS} } & 
{\small (D4) \cite{CCZ} } & {\small (E4) Our } 
\end{tabular}
\end{center}
\caption{Segmentation of real-world blurry images: camera man and MRI brain 
($256\times 256$ and $319 \times 256$). (A1) and (A3): the given images with Gaussion blur; 
(A2) and (A4): the given images with motion blur; Columns two to five: the results of 
methods \cite{YBTBmul,HSHMS,CCZ} and our method, respectively. 
}\label{mutiphase-real-blur}
\end{figure*}

\subsection{Color image segmentation}

{\it Example 4: two-phase rose image.}
Fig. \ref{rose-color}(A1)--(A4) give the original rose image, and the original image 
corrupted by part information removed randomly, Gaussian blur and motion blur, respectively. 
The columns two and three in Fig. \ref{rose-color} give the results of the extended 
method \cite{HSHMS}  and our method, respectively. After the comparison, 
we see that both of the two methods can give good results 
for the original image, see the first row of Fig. \ref{rose-color}; from the second row
of  Fig. \ref{rose-color}, we see that the boundary of the result of the extended 
method \cite{HSHMS} is coarse compared with our result; from rows three and
four of Fig. \ref{rose-color}, we can see that the boundaries of the results of the extended 
method \cite{HSHMS} are over smoothed compared with our results. 
Hence, we have that our model can get better results in segmenting blurry color images, while
the extended method \cite{HSHMS} can not. Moreover, 
from the results of our method, we can see that the results of our method for the 
corrupted images are as good as the result of our method for the original image,
see the third column of Fig. \ref{rose-color}. This really demonstrates the ability of our method
in segmenting images with information lost or blur.

\begin{figure*}[!t]
\begin{center}
\begin{tabular}{cccc}
\includegraphics[width=33mm, height=38mm]{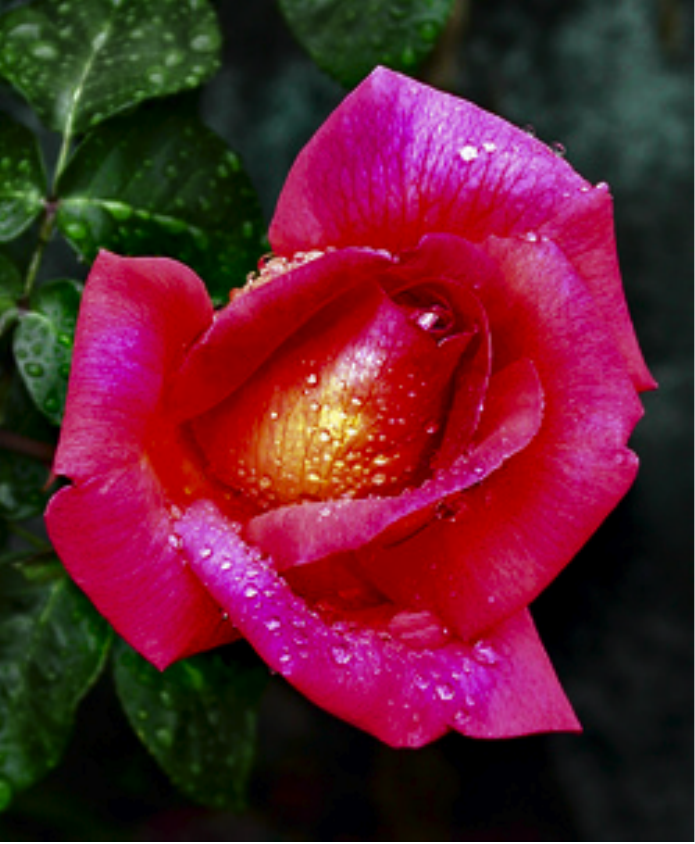}  &
\includegraphics[width=33mm, height=38mm]{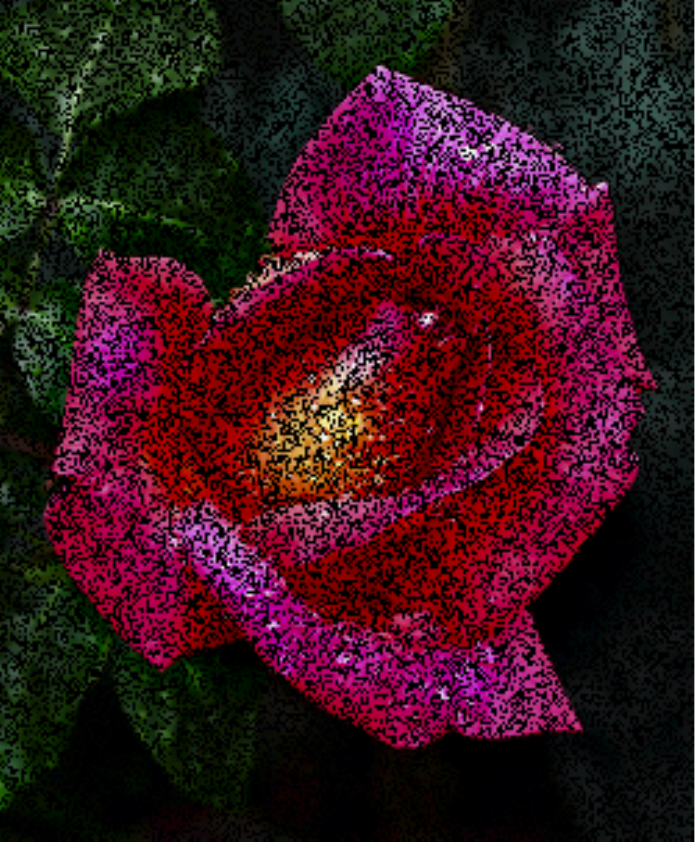}  &
\includegraphics[width=33mm, height=38mm]{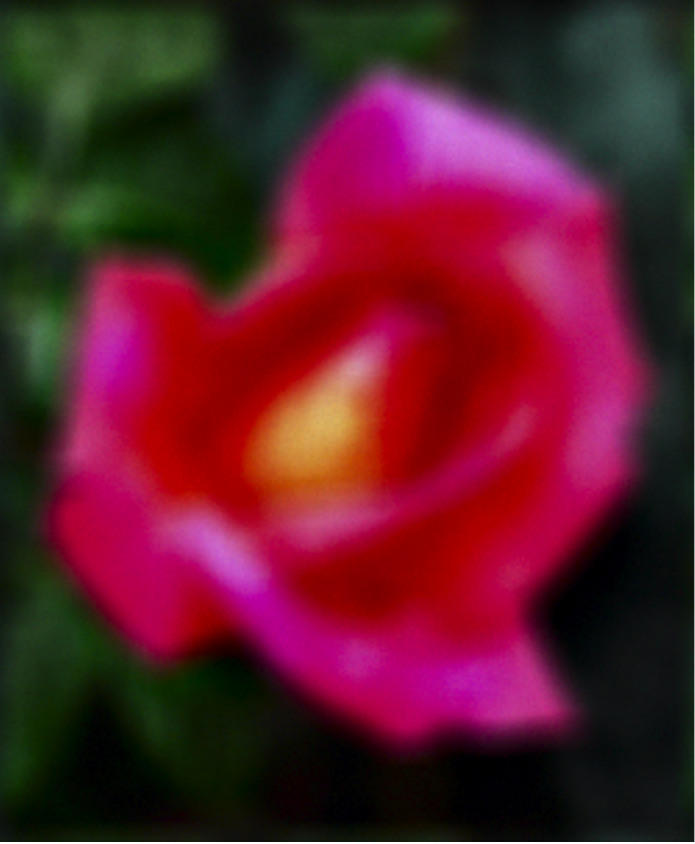}  &
\includegraphics[width=33mm, height=38mm]{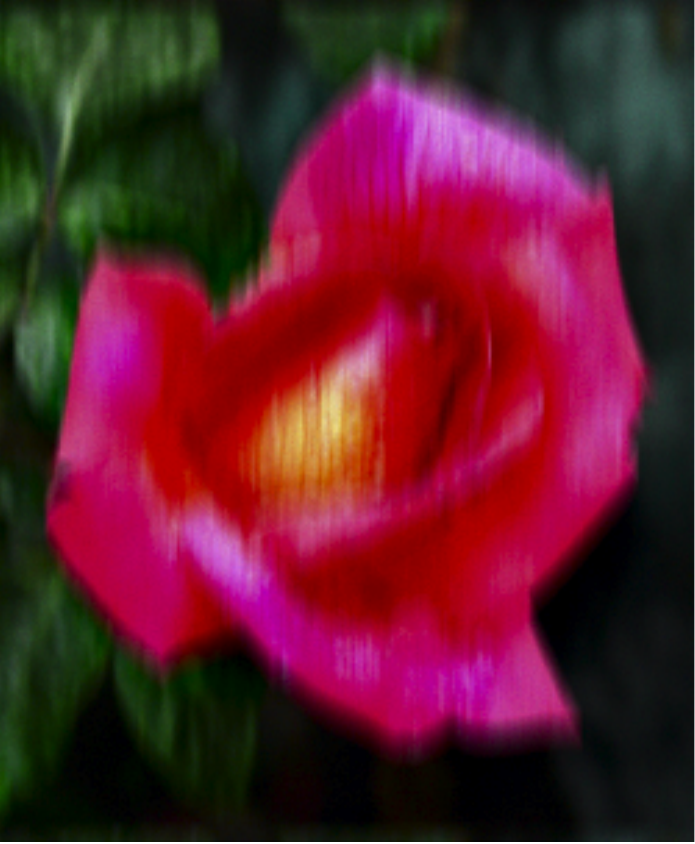}  \\
{\small (A1) } & {\small (A2) } & {\small (A3) }  & {\small (A4) } \\
\includegraphics[width=33mm, height=38mm]{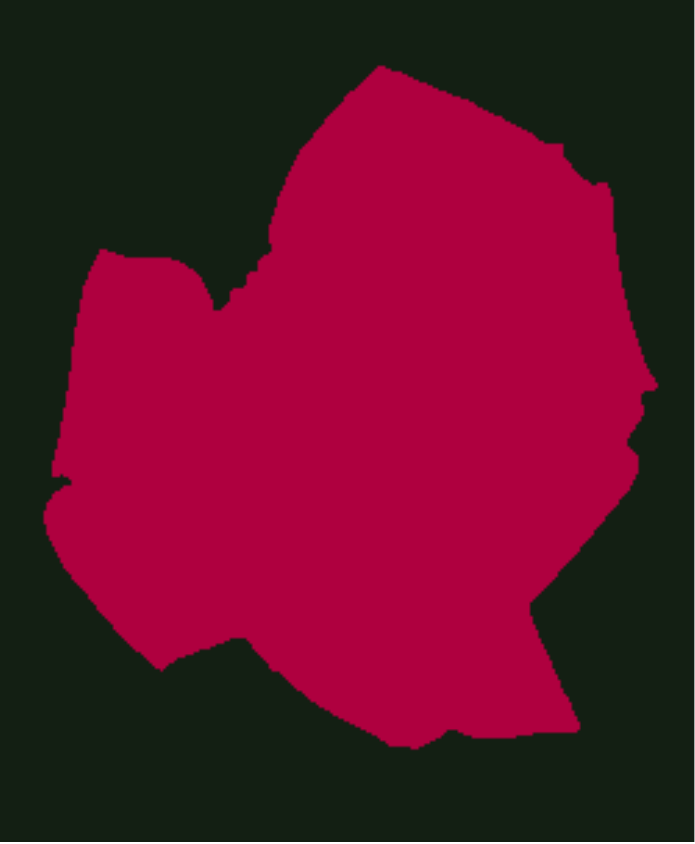} &
\includegraphics[width=33mm, height=38mm]{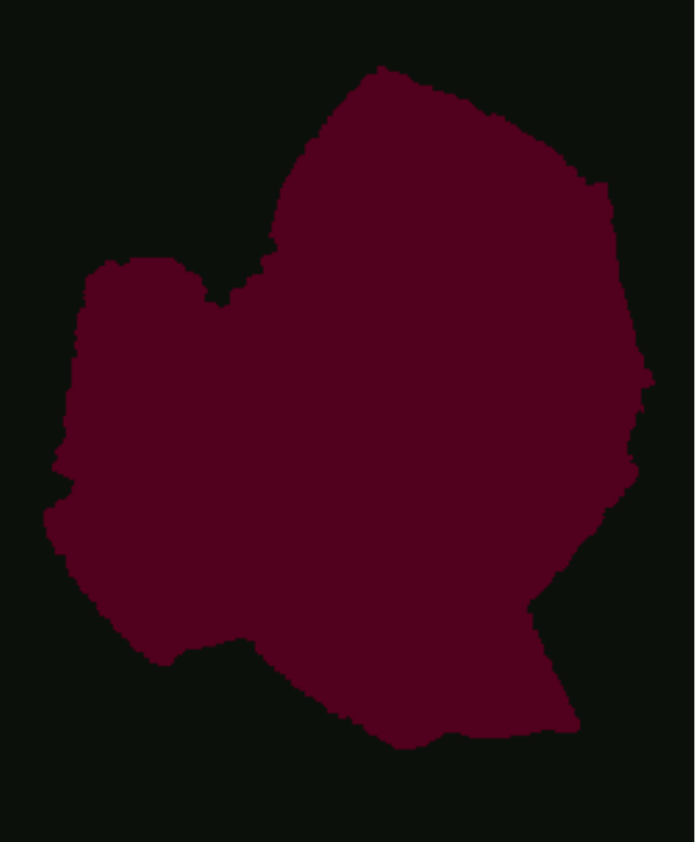} &
\includegraphics[width=33mm, height=38mm]{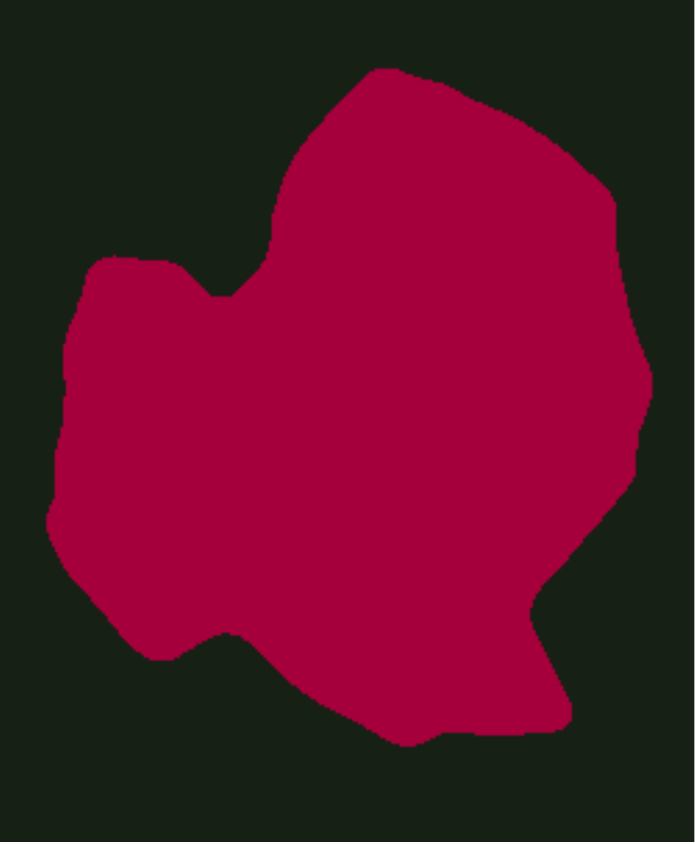} &
\includegraphics[width=33mm, height=38mm]{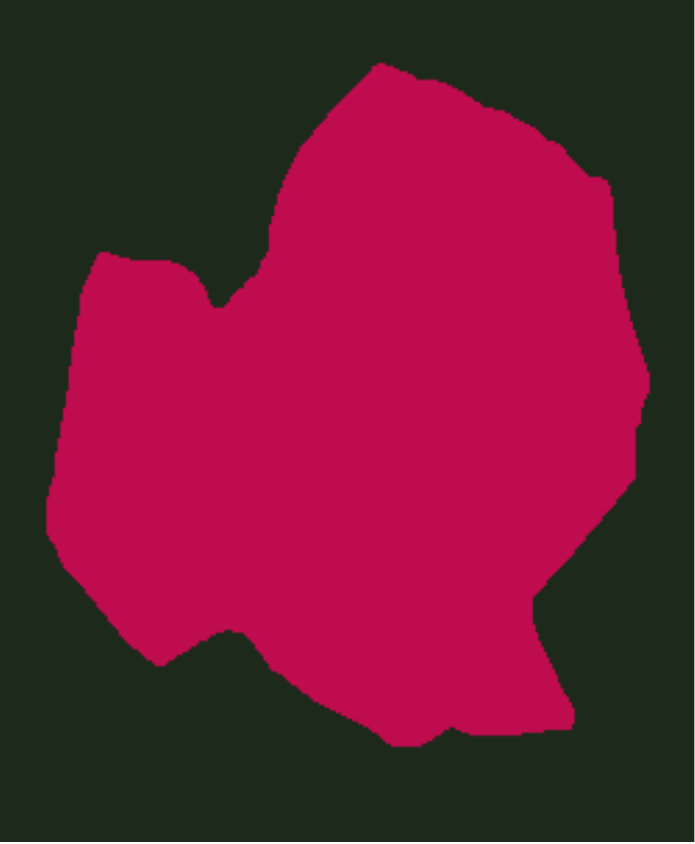} \\
 {\small (B1) Extended \cite{HSHMS}  } &  {\small (B2) Extended  \cite{HSHMS}  }  &
{\small (B3) Extended \cite{HSHMS}  } &  {\small (B4) Extended \cite{HSHMS} } \\
\includegraphics[width=33mm, height=38mm]{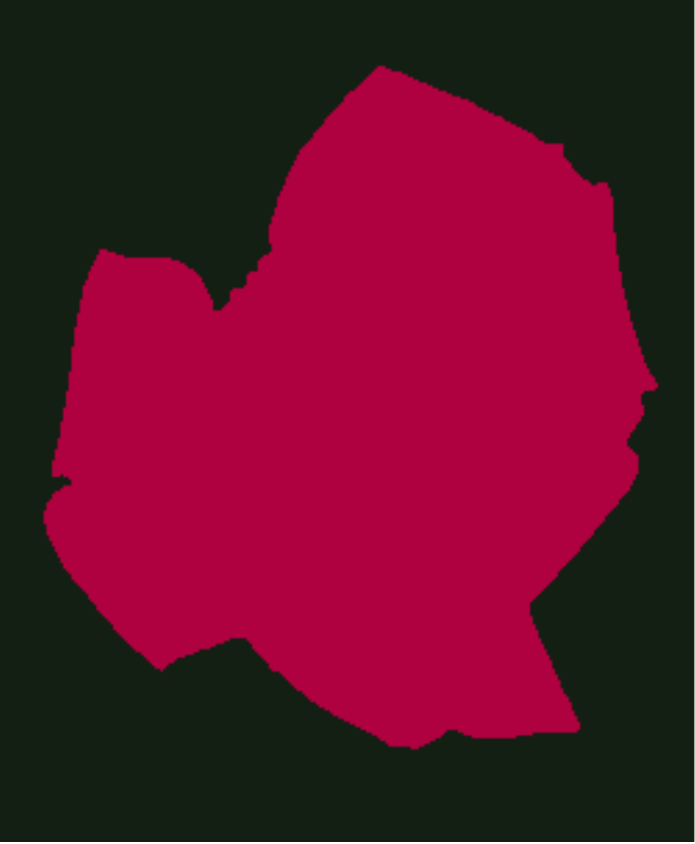} &
\includegraphics[width=33mm, height=38mm]{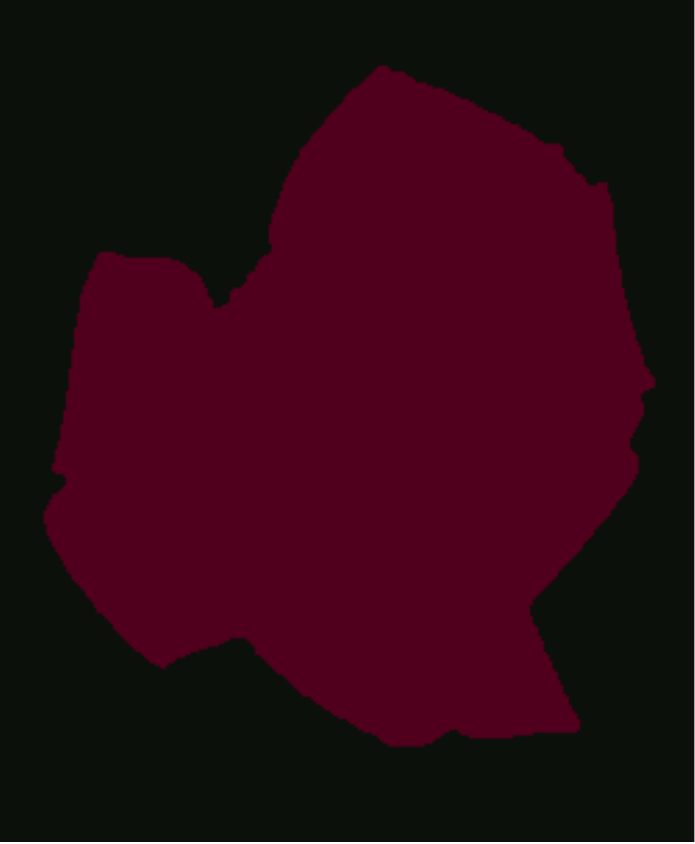} &
\includegraphics[width=33mm, height=38mm]{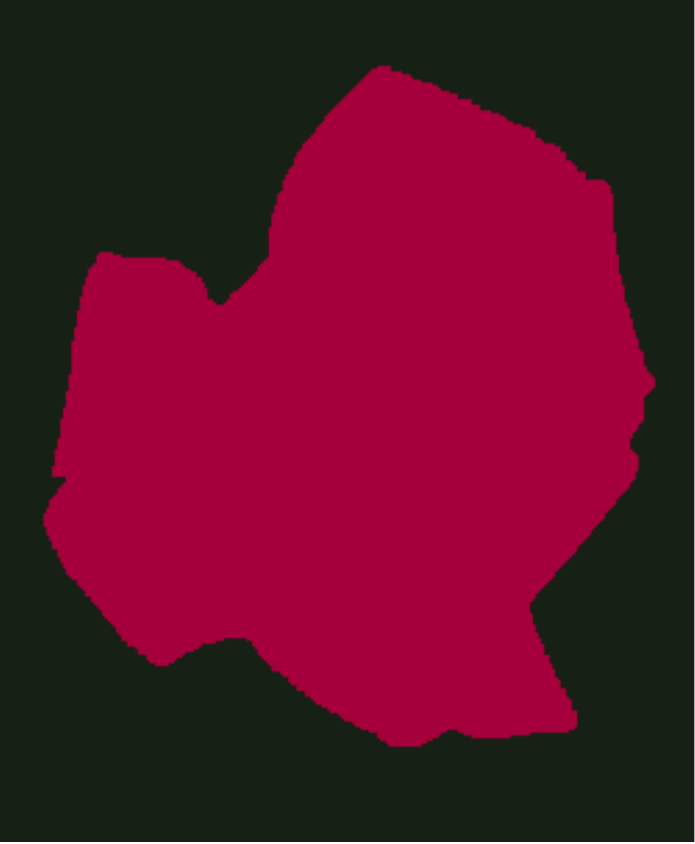} &
\includegraphics[width=33mm, height=38mm]{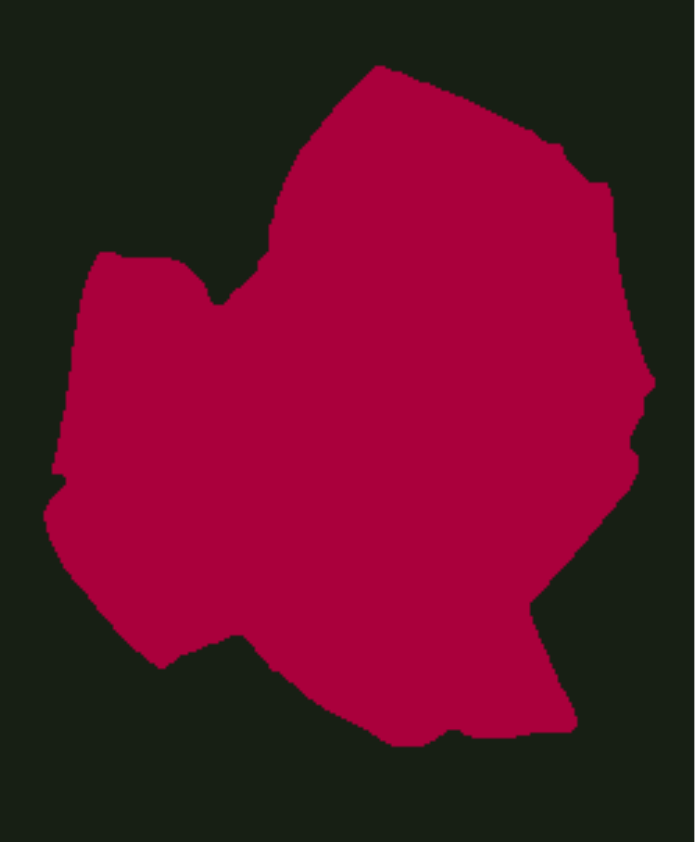} \\
  {\small (C1) Our  }  &  {\small (C2) Our  } & {\small (C3) Our  } & {\small (C4) Our  } \\
\end{tabular}
\end{center}
\caption{Segmentation of rose color image ($303\times 250 \times 3$). Row one: the given
image, and the given images corrupted by $40\%$ information lost, Gaussian blur, and motion 
blur, respectively. Rows two to three: the results of the extended method \cite{HSHMS} 
and our method, respectively. 
}\label{rose-color}
\end{figure*}

{\it Example 5: multiphase images.}
Finally, in order to demonstrate the ability of our method in segmenting color images with information 
lost and blur much more clearly, we test our method in two more multiphase color images, 
i.e. three phases crown image and four phases flowers image, see Fig. \ref{crown-color} 
and Fig. \ref{flowers-color}, respectively. 
Fig. \ref{crown-color}(A1) (Fig. \ref{flowers-color}(A1)) is the original crown (flowers) image, 
and Fig. \ref{crown-color}(A2)--(A4) (Fig. \ref{flowers-color}(A2)--(A4)) are the images corrupted 
by part information removed randomly, Gaussian blur and motion blur, respectively.
Obviously, the extended method \cite{HSHMS}
fails for segmenting the images with information lost, see Fig. \ref{crown-color}(B2) and 
Fig. \ref{flowers-color}(B2).  Moreover, from Fig. \ref{crown-color}(C2) and (D2) and 
Fig. \ref{flowers-color}(C2) and (D2), we can see that the extended method \cite{HSHMS}
gives over smoothed results for blurry color images. On the contrary, from the third column of 
Fig. \ref{crown-color} and Fig. \ref{flowers-color}, the results of our method, we see that all the 
results of our method are very good. Moreover, the results of our method for the corrupted 
images are as good as the results of our method for the original crown and flowers images.

\begin{figure*}[!htb]
\begin{center}
\begin{tabular}{ccc}
\includegraphics[width=50mm, height=35mm]{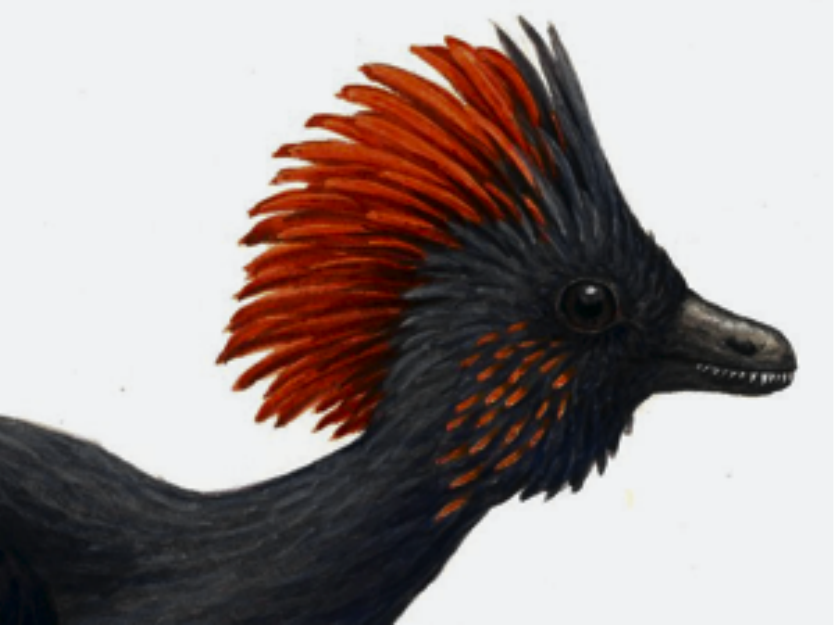}  &
\includegraphics[width=50mm, height=35mm]{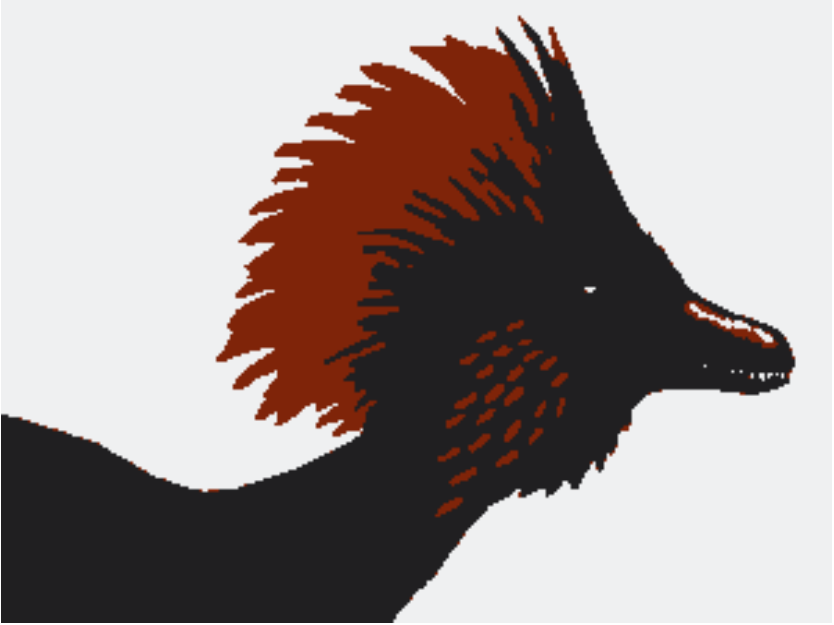} &
\includegraphics[width=50mm, height=35mm]{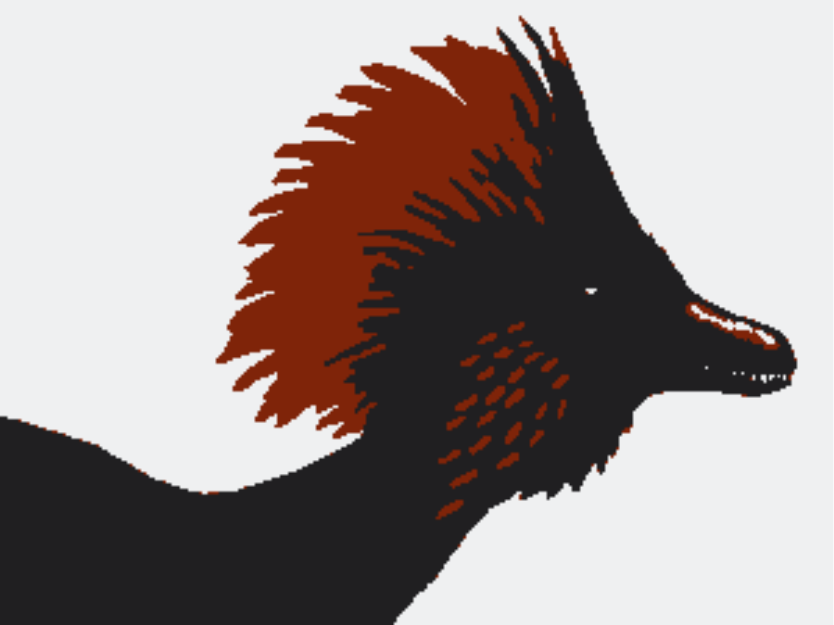} \\
{\small (A1) } & {\small (B1) Extended \cite{HSHMS}  } & {\small (C1) Our  } \\
\includegraphics[width=50mm, height=35mm]{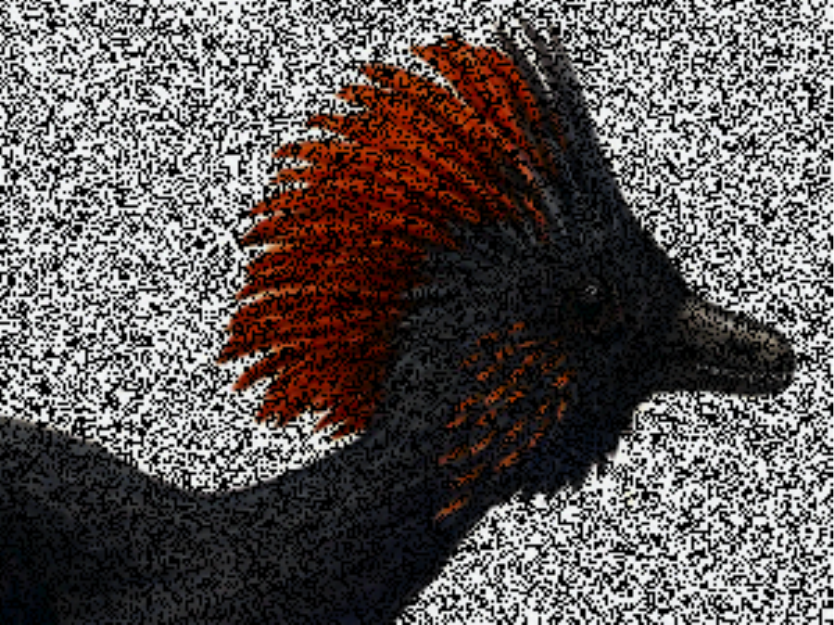}  &
\includegraphics[width=50mm, height=35mm]{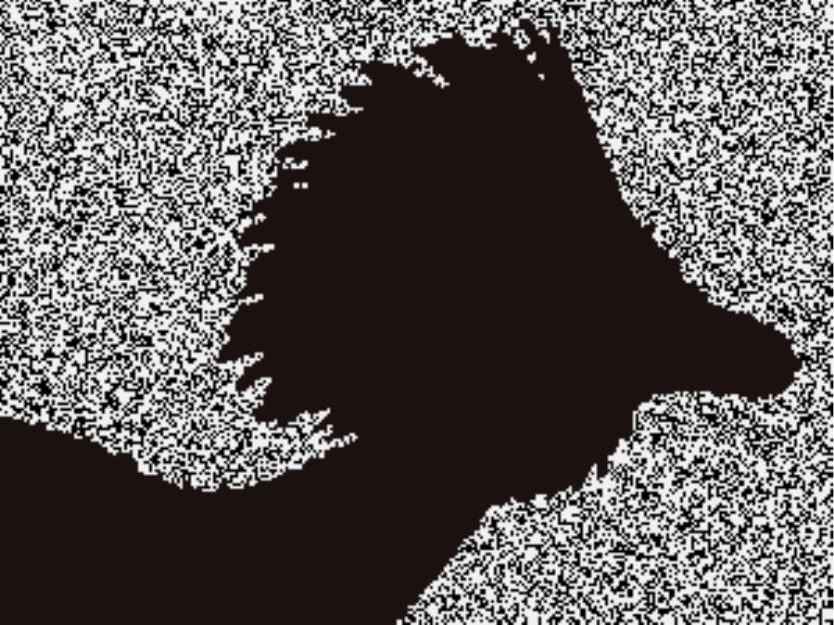} &
\includegraphics[width=50mm, height=35mm]{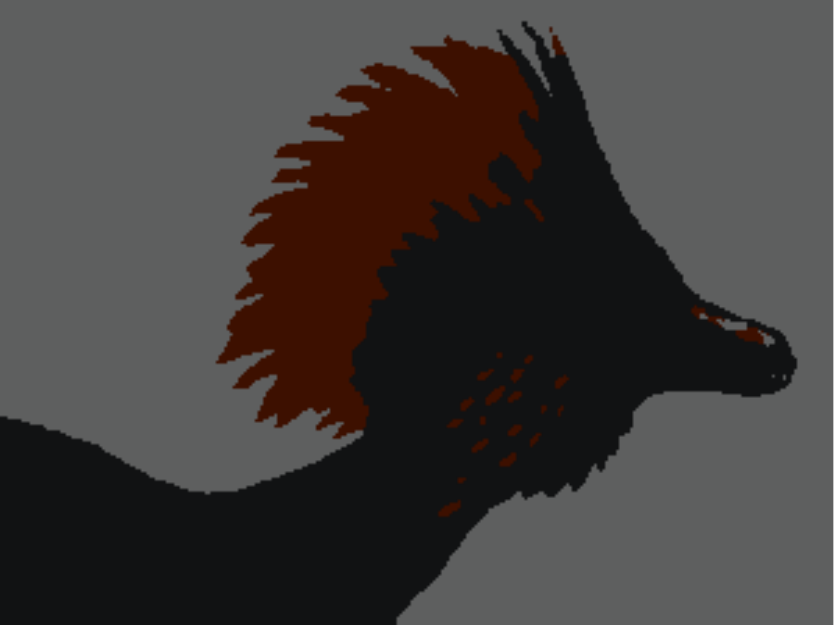} \\
{\small (A2) } & {\small (B2) Extended \cite{HSHMS} } & {\small (C2) Our  } \\
\includegraphics[width=50mm, height=35mm]{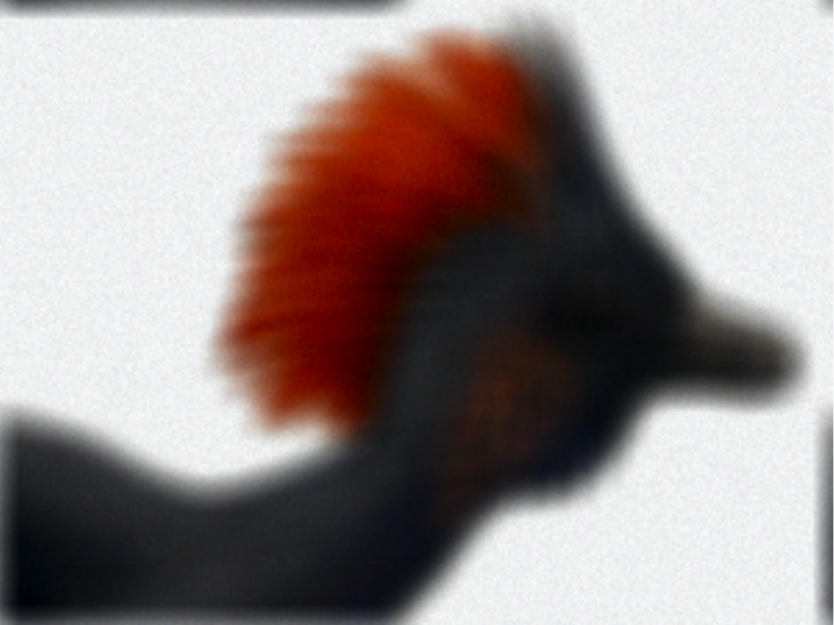}  &
\includegraphics[width=50mm, height=35mm]{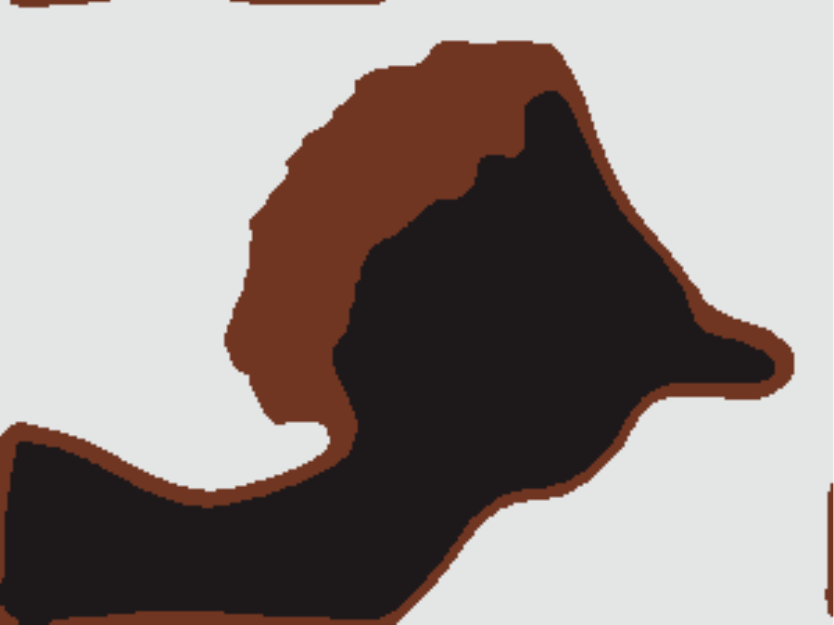} &
\includegraphics[width=50mm, height=35mm]{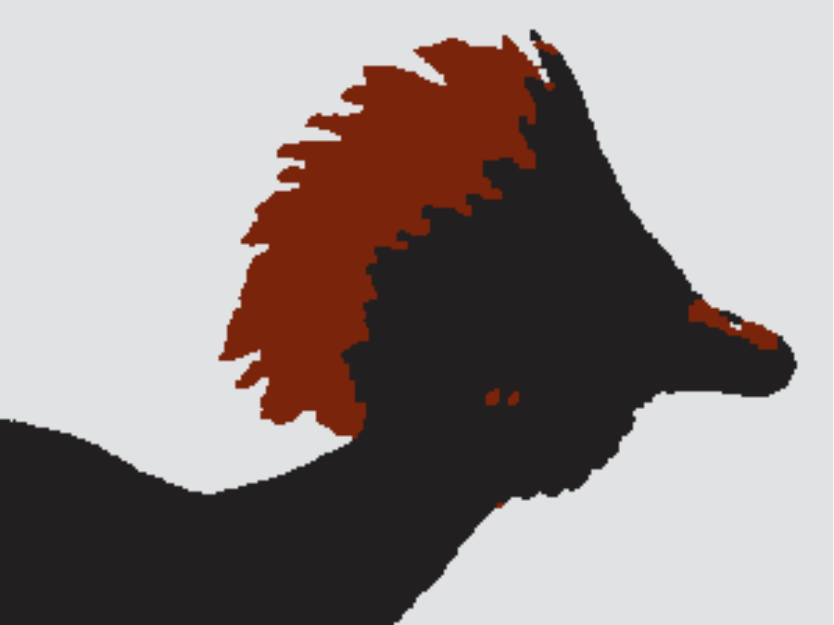} \\
{\small (A3) } & {\small (B3) Extended \cite{HSHMS}  } & {\small (C3) Our  } \\
\includegraphics[width=50mm, height=35mm]{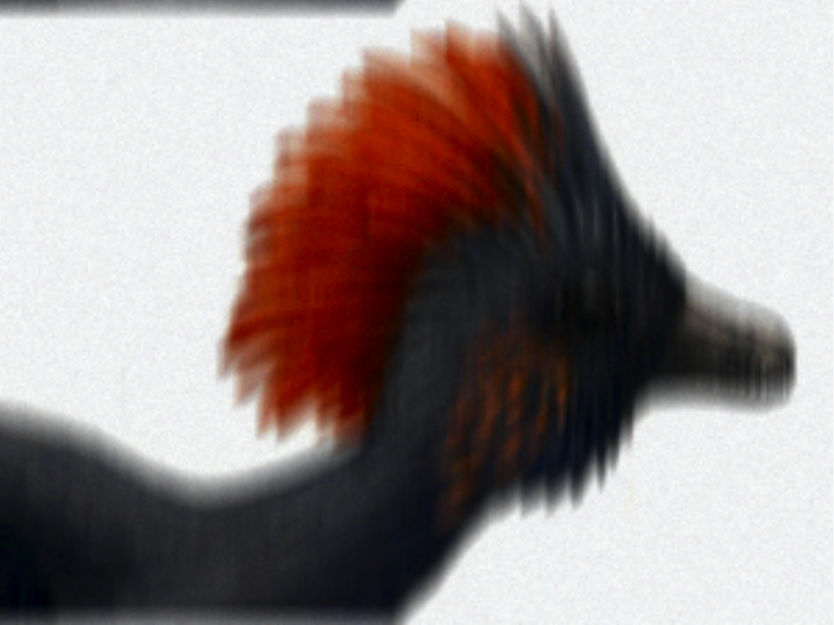}  &
\includegraphics[width=50mm, height=35mm]{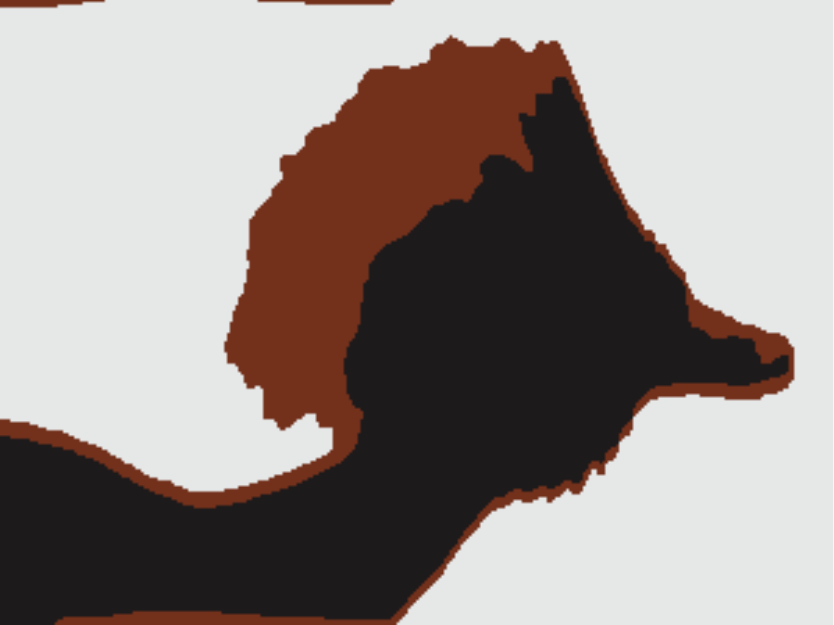} &
\includegraphics[width=50mm, height=35mm]{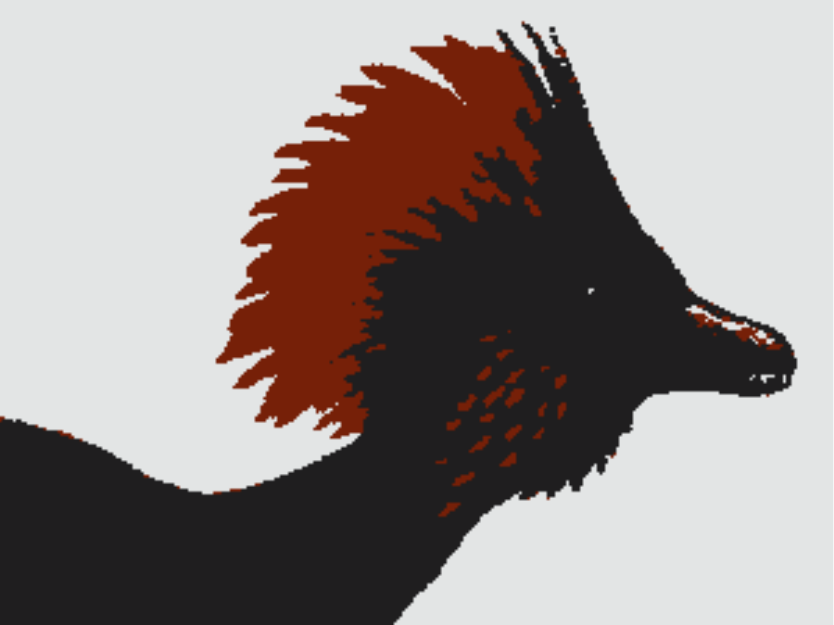} \\
{\small (A4) } & {\small (B4) Extended  \cite{HSHMS}  } & {\small (C4) Our  } \\
\end{tabular}
\end{center}
\caption{Segmentation of crown color image ($225\times 300 \times 3$). Column one: the given
image, and the given images corrupted by $40\%$ information lost, Gaussian blur, and motion 
blur, respectively. Columns two to three: the results of the extended method \cite{HSHMS}
and our method, respectively. 
}\label{crown-color}
\end{figure*}

\begin{figure*}[!htb]
\begin{center}
\begin{tabular}{ccc}
\includegraphics[width=50mm, height=35mm]{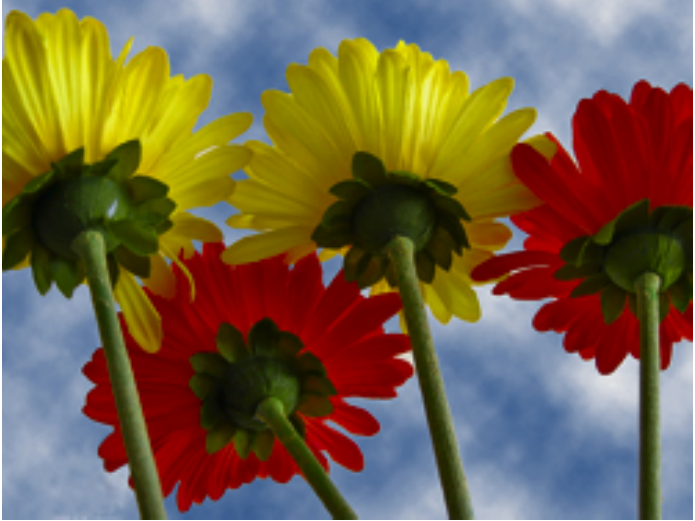}  &
\includegraphics[width=50mm, height=35mm]{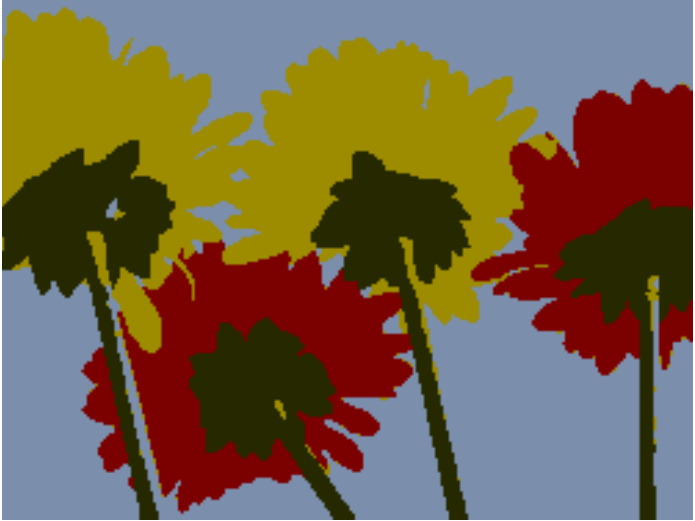} &
\includegraphics[width=50mm, height=35mm]{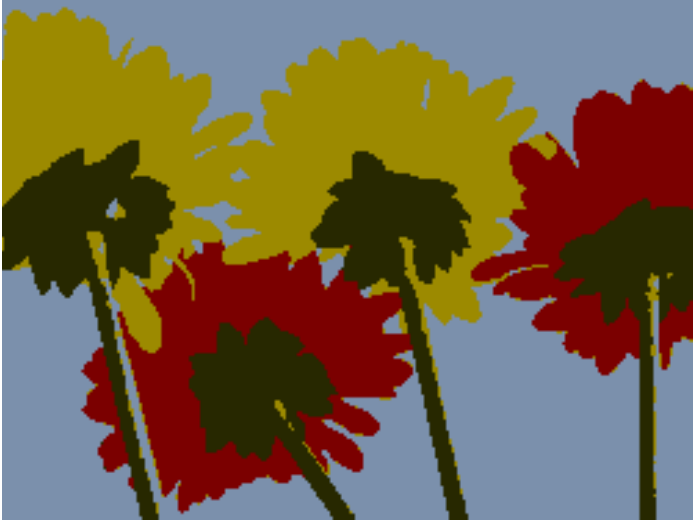} \\
{\small (A1) } & {\small (B1) Extended \cite{HSHMS} } & {\small (C1) Our  } \\
\includegraphics[width=50mm, height=35mm]{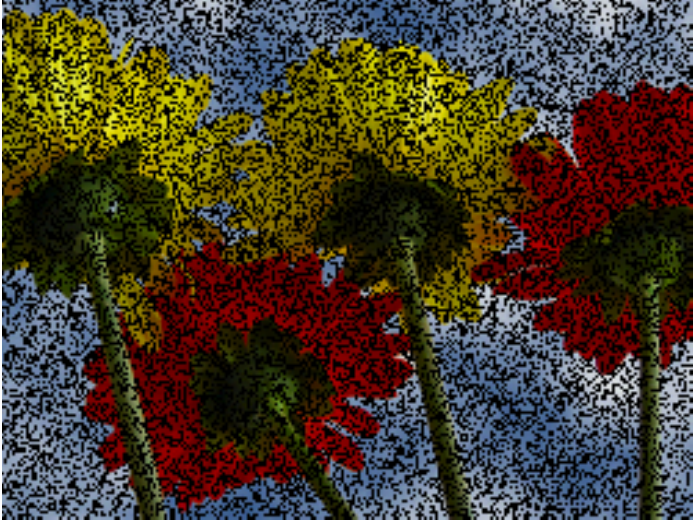}  &
\includegraphics[width=50mm, height=35mm]{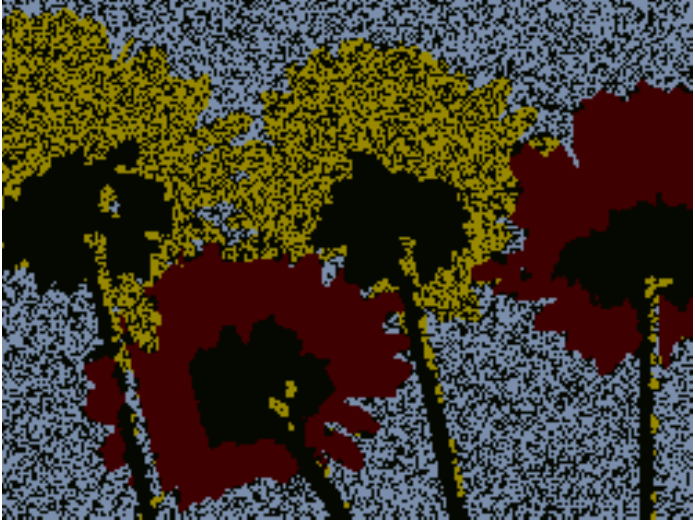} &
\includegraphics[width=50mm, height=35mm]{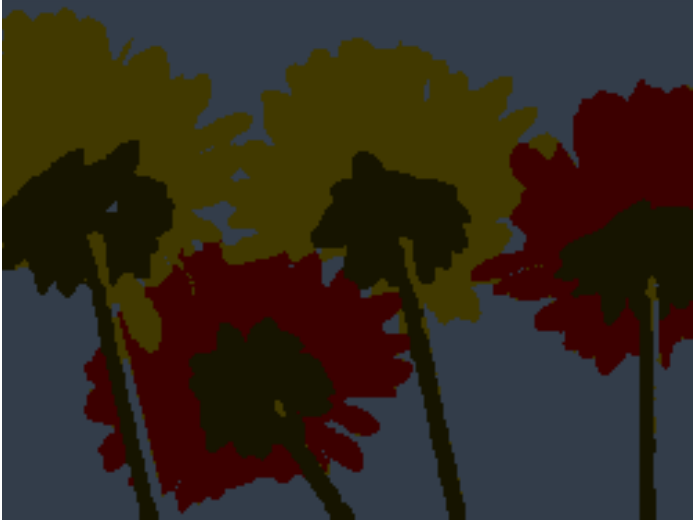} \\
{\small (A2) } & {\small (B2) Extended  \cite{HSHMS} } & {\small (C2) Our  } \\
\includegraphics[width=50mm, height=35mm]{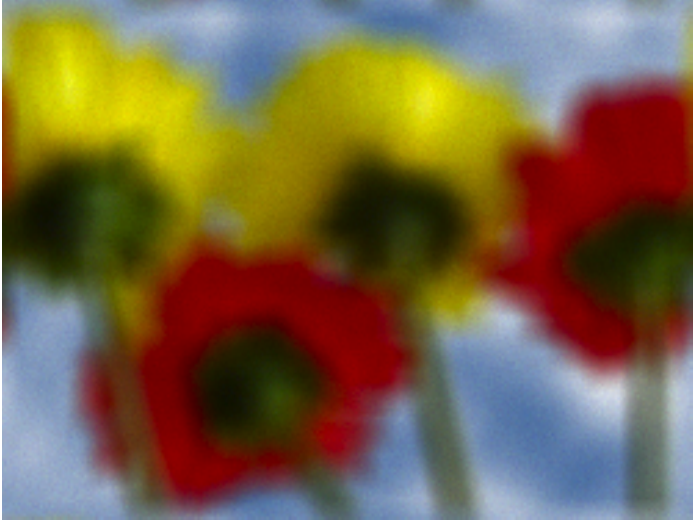}  &
\includegraphics[width=50mm, height=35mm]{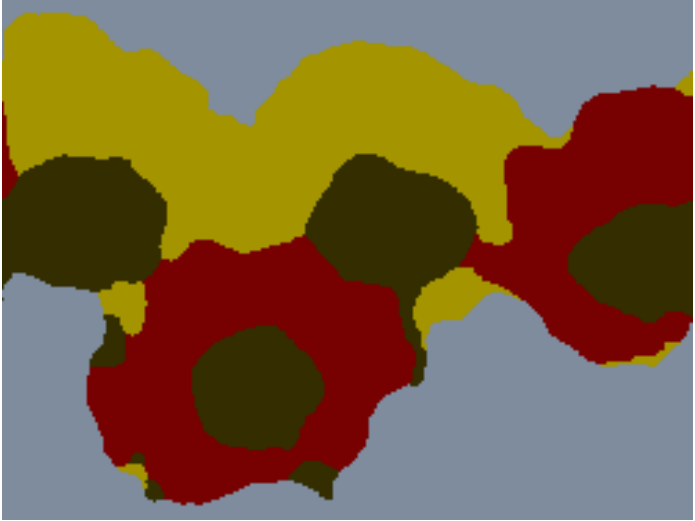} &
\includegraphics[width=50mm, height=35mm]{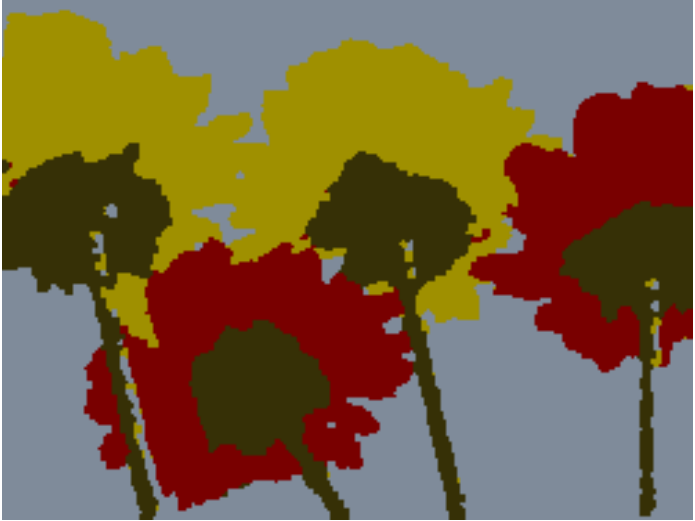} \\
{\small (A3) } & {\small (B3) Extended \cite{HSHMS}} & {\small (C3) Our  } \\
\includegraphics[width=50mm, height=35mm]{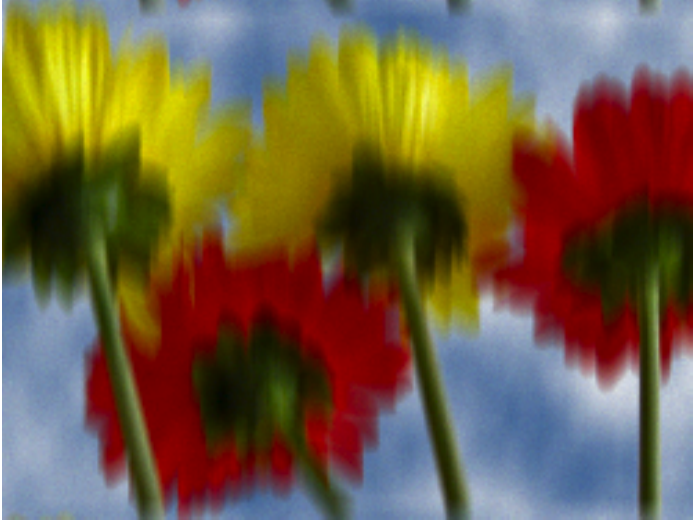}  &
\includegraphics[width=50mm, height=35mm]{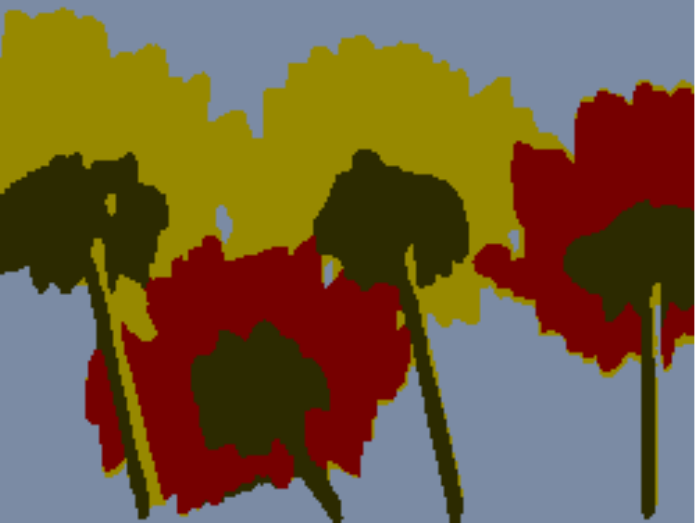} &
\includegraphics[width=50mm, height=35mm]{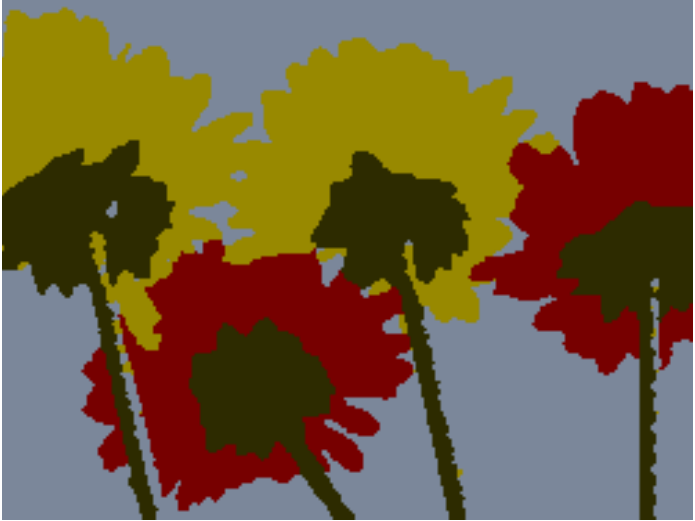} \\
{\small (A4) } & {\small (B4) Extended  \cite{HSHMS} } & {\small (C4) Our  } \\
\end{tabular}
\end{center}
\caption{Segmentation of flowers color image ($188\times 250 \times 3$).  Column one: the given
image, and the given images corrupted by $40\%$ information lost, Gaussian blur, and motion 
blur, respectively. Columns two to three: the results of the extended method \cite{HSHMS}
and our method, respectively. 
}\label{flowers-color}
\end{figure*}

\section{Conclusions}\label{sec:conclusions}
In this paper, we proposed a new multiphase segmentation model by combining 
the image restoration approaches with the variational image segmentation model.
Utilizing image restoration aspects, the proposed segmentation model is very 
effective and robust to tackle noisy images, blurry images, and images with 
missing pixels. In particular, the piecewise constant Mumford-Shah model
was extended using our strategy so that it can process blurry images.
Moreover, our model can also be extended to process 
vector-valued images for example the color images. It can be solved efficiently using 
the AM algorithm, and we prove its convergence property 
under mild condition. 
Experiments on many kinds of synthetic and real-world images demonstrate that our method 
gives better segmentation results in comparison with others state-of-the-art segmentation 
methods especially in blurry images and images with missing pixels values.
In our future work, we will test our model in images corrupted by other types of noise, 
for example the Poisson noise and the impulsive noise, etc.
 \\
 
\noindent{\bf Acknowledgement:}
The main part of this work has been done in the University of Kaiserslautern, Germany. 
Thanks to Prof. Gabriele Steidl (University of Kaiserslautern) for fruitful
discussions and invaluable comments.

\section*{Appendix}
\subsection*{Proof of Theorem \ref{thm:existence} }
\begin{proof}
Fix $c_i = c_i^*$ and $u_i = u_i^*$. Note that $L^2(\Omega)$ is a reflective 
Banach space, and $E(g, c_i^*, u_i^*)$ in \eqref{our-model-general} is convex and lower 
semicontinuous. Using Proposition 1.2 in \cite{ET}, for the existence of $g$, we just need to 
prove that $E(g, c_i^*, u_i^*)$ is coercive over $L^2(\Omega)$. The coercive of 
$E(g, c_i^*, u_i^*)$ can be given by
\begin{eqnarray*}
\|g\|_2  & = & \|\sum_{i=1}^K(g-c_i)u_i + \sum_{i=1}^K c_iu_i\|_2  \\ 
&\le & \|\sum_{i= 1}^K |g-c_i| u_i^{\frac{1}{2}}\|_2 + \|\sum_{i= 1}^K c_i u_i\|_2 \\
&\le & \sqrt{K} \sqrt{\sum_{i= 1}^K \int_{\Omega}(g-c_i)^2 u_idx} + \|\sum_{i= 1}^K c_i u_i\|_2 \\
&\le & \frac{\sqrt{K}}{\sqrt{\lambda}} \sqrt{E(g, c_i^*, u_i^*)} + \|\sum_{i= 1}^K c_i u_i\|_2.
\end{eqnarray*}
Moreover, since the middle term of energy  \eqref{our-model-general} is quadratic with respect to
$g$, hence energy \eqref{our-model-general} is strictly convex, therefore it has unique minimizer $g$
for fixed $c_i$ and $u_i$. 
\end{proof}

\subsection*{Proof of Theorem \ref{AM-mono}}
\begin{proof}
From \eqref{AM-ite}, it can be verified easily that
\begin{eqnarray*}
E(x^{(k+1)}, y^{(k+1)}, z^{(k+1)}) &\le & E(x^{(k)}, y^{(k+1)}, z^{(k+1)})   \\
& \le & E(x^{(k)}, y^{(k)}, z^{(k+1)}) \\
& \le & E(x^{(k)}, y^{(k)}, z^{(k)}) \\
& \le & E(x^{(k-1)}, y^{(k)}, z^{(k)}) \\
& \le & E(x^{(k-1)}, y^{(k-1)}, z^{(k)}).
\end{eqnarray*}
Hence, since $E(\cdot, \cdot, \cdot)$ is bounded from below, the sequence 
$\big \{E(x^{(k)}, y^{(k)}, z^{(k)}) \big \}_{k \in \mathbb N}$ converges monotonically.
\end{proof}

\subsection{Proof of Theorem \ref{AM-cong}}
\begin{proof}
Using \eqref{AM-ite} and the idea of \cite[Theorem 5.5]{CS}, for each $i$
\[
E(x^{(k_i)}, y^{(k_i)}, z^{(k_i)}) \leq E(x, y^{(k_i)}, z^{(k_i)}) \quad \forall x \in X.
\]
By the continuity of $E(\cdot, \cdot, \cdot)$, this gives, as $i\rightarrow \infty$,
\[
E(x^*, y^*, z^*) \leq E(x,y^*, z^*)  \quad  \forall x\in X.
\]

On the other hand, for each $i$, note that $k_{i-1} \leq k_i-1$.
From Theorem \ref{AM-mono}, we have for $\forall y\in Y$,
\begin{eqnarray*}
E(x^{(k_i)}, y^{(k_i)}, z^{(k_i)})   &\le & E(x^{(k_i-1)}, y^{(k_i)}, z^{(k_i)}) \\
& \le &  E(x^{(k_{i-1})}, y^{(k_{i-1}+1)}, z^{(k_{i-1}+1)}) \\
& \leq &  E(x^{(k_{i-1})}, y, z^{(k_{i-1}+1)}),
\end{eqnarray*}
and $\forall z\in Z$,
\begin{eqnarray*}
E(x^{(k_i)}, y^{(k_i)}, z^{(k_i)})   &\le & E(x^{(k_i-1)}, y^{(k_i)}, z^{(k_i)}) \\
&\le & E(x^{(k_i-1)}, y^{(k_i-1)}, z^{(k_i)}) \\
& \le &  E(x^{(k_{i-1})}, y^{(k_{i-1})}, z^{(k_{i-1}+1)}) \\
& \leq &  E(x^{(k_{i-1})}, y^{(k_{i-1})}, z).
\end{eqnarray*}

Coupled with the continuity of $E(\cdot, \cdot, \cdot)$, as $i\rightarrow \infty$, we have,
\[
E(x^*, y^*, z^*) \leq E(x^*, y, z^*) \quad {\rm and} \quad E(x^*, y^*, z^*) \leq E(x^*, y^*, z) 
\]
for $\forall y\in Y$ and $\forall z\in Z$ respectively.
\end{proof}

\subsection*{Proof of Theorem \ref{AM-cong-our}}
\begin{proof}
For model \eqref{our-model-multiphase-con}, because all of its three terms 
are continuous and nonnegative, we have $E(\cdot, \cdot, \cdot)$ is continuous and nonnegative. 
If $(u^{(k)}, g^{(k)}, c^{(k)}) \rightarrow (u^*, g^*, c^*)$, as $k\rightarrow \infty$,
using Theorem \ref{AM-cong}, we have $(u^*, g^*, c^*) \in {\cal O}$.
Obviously, the whole components of $u^{(k)}$ are in $[0,1]$, hence $u^{(k)}$ is bounded. 
From  \eqref{sol_c}, we get $c^{(k)}$ is bounded since it is just the convex combination of $g$.
From \eqref{sol_g}, we have
\begin{eqnarray*}
\|g\|_2 & \le&  \|(\mu {\cal A}^T{\cal A} + \lambda {\cal I})^{-1}\|_2 \| \big (\mu {\cal A}^T f 
+ \lambda \sum_{i=1}^K c_i u_i \big)\omega\|_2       \\
& \le & \big (\mu\|{\cal A}^T\|_2 \|f\|_2 
+ \lambda \sum_{i=1}^K c_i \|u_i\|_2 \big) \| \omega\|_2/\lambda,
\end{eqnarray*}
hence $g^{(k)}$  is also bounded. 
Therefore $(u^{(k)}, g^{(k)}, c^{(k)})_{k\in \mathbb N}$ must contain convergent 
subsequence. Using Theorem \ref{AM-cong}, we can get that any of these subsequences
converges to a partial minimizer of model \eqref{our-model-multiphase-con}.
\end{proof}


\bibliographystyle{elsarticle-num}
\bibliography{<your-bib-database>}

\end{document}